\newcommand{\True}{\mbox{{1}}}
\newcommand{\False}{\mbox{{0}}}
\newtheorem{mytheorem}{Theorem}
\newtheorem{mylemma}{Lemma}
\newtheorem{remark}{Remark}
\algnewcommand{\LineComment}[1]{\State \(\triangleright\) #1}
\newcommand{\xv}{\boldsymbol{x}}
\newcommand{\stS}{\mathcal{S}}
\newcommand\tab[1][1cm]{\hspace*{#1}}
\title{On the Convergence of Tsetlin Machines for the XOR Operator}
\author{Lei Jiao, Xuan Zhang,  Ole-Christoffer Granmo, and K. Darshana Abeyrathna}
\begin{document}

\maketitle

\begin{abstract} \label{abstract}
The Tsetlin Machine (TM) is a novel machine learning algorithm with several distinct properties,  including transparent inference and learning using hardware-near building blocks. Although numerous papers explore the TM empirically, many of its properties have not yet been analyzed mathematically. In this article, we analyze the convergence of the TM when input is non-linearly related to output by the XOR-operator. Our analysis reveals that the TM, with just two conjunctive clauses, can converge almost surely to reproducing XOR, learning from training data over an infinite time horizon. Furthermore, the analysis shows how the hyper-parameter $T$ guides clause construction so that the clauses capture the distinct sub-patterns in the data. Our analysis of convergence for XOR thus lays the foundation for analyzing other more complex logical expressions. These analyses altogether, from a mathematical perspective, provide new insights on why TMs have obtained state-of-the-art performance on several pattern recognition problems.
\end{abstract}

\section{Introduction}


The Tsetlin Machine (TM) \cite{granmo2018tsetlin} employs groups of Tsetlin Automata (TAs)  \cite{Tsetlin1961}, which operate on binary data using propositional logic. Via a game-theoretic collaboration scheme, the TAs self-organize to capture the distinct patterns in the data. In brief, each group of TAs builds a conjunctive clause that captures a specific pattern.   The dynamics of the collaboration involves three interacting mechanisms. High pattern recall is enforced by a resource allocation mechanism that diversifies clause construction. Simultaneously, a mechanism that forces the clauses to capture frequent patterns combats overfitting. Finally, without compromising high pattern frequency, the discrimination power of the clauses is optimized by injecting discriminative features.

TMs provide two main advantages: transparent inference and learning combined with hardware-near building blocks. TM transparency, which unravels the reasoning behind the decision making process, addresses one of the most critical challenges in Artificial Intelligence (AI) research -- lack of interpretability~\cite{ribeiro2016should}. In particular, deep learning-based approaches mainly employ post-processing for \emph{approximate} local interpretation of individual predictions, which do not guarantee model fidelity~\cite{Rudin2019}. TMs, on the other hand, is founded on conjunctive clauses in propositional logic, which have been postulated as particularly easy for humans to comprehend~\cite{valiant12}. TMs further facilitate derivation of closed formula expressions for both local and global interpretability, akin to SHAP \cite{Blakely2020}. Computationally, TMs can be realized via a set of finite-state automata --- the TAs --- which are well-suited for implementation in hardware, such as on FPGA \cite{wheeldon2020learning}. Different from the extensive arithmetic operations required by most other AI approaches, a TA learns using increment and decrement operations only \cite{Tsetlin1961}. Indeed, due to the robustness of TA learning and TM pattern representation, the TM paradigm is shown to be inherently fault-tolerant, completely masking stuck-at faults~\cite{shafik2020explainability}.

There are many variations of TMs, with two main architectures being the convolutional TM (CTM) \cite{granmo2019convolutional} and the regression TM (RTM) \cite{ abeyrathna2019nonlinear,abeyrathna2020integerregression}. These have been employed in several application domains, such as medical text analysis \cite{berge2019using}, aspect-based sentiment analysis~\cite{rohan2021AAAI}, disease outbreak forecasting \cite{abeyrathna2019scheme}, and other medical applications \cite{abeyrathna2020integer}. The above studies report that TMs, with smaller memory footprint and higher computational efficiency, obtain better or competitive classification and regression accuracy compared with most of the state-of-the-art AI techniques, while maintaining transparency. 
Although numerous papers explore the TM empirically, many of its properties have not yet been analyzed mathematically.
In \cite{zhang2020convergence}, convergence for unary operators on one-bit data, i.e., the IDENTITY- and the NOT operators, is analyzed. There, we first proved that the TM can converge almost surely to the intended pattern when the training data is noise-free. Thereafter, we analyzed the effect of noise, establishing how the noise probability of the data and the granularity parameter of the TM govern convergence \cite{zhang2020convergence}.




\textbf{Paper Contributions.} In this paper, we analyze the ``XOR'' case, which deals with the binary XOR operator, encompassing two critical sub-patterns. We start from a simple structure of two clauses, each of which has four TAs with only two states. For this structure, we prove convergence via discrete time Markov chain (DTMC) analysis, analysing the ability of TMs to learn the XOR operator from data. Thereafter, we investigate the convergence behavior for more than two clauses. From the latter analysis, we reveal the crucial role the hyper-parameter $T$ of TMs plays, showing how this parameter controls the ability to robustly capture multiple sub-patterns within one class, through allocating sparse pattern representation resources (the clauses). 

\textbf{Paper Organization.} The remaining of the paper is organized as follows. Section \ref{Review} briefly reviews the TM and specifies the training process for XOR. In Section~\ref{XORstudy}, we present our analytical procedure and the main analytical results. We conclude the paper in Section \ref{conclusions}.

\section{Review of the Tsetlin Machine}\label{Review}

In this section, we present the TM in brief, including an overview of TA, the TM architecture, and the training process of TMs. A more comprehensive exposition can be found in \cite{granmo2018tsetlin}. 

\subsection{Tsetlin Automata (TA)}\label{TA}
A TA is a fixed structure deterministic learning automaton \cite{Narendra1989LearningIntroduction,zhang2019conclusive}, forming a crucial component of TM learning. By interacting with the environment, a TA aims to learn the action that offers the highest probability of providing a reward~\cite{Tsetlin1961}. Figure~\ref{figure:TAarchitecture_basic_2n} illustrates a two-action TA with $2N$ states, where $N\in[1, +\infty)$. Which action a TA selects is decided by its current state, which triggers a response from the environment followed by the TM making a state transition. That is, when the TA is in states $1$ to $N$, i.e., on the left-hand side of the state-space shown in Figure~\ref{figure:TAarchitecture_basic_2n}, Action 1 is chosen. If the TA on the other hand finds itself in states $N+1$ to $2N$, i.e., on the right-hand side, Action 2 is chosen. 
Once an action is chosen, the environment responds with either a reward or a penalty. When the TA receives a penalty, it will move towards the opposite half of the state space, that is, towards the other action. This transition is marked by the solid arrows in Figure~\ref{figure:TAarchitecture_basic_2n}. Conversely, if the TA receives a reward, it will switch to a ``deeper'' state by transitioning to the left or the right end of the chain, depending on whether the current action is Action 1 or Action 2. In the figure, this transition is captured by the dashed arrows. Note that the number of states in a TA, i.e., $2N$, can be adjusted. The larger the number, the slower the convergence. However, the TA learns more accurately in a stochastic environment with a larger number of states. 

\begin{figure}
\centering
\resizebox{0.92\textwidth}{!}{
\begin{minipage}{1\textwidth}
\begin{tikzpicture}[node distance = .35cm, font=\Huge]
    \tikzstyle{every node}=[scale=0.35]
    \node[state] (A) at (0,2) {~~~1~~~~};
    \node[state] (B) at (1.5,2) {~~~2~~~~};
    
    \node[state,draw=white] (M) at (3,2) {~~~$....$~~~};
    
    \node[state] (C) at (4.5,2) {$N-1$};
    \node[state] (D) at (6,2) {~~~\!$N$~~~};
    
    \node[state] (E) at (7.5,2) {$N+1$};
    \node[state] (F) at (9,2) {$N+2$};
    
    \node[state,draw=white] (G) at (10.5,2) {~~~$....$~~~};
    
    \node[state] (H) at (12,2) {\!$2N-1$};
    \node[state] (I) at (13.5,2) {~~\!$2N$~~~~};

    \node[thick] at (4,4) {$Action~1$};
    \node[thick] at (9.5,4) {$Action~2$};
    
    \node[thick] at (1.2,1) {$Reward~(R):~\dashrightarrow$~~~~$Penalty~(P):~\rightarrow$};

    \draw[every loop]
    (A) edge[bend left] node [scale=1.2, above=0.1 of B]{} (B)
    (B) edge[bend left] node  [scale=1.2, above=0.1 of M] {} (M)
    (M) edge[bend left] node  [scale=1.2, above=0.1 of C] {} (C)
    (C) edge[bend left] node [scale=1.2, above=0.1 of D] {} (D)
    (D) edge[bend left] node  [scale=1.2, above=0.1 of E] {} (E);

    \draw[every loop]
    (I) edge[bend left] node [scale=1.2, below=0.1 of H] {} (H)
    (H) edge[bend left] node  [scale=1.2, below=0.1 of G] {} (G)
    (G) edge[bend left] node [scale=1.2, below=0.1 of F] {} (F)
    (F) edge[bend left] node  [scale=1.2, below=0.1 of E] {} (E)
    (E) edge[bend left] node  [scale=1.2, below=0.1 of D] {} (D);

    
    \draw[dashed,->]
    (B) edge[bend left] node  [scale=1.2, above=0.1 of A] {} (A)
    (M) edge[bend left] node [scale=1.2, above=0.1 of B] {} (B)
    (C) edge[bend left] node [scale=1.2, above=0.1 of M] {} (M)
    (D) edge[bend left] node [scale=1.2, above=0.1 of C] {} (C)
    (A) edge[loop left] node [scale=1.2, below=0.1 of D] {} (D);
    
    \draw[dashed,->]

    (H) edge[bend left ] node [scale=1.2, below=0.1 of I] {} (I)
    (G) edge[bend left] node  [scale=1.2, below=0.1 of H] {} (H)
    (F) edge[bend left] node  [scale=1.2, below=0.1 of G] {} (G)
    (E) edge[bend left ] node [scale=1.2, below=0.1 of F] {} (F)
    (I) edge[loop right] node [scale=1.2, below=0.1 of E] {} (E);
    
      \draw[dotted, thick] (6.75,0.6) -- (6.75,3);

\end{tikzpicture}
\end{minipage}
}
\caption{A two-action Tsetlin Automaton with $2N$ states.}
\label{figure:TAarchitecture_basic_2n}
\end{figure}
\begin{figure}[htbp]
\begin{center}
\begin{minipage}{1\textwidth}
\resizebox{1\textwidth}{!}{
\begin{tikzpicture}[node distance = .35cm]

    \node[label=left:{\bf Inputs}] at (0.2,1.5) {};
    \node[label=right:{\bf Literals}] at (0.5,1.5) {};
    \node[label=right:{\bf TA team}] at (2.8,1.5) {};
    \node[label=right:{\bf TA decisions}] at (5.6,1.5) {};
    \node[label=right:{\bf Output}] at (11.3,1.5) {};

    \node[label=left:$x_1$] at (0,0) {};
    \node[label=left:$x_2$] at (0,-2) {};
    \node[label=left:$x_o$] at (0,-5) {};
    
    \node[label=right:$x_1$] at (1,0.5) {};
    \node[label=right:$\neg x_1$] at (1,-0.5) {};
    \node[label=right:$x_2$] at (1,-1.5) {};
    \node[label=right:$\neg x_2$] at (1,-2.5) {};
    \node[label=right:$x_o$] at (1,-4.5) {};
    \node[label=right:$\neg x_o$] at (1,-5.5) {};
    
    \node[label=right:$\mathrm{TA}_1^{i,j}$] at (3,0.5) {};
    \node[label=right:$\mathrm{TA}_2^{i,j}$] at (3,-0.5) {};
    \node[label=right:$\mathrm{TA}_3^{i,j}$] at (3,-1.5) {};
    \node[label=right:$\mathrm{TA}_4^{i,j}$] at (3,-2.5) {};
    \node[label=right:$\mathrm{TA}_{2o-1}^{i,j}$] at (3,-4.5) {};
    \node[label=right:$\mathrm{TA}_{2o}^{i,j}$] at (3,-5.5) {};
    \draw (3.1, 1) -- (4.6, 1) -- (4.6, -6) -- (3.1, -6) -- (3.1, 1); 
    
    \node[label=right:$I(x_1)~\text{or}~ E(x_1)$] at (5.7,0.5) {};
    \node[label=right:$I(\neg x_1)~\text{or}~ E(\neg x_1)$] at (5.7,-0.5) {};
    \node[label=right:$I(x_2)~\text{or}~ E(x_2)$] at (5.7,-1.5) {};
    \node[label=right:$I(\neg x_2)~\text{or}~ E(\neg x_2)$] at (5.7,-2.5) {};
    \node[label=right:$I(x_o)~\text{or}~ E(x_o)$] at (5.7,-4.5) {};
    \node[label=right:$I(\neg x_o)~\text{or}~ E(\neg x_o)$] at (5.7,-5.5) {};
    
    \node[label=right:{$C^i_j= \bigwedge\limits_{k'=1}^{2o}$ (decision of $\mathrm{TA}_{k'}^{i,j}$)}] at (11.4,-2.5) {};
    
    \draw [-{stealth[length=4mm]}] (0,0) -- (1,0.5);
    \draw [-{stealth[length=4mm]}] (0,0) -- (1,-0.5);
    \draw [-{stealth[length=4mm]}] (0,-2) -- (1,-1.5);
    \draw [-{stealth[length=4mm]}] (0,-2) -- (1,-2.5);
    \draw [-{stealth[length=4mm]}] (0,-5) -- (1,-4.5);
    \draw [-{stealth[length=4mm]}] (0,-5) -- (1,-5.5);
    
    \draw [-{stealth[length=4mm]}] (2,0.5) -- (3, 0.5);
    \draw [-{stealth[length=4mm]}] (2,-0.5) -- (3, -0.5);
    \draw [-{stealth[length=4mm]}] (2,-1.5) -- (3, -1.5);
    \draw [-{stealth[length=4mm]}] (2,-2.5) -- (3, -2.5);
    \draw [-{stealth[length=4mm]}] (2.4,-4.5) -- (3, -4.5);
    \draw [-{stealth[length=4mm]}] (2.4,-5.5) -- (3, -5.5);
    
    \draw [-{stealth[length=4mm]}] (4.7,0.5) -- (5.7, 0.5);
    \draw [-{stealth[length=4mm]}] (4.7,-0.5) -- (5.7, -0.5);
    \draw [-{stealth[length=4mm]}] (4.7,-1.5) -- (5.7, -1.5);
    \draw [-{stealth[length=4mm]}] (4.7,-2.5) -- (5.7, -2.5);
    \draw [-{stealth[length=4mm]}] (4.7,-4.5) -- (5.7, -4.5);
    \draw [-{stealth[length=4mm]}] (4.7,-5.5) -- (5.7, -5.5);
    
    \draw [-{stealth[length=4mm]}] (9.8,0.5) -- (11.6, -2.5);
    \draw [-{stealth[length=4mm]}] (9.8,-0.5) -- (11.6, -2.5);
    \draw [-{stealth[length=4mm]}] (9.8,-1.5) -- (11.6, -2.5);
    \draw [-{stealth[length=4mm]}] (9.8,-2.5) -- (11.6, -2.5);
    \draw [-{stealth[length=4mm]}] (9.8,-4.5) -- (11.6, -2.5);
    \draw [-{stealth[length=4mm]}] (9.8,-5.5) -- (11.6, -2.5);

    \draw
    [dotted] (-0.5,-3.25) -- (-0.5,-3.75)
    [dotted] (1.5,-3.25) -- (1.5,-3.75)
    [dotted] (3.8,-3.25) -- (3.8,-3.75)
    [dotted] (7,-3.25) -- (7,-3.75)
    [dotted] (10.1,-3.25) -- (10.1,-3.75);
    
\end{tikzpicture}
}

\end{minipage}
\end{center}
\caption{\label{fig:tateam} A TA team $G^i_j$ consisting of $2o$ TAs \cite{zhang2020convergence}. Here $I(x_1)$ means ``Include $x_1$'' and $E(x_1)$ means ``Exclude $x_1$''.  }
\end{figure}
\begin{figure}[htbp]
\begin{center}
\begin{minipage}{0.66\textwidth}
\begin{tikzpicture}[node distance = .35cm]

\node[label=left:TA team $1~~~~~~$] at (0,0) {};
\node[label=left:TA team $2~~~~~~$] at (0,-1) {};
\node[label=left:TA team $m-1$] at (0,-3) {};
\node[label=left:TA team $m~~~~$] at (0,-4) {};
\draw (-3, 0.4) -- (-3, -0.4) -- (0, -0.4) -- (0, 0.4) -- (-3, 0.4);
\draw (-3, -0.6) -- (-3, -1.4) -- (0, -1.4) -- (0, -0.6) -- (-3, -0.6);
\draw (-3, -2.6) -- (-3, -3.4) -- (0, -3.4) -- (0, -2.6) -- (-3, -2.6);
\draw (-3, -3.6) -- (-3, -4.4) -- (0, -4.4) -- (0, -3.6) -- (-3, -3.6);

\node[label=right: $C^i_1$] at (1,0) {};
\node[label=right: $C^i_2$] at (1,-1) {};
\node[label=right: $C^i_{m-1}$] at (1,-3) {};
\node[label=right: $C^i_m$] at (1,-4) {};
\node[label=right:$+$] at (2,0) {};
\node[label=right:$+$] at (2,-1) {};
\node[label=right:$+$] at (2,-3) {};
\node[label=right:$+$] at (2,-4) {};

\node[label=right: $\sum\limits_{j=1}^{m} C^i_j$] at (4.5,-2) {};
    
\draw [-{stealth[length=4mm]}] (0.1,0) -- (1,0);
\draw [-{stealth[length=4mm]}] (0.1,-1) -- (1,-1);
\draw [-{stealth[length=4mm]}] (0.1,-3) -- (1,-3);
\draw [-{stealth[length=4mm]}] (0.1,-4) -- (1,-4);

\draw [-{stealth[length=4mm]}] (2.6,0) -- (4.5, -2);
\draw [-{stealth[length=4mm]}] (2.6,-1) -- (4.5, -2);
\draw [-{stealth[length=4mm]}] (2.6,-3) -- (4.5, -2);
\draw [-{stealth[length=4mm]}] (2.6,-4) -- (4.5, -2);
    
\draw [dotted] (-2.5,-1.8) -- (-2.5,-2.2);
\draw [dotted] (1.5,-1.8) -- (1.5,-2.2);
\draw [dotted] (3,-1.8) -- (3,-2.2);

\end{tikzpicture}

\end{minipage}
\end{center}
\caption{\label{fig:TMVoting} TM voting architecture.}
\end{figure}

\subsection{Tsetlin Machines (TMs)}\label{sect:TM}
A TM is formed by $m$ teams of TAs. 
The TAs operate on binary input and employs propositional logic to represent patterns. In general, the input of a TM can be represented by $\bold{X}=[x_1, x_2, \ldots, x_o]$, with $x_k \in \{0, 1\}, k=1, 2, \ldots, o$. Each TA team contains $o$ pairs of TAs, with each pair being responsible for a certain input variable $x_k$. Figure \ref{fig:tateam} shows such a TA team $\mathcal{G}^i_j=\{\mathrm{TA}^{i,j}_{k'}|1\leq k'\leq 2o\}$ that has $2o$ TAs. The index $i$ refers to a specific pattern class and $j$ is the index of a specific clause. The automaton $\mathrm{TA}^{i,j}_{2k-1}$ returns the input $x_k$ as is, whereas $\mathrm{TA}^{i,j}_{2k}$ addresses the negation of $x_k$, i.e., $\neg x_k$. Note that the inputs and their negations are jointly referred to as literals.

Each TA chooses one of two actions, i.e., it either ``Includes'' or ``Excludes'' its literal, outputting $I(\cdot)$ and $E(\cdot)$, respectively.  Let $I(x)=x, ~I(\neg x)=\neg x$, and $E(\cdot)=1$, with the latter meaning that an excluded literal does not contribute to the output. 
Collectively, the $I(\cdot)$/$E(\cdot)$-outputs of the TA team then take part in a conjunction, expressed by the conjunctive clause \cite{zhang2020convergence}:
\begin{equation}
\label{eqn:clause1}
C^i_j(\bold{X}) = \begin{cases}
\left(\bigwedge\limits_{k \in I^i_j} {x_k}\right) \wedge \left(\bigwedge\limits_{k \in \bar{I}^i_j} {\neg x_k}\right) \wedge 1 & \mathrm{During\ training},\\
\left(\left(\bigwedge\limits_{k \in I^i_j} {x_k}\right) \wedge \left(\bigwedge\limits_{k \in \bar{I}^i_j} {\neg x_k}\right)\right) \vee 0 & \mathrm{During\ testing}.
\end{cases}
\end{equation}
In Eq. (\ref{eqn:clause1}), $I^i_j$ and $\bar{I}^i_j$ are the subsets of indexes for the literals that have been included in the clause. $I^i_j$ contains the indexes of included non-negated inputs, $x_k$, whereas $\bar{I}^i_j$ contains the indexes of included negated inputs, $\neg x_k$. The ``0'' and ``1'' in Eq. (\ref{eqn:clause1}) make sure that $C^i_j(\bold{X})$ also is defined when all the TAs choose to exclude their literals. As can be observed, during training, an ``empty'' clause outputs $1$, while it outputs $0$ during testing (operation).


Multiple TA teams, i.e., clauses, are finally assembled into a complete TM. There are two architectures for clause assembling: Disjunctive Normal Form Architecture and Voting Architecture. In this study, we focus on the latter one, 
as shown in Figure \ref{fig:TMVoting}. For this architecture, the voting consists of summing the output of the clauses:
\begin{equation}
\label{eqn:summation}
f_{\sum}(\mathcal{C}^i(\bold{X}))= \sum^m\limits_{j =1} C_j^i(\bold{X}).
\end{equation}
The output of the TM, in turn, is decided by the unit step function:
\begin{align}
\label{eqn:yivoting}
\hat{y}^i={\begin{cases}\False&{\text{for }}f_{\sum}(\mathcal{C}^i(\bold{X}))<Th\\\True&{\text{for }}f_{\sum}(\mathcal{C}^i(\bold{X}))\geq Th\end{cases}},
\end{align} 
where $Th$ is a predefined threshold for classification. Note that for this architecture, the TM can assign a polarity to each TA team \cite{granmo2018tsetlin}. For example, TA teams with odd indexes get positive polarity, and they vote for class $i$. The remaining TA teams get negative polarity and vote against class $i$. The voting consists of summing the output of the clauses, according to polarity, and the threshold $Th$ is configured as zero. In this study, for ease of analysis, we consider only positive polarity clauses. Nevertheless, this does not change the nature of TM learning (negative polarity clauses simply ``invert" the feedback given to them).


\subsection{The Tsetlin Machine Game for Learning Patterns}\label{sect:TMTraining}
\subsubsection{The Tsetlin Machine Game}\label{sect:TMGame}
The TM trains the TA teams, associated with the clauses, to make the clauses $C^i_j,~j=1,2,...,m$, capture the sub-patterns that characterize the class $i$. 
Data $(\bold{X}=[x_1,x_2,...,x_o],~y^i)$ for training is obtained from a dataset $\mathcal{S}$, distributed according to the probability distribution $P(\bold{X}, y^i)$. The training process is built on letting all the TAs take part in a decentralized game. In the game, each TA is guided by Type I Feedback and Type II Feedback defined in Table \ref{table:type_i_feedback} and Table \ref{table:type_ii_feedback}, respectively. Type I Feedback is triggered when the training sample has a positive label, i.e., $y^i=1$, meaning that the sample belongs to class $i$. When the training sample is labeled as not belonging to class $i$, i.e., $y^i=0$, Type II Feedback is utilized for generating responses. These two types of feedback are designed to reinforce true positive output, i.e., $(\hat{y}^i=1, y^i=1)$ and true negative output, i.e., $(\hat{y}^i=0, y^i=0)$. Simultaneously, they suppress false positive, i.e., $(\hat{y}^i=1, y^i=0)$, and false negative output, i.e., $(\hat{y}^i=0, y^i=1)$.

The formation of patterns is founded on frequent pattern mining. That is, a parameter $s$ controls the granularity of the clauses. A larger $s$ allows more literals to be included in each clause, making the corresponding sub-patterns more fine-grained. A more detailed analysis on parameter $s$ can be found in \cite{zhang2020convergence}.

\begin{table}[h!]
\centering
\begin{tabular}{c|ccccc}
\multicolumn{2}{r|}{{\it Value of the clause} $C^i_j(\bold{X})$ }&\multicolumn{2}{c}{\True}&\multicolumn{2}{c}{\False}\\ 
\multicolumn{2}{r|}{{\it Value of the Literal} $x_k$/$\lnot x_k$}&{\True}&{\False}&{\True}&{\False}\\
\hline
\hline
\multirow{3}{*}{TA Action: \bf Include Literal}&\multicolumn{1}{c|}{$P(\mathrm{Reward})$}&$\frac{s-1}{s}$&NA&$0$&$0$\\
&\multicolumn{1}{c|}{$P(\mathrm{Inaction})$}&$\frac{1}{s}$&NA&$\frac{s-1}{s}$&$\frac{s-1}{s}$\\
&\multicolumn{1}{c|}{$P(\mathrm{Penalty})$}&$0$&NA&$\frac{1}{s} $&$\frac{1}{s}$\\
\hline
\multirow{3}{*}{TA Action: \bf Exclude Literal }&\multicolumn{1}{c|}{$P(\mathrm{Reward})$}&$0$&$\frac{1}{s}$&$\frac{1}{s}$ &$\frac{1}{s}$\\
&\multicolumn{1}{c|}{$P(\mathrm{Inaction})$}&$\frac{1}{s}$&$\frac{s-1}{s}$&$\frac{s-1}{s}$ &$\frac{s-1}{s}$\\
&\multicolumn{1}{c|}{$P(\mathrm{Penalty})$}&$\frac{s-1}{s}$&$0$&$0$&$0$\\
\hline
\end{tabular}
\caption{Type I Feedback --- Feedback upon receiving a sample with label $y=1$, for a single TA to decide whether to Include or Exclude a given literal $x_k/\neg x_k$ into $C^i_j$. NA means not applicable \cite{granmo2018tsetlin}.}
\label{table:type_i_feedback}
\end{table}

\begin{table}[h!]
\centering
\begin{tabular}{c|ccccc}
\multicolumn{2}{r|}{\it Value of the clause $C^i_j(\bold{X})$}&\multicolumn{2}{c}{\True}&\multicolumn{2}{c}{\False}\\ 
\multicolumn{2}{r|}{\it Value of the Literal $x_k/\neg x_k$}&{\True}&{\False}&{\True}&{\False}\\
\hline
\hline
\multirow{3}{*}{TA Action: \bf Include Literal }&\multicolumn{1}{c|}{$P(\mathrm{Reward})$}&$0$&$\mathrm{NA}$&$0$&$0$\\
&\multicolumn{1}{c|}{$P(\mathrm{Inaction})$}&$1.0$&$\mathrm{NA}$&$1.0$&$1.0$\\
&\multicolumn{1}{c|}{$P(\mathrm{Penalty})$}&$0$&$\mathrm{NA}$&$0$&$0$\\
\hline
\multirow{3}{*}{TA Action: \bf Exclude Literal }&\multicolumn{1}{c|}{$P(\mathrm{Reward})$}&$0$&$0$&$0$&$0$\\
&\multicolumn{1}{c|}{$P(\mathrm{Inaction})$}&$1.0$&$0$&$1.0$ &$1.0$\\
&\multicolumn{1}{c|}{$P(\mathrm{Penalty})$}&$0$&$1.0$&$0$&$0$\\
\hline
\end{tabular}
\caption{Type II Feedback --- Feedback upon receiving a sample with label $y=0$, for a single TA to decide whether to Include or Exclude a given literal $x_k/\neg x_k$ into $C^i_j$ \cite{granmo2018tsetlin}.}
\label{table:type_ii_feedback}
\end{table}

To avoid the situation that a majority of the TA teams single in on only a subset of the patterns in the training data, forming an incomplete representation, we use a parameter $T$ as target for the summation $f_{\sum}$. If the votes for a certain sub-pattern accumulate to a total of $T$ or more, neither rewards or penalties are provided to the TAs when more training samples of this sub-pattern are given. In this way, we can ensure that only a few of the available clauses are utilized to capture each specific sub-pattern. In more details, the strategy works in the manner below:   

{\bf Generating Type I Feedback.} If the output from the training sample is $y^i=\True$, we generate \emph{Type I Feedback} for each clause $C^i_j \in \mathcal{C}^i$, where $\mathcal{C}^i$ is the set of clauses that are trained for pattern $i$, however, not every time. Instead, the decision to give feedback to a specific clause is random, according to a feedback probability. The probability of generating Type I Feedback is \cite{granmo2018tsetlin}:
\begin{equation}
u_1=\frac{T - \mathrm{max}(-T, \mathrm{min}(T, f_{\sum}(\mathcal{C}_i)))}{2T}.\label{u1}
\end{equation}


{\bf Generating Type II Feedback.} 
If the output of the training sample is $y^i = \False$, we generate \emph{Type II Feedback} to each clause $C^i_j \in \mathcal{C}^i$, again randomly. The probability of generating Type II Feedback is \cite{granmo2018tsetlin}:
\begin{equation}
u_2=\frac{T + \mathrm{max}(-T, \mathrm{min}(T, f_{\sum}(\mathcal{C}_i)))}{2T}.\label{u2}
\end{equation}


After Type I Feedback or Type II Feedback have been triggered for a clause, the individual TA within each clause is given reward/penalty/inaction according to the probability defined, and then the system is updated. 


\subsubsection{The Training Process in the XOR Case}
\label{sec:XORtraning}
We now introduce the special case of training TMs to capture XOR-patterns. We assume that the training samples shown in Table \ref{xorlogicfull} are provided without noise. In other words, we have $P(y=1|x_1=0, x_2=1)=1$, $P(y=1|x_1=1, x_2=0)=1$, $P(y=0|x_1=0, x_2=0)=1$, and $P(y=0|x_1=1, x_2=1)=1$. We also assume that $P(x_1=0, x_2=1)>0$, $P(x_1=1, x_2=1)>0$, $P(x_1=0, x_2=0)>0$, and $P(x_1=1, x_2=0)>0$. Clearly $P(x_1=0, x_2=1)$+$P(x_1=1, x_2=1)$+$P(x_1=0, x_2=0)$+$P(x_1=1, x_2=0)$ $=1$.
This guarantees that all types of possible input-output pairs will appear in the training samples. The aim is to show that after training, the TM can output $1$ for inputs $x_1=1, x_2=0$ or $x_1=0, x_2=1$, and $0$ otherwise. 

\begin{table}
\centering
\begin{tabular}{ |c|c|c| } 
\hline
$x_1$ & $x_2$ & y \\ 
0 & 0 & 0 \\ 
1 & 1 & 0 \\ 
0 & 1 & 1 \\
1& 0&1\\
\hline
\end{tabular}
\caption{The ``XOR'' logic.}
\label{xorlogicfull}
\end{table}

The above XOR-scenario leads to the following TM training process, described step-by-step:

\begin{enumerate} 
\item We initialize the TAs by assigning each of them a random state among the states associated with action Exclude.

\item \label{step:Clause} We obtain a new training sample $(x_1, x_2, y)$ and calculate the value of each single clause $C^i_j$ according to Eq. (\ref{eqn:clause1}).

\item The TA states for each clause are updated based on: (i) the label $y$; (ii) the clause value $C^i_j$; (iii) the value of each individual literal ($x_1$, $\neg x_1$, $x_2$, $\neg x_2$,); and (iv) the sum of the clause outputs for the class $i$, $f_{\sum}(\mathcal{C}^i(\bold{X}))$. Finally,  for each clause, the the states of the associated TAs are updated according to Table \ref{table:type_i_feedback} with  probability $u_1$ when $y=1$. If $y=0$, the TAs are updated according to Table \ref{table:type_ii_feedback}, with probability $u_2$. 


\item Repeat from Step \ref{step:Clause} until a given stopping criteria is met.

\end{enumerate}
Note that in the XOR case, there is only one class to be learnt, which is the XOR-relation. We therefore ignore the class index, i.e., $i$, in notation $C^i_j$ and $\mathrm{TA}^{i,j}_{2k}$ in the remainder of the paper.

\section{Proof of the Convergence for the XOR Operator}\label{XORstudy}

The XOR-relation is nonlinear and the inputs for the two to-be-learnt sub-patterns ($x_1=1, x_2=0$ or $x_1=0, x_2=1$) are the bit-wise inversion of the other. Therefore, the XOR-relation is challenging or even impossible to learn for many machine learning algorithms. In what follows, we will reveal, step-by-step, the convergence property of the TM for the XOR-relation. First, in Subsection \ref{simplecase}, we start from a special and simple case to show that there exists a TM configuration that can learn the XOR-relation. This establishes that TMs have the ability to learn such a relation. Thereafter, in Subsection \ref{complicated}, we analyze how a general TM can learn the XOR-relation, including the criteria for learning. Through these analyses, the dynamics of the learning process of the TAs, operating within the TM clauses, are elaborated. In particular, we investigate the self-organizing collaboration that happens among the clauses, to cast light on how a TM learns multiple patterns.

\subsection{The Simplest Structure for the XOR-relation}\label{simplecase}
\begin{mytheorem}
There exists a TM structure that can converge almost surely to the XOR-relation under an infinite time horizon.\label{theorem1}
\end{mytheorem}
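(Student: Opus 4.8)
The plan is to exhibit one concrete TM and prove almost-sure convergence for it through a finite discrete-time Markov chain (DTMC). I fix $m=2$ clauses, set $N=1$ so that every TA has only two states (hence a binary action, Include or Exclude), take the classification threshold $Th=1$, the target $T=1$, and any finite $s>1$. Dropping the class index as prescribed, write the two clauses as $C_1$ and $C_2$. The target of the argument is the pair of configurations in which one clause realizes $x_1\wedge\neg x_2$ and the other realizes $\neg x_1\wedge x_2$ (together with the configuration obtained by swapping the two clauses). For such a configuration the voting sum $f_{\sum}=C_1+C_2$ equals $1$ on the inputs $(1,0)$ and $(0,1)$ and equals $0$ on $(0,0)$ and $(1,1)$, so by the unit-step rule (\ref{eqn:yivoting}) with $Th=1$ the TM outputs exactly the XOR relation. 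It therefore suffices to prove that the TA actions converge almost surely to one of these two target configurations.

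First I would cast the learning dynamics as a DTMC. Since each of the $2\cdot 4=8$ TAs has two states, the joint action profile lives in the finite set $\{\mathrm{Include},\mathrm{Exclude}\}^{8}$. Given a profile, drawing a sample from $P(\mathbf{X},y)$ (which by assumption puts positive mass on each of the four XOR rows), computing $f_{\sum}$, drawing the feedback indicator with probability $u_1$ or $u_2$, and applying Tables~\ref{table:type_i_feedback}--\ref{table:type_ii_feedback} induces a one-step transition kernel on this finite space. Almost-sure convergence then reduces to two finite verifications: (i) the target set is closed (absorbing); and (ii) every other profile is transient with the target set reachable from it, so that the target set is the unique closed communicating class.

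For the absorption step I would verify that in a target configuration no sample can flip any TA's action. Two observations suffice. On either positive sample exactly one clause matches and the other is false, so $f_{\sum}=1=T$ and Eq.~(\ref{u1}) gives $u_1=0$: no Type~I feedback is issued to either clause. On either negative sample both clauses are false, and the clause-False columns of Table~\ref{table:type_ii_feedback} assign Inaction to every TA whatever the value of $u_2$. Hence no sample issues action-changing feedback, and since in a two-state TA only a Penalty changes the action, every target configuration is fixed and the target set is closed.

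The hard part is step (ii): excluding spurious closed profiles and proving reachability, where the role of $T$ becomes decisive. I would show that every non-target profile leaks probability toward the target by combining effects read off the tables. On a negative sample a clause that is wrongly true receives Type~II feedback, and the clause-True/Exclude/literal-False entry of Table~\ref{table:type_ii_feedback} forces a discriminating (negated) literal into the clause, specializing an over-general clause toward a correct conjunction. On a positive sample with $f_{\sum}<T$ (so $u_1>0$) a clause that is false receives Type~I feedback, and the clause-False/Include entries of Table~\ref{table:type_i_feedback} penalize its included literals, pruning an over-specific or contradictory clause back toward a matching conjunction. The decisive case is when both clauses collapse onto the same sub-pattern: the opposite sub-pattern's positive samples then give $f_{\sum}=0$, hence $u_1=\tfrac{1}{2}>0$ by Eq.~(\ref{u1}), so Type~I feedback keeps eroding the redundant clause until it re-specializes on the uncovered pattern, while a covered sub-pattern has $f_{\sum}=T$ and thus its feedback switched off, locking the correct clause in place. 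Establishing that these transitions join every profile to the target with positive probability, and that no incorrect profile is closed, is the crux of the proof, and is precisely where $T=1$ breaks the symmetry between the two sub-patterns and forbids both clauses from locking onto a single one. Granting (i) and (ii), finiteness of the chain forces absorption into the unique closed class with probability $1$, which is the asserted almost-sure convergence.
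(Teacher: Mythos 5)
Your overall framework is the same as the paper's: a finite DTMC on the $2^8=256$ joint action profiles of the eight two-state TAs, with the two symmetric target profiles ($C_1=x_1\wedge\neg x_2$, $C_2=\neg x_1\wedge x_2$ and its swap) as the intended absorbing states, and almost-sure convergence obtained from absorption into the unique closed class of a finite chain. Your step (i) is correct and is in fact a cleaner, fully analytical version of something the paper only confirms numerically: on a positive sample the matching target configuration gives $f_{\sum}=1=T$, hence $u_1=0$ by Eq.~(\ref{u1}), and on a negative sample both clauses evaluate to $0$ so every column of Table~\ref{table:type_ii_feedback} that applies yields Inaction; since a two-state TA changes action only on a Penalty, the target set is closed. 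This is a genuine (small) improvement in rigor over the paper's treatment of the absorbing states.

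The gap is in step (ii), which you yourself flag as ``the crux'' and then do not carry out. Showing that none of the remaining $254$ profiles belongs to a closed class, and that the target set is reachable from each of them with positive probability, is exactly the content that the paper supplies by explicitly constructing the $256\times 256$ transition matrix and computing its limiting power $\boldsymbol{P}^\infty$ with Algorithm~\ref{alg:transition} (Appendix~\ref{calculationDTMC}), then checking the three structural properties of the limit. Your qualitative mechanisms (Type~II feedback specializing an over-general clause, Type~I clause-False penalties pruning an over-specific one, and $T=1$ shutting off feedback for a covered sub-pattern while leaving $u_1=\tfrac12$ for the uncovered one) are the right intuitions, but they do not by themselves exclude a spurious closed class: one must actually verify, profile by profile or by an exhaustive case analysis of the kind the paper performs for Lemmas~\ref{full1}--\ref{full3}, that every non-target profile has an outgoing path to the target. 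As written, your argument proves that \emph{if} the chain is absorbed it lands in a correct configuration, but not that absorption into the target occurs with probability~$1$. To close the proposal you would either need to reproduce the paper's computational verification of the limiting matrix, or replace it with a complete combinatorial reachability argument over the $256$ states; neither is present.
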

\begin{proof}
To prove Theorem \ref{theorem1}, we use a TM with two clauses, $C_1$ and $C_2$. In $C_1$, there are four literals, i.e., $x_1$, $\neg x_1$, $x_2$, and $\neg x_2$, each of which corresponds to a TA, namely, $\mathrm{TA}_1^1$, $\mathrm{TA}_2^1$, $\mathrm{TA}_3^1$, and $\mathrm{TA}_4^1$. Similarly, in $C_2$, there are also four literals, i.e., $x_1$, $\neg x_1$, $x_2$, and $\neg x_2$, each of which corresponds to four other TA, namely, $\mathrm{TA}_1^2$, $\mathrm{TA}_2^2$, $\mathrm{TA}_3^2$, and $\mathrm{TA}_4^2$. Clearly, there are in total 8 TAs in the system. Considering the simplest structure for TA, we provide each TA with only two states, as shown in Figure \ref{figure:TAarchitecture_basic}. 

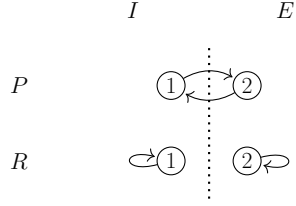
\begin{figure}[!h]
\centering
\begin{tikzpicture}[node distance = .35cm, font=\Huge]
\tikzstyle{every node}=[scale=0.35]
\node[state] (F) at (2,1) {1};
\node[state] (G) at (3,1) {2};

\node[state] (B) at (2,2) {1};
\node[state] (C) at (3,2) {2};
\node[thick] at (0,1) {$R$};
\node[thick] at (0,2) {$P$};
\node[thick] at (1.5,3) {$I$};
\node[thick] at (3.5,3) {$E$};
\draw[dotted, thick] (2.5,0.5) -- (2.5,2.5);
\draw[every loop]
(G) edge[loop right] node [scale=1.2, below=0.1 of G] {} (G);
\draw[every loop]
(B) edge[bend left] node [scale=1.2, above=0.1 of C] {} (C);
\draw[every loop]
(C) edge[bend left ] node [scale=1.2, below=0.1 of B] {} (B);
\draw[every loop]
(F) edge[loop left] node [scale=1.2, below=0.1 of F] {} (F);
\end{tikzpicture}
\caption{A simple TA with two states. In this figure, ``$P$'', ``$R$'', ``$I$'', and ``$E$'' means ``penalty", ``reward", ``include" and ``exclude" respectively. }
\label{figure:TAarchitecture_basic}
\end{figure}

The behavior of the above depicted TM can be modeled using a discrete time Markov chain (DTMC) with 8 elements, each of which represents the status of the corresponding TA. In more details, any state of the DTMC is represented by $\xv=(h_1, h_2, h_3, \ldots, h_8)$, where $h_i\in\{0,1\}$, $i\in\{1,\ldots,8\}$. Here, $h_1, h_2, h_3, \ldots, h_8$ correspond to $\mathrm{TA}_1^1$, $\mathrm{TA}_2^1$, $\mathrm{TA}_3^1$, $\mathrm{TA}_4^1$ $\mathrm{TA}_1^2$, $\mathrm{TA}_2^2$, $\mathrm{TA}_3^2$, and $\mathrm{TA}_4^2$. For example, $h_1$ represents the state for $\mathrm{TA}_1^1$, with state $0$ referring to ``Exclude'' and state $1$ referring to ``Include''. This is also how the other $h_i$ are organized. 
Clearly, the state space of the DTMC, $\stS$, includes $2^8=256$ states. If the system can capture the XOR-relation after training, the DTMC must have and only have two possible absorbing states, i.e., $(1,0,0,1,0,1,1,0)$ for $C_1=x_1 \wedge\neg x_2$ and $C_2=\neg x_1 \wedge x_2$, and $(0,1,1,0,1,0,0,1)$ for $C_1=\neg x_1 \wedge x_2$ and $C_2= x_1 \wedge \neg x_2$. Let us index the states from $(0,0,0,0,0,0,0,0)$ to $(1,1,1,1,1,1,1,1)$ as $1$ to $256$. Then the states  $(0,1,1,0,1,0,0,1)$ and $(1,0,0,1,0,1,1,0)$ correspond to the $106^{th}$ and the $151^{st}$ state, respectively. 


To determine whether the two states are absorbing, we can observe the transition matrix of the DTMC and see if there are any out going transitions from those two states. This can be easily checked and confirmed. To demonstrate that these two states are the only absorbing states, we also need to show that all the other states are recurrent. To demonstrate this point in a simple way, we calculate the limiting matrix of the DTMC. In more details, we first compose the transition matrix of the DTMC, $\boldsymbol{P}$, and then find the limiting matrix $\boldsymbol{A}=\boldsymbol{P}^\infty$. If the matrix $\boldsymbol{A}$ possesses the below properties, we can conclude that state $106$ and state $151$ are the only absorbing states. This, in turn, means that after infinite training samples, the system will learn the XOR-relation with probability 1. The properties are as follows:
\begin{itemize}
\item The transition probability from the $106^{th}$ state to the $106^{th}$ state is 1, and the same applies to the $151^{st}$ state. 
\item The transition probabilities from any state other than the two absorbing ones to the two absorbing ones sum to 1.
\item The transition probabilities from any state other than the two absorbing ones to a non-absorbing state are all zeros. 
\end{itemize}
The matrix $\boldsymbol{A}$ represents the probability of arriving at a destination state from any starting state after infinite time steps. 
The first bullet point shows that the $106^{th}$ and $151^{st}$ elements are indeed the absorbing states. This is because each of these states  returns to itself with probability 1. Similarly, the second and the third bullet points indicate that the other states are not absorbing states because starting from any other state, the system will end up in one of the absorbing states. 


In principle, $\boldsymbol{P}$ must be multiplied with itself an infinite number of times. In practice, however, we multiply $\boldsymbol{P}$ with itself a sufficiently large number of times, until the entries in $\boldsymbol{P}$ do not change. 

To validate the convergence, we use the hyper-parameters $s=10$ and $T=1$ as an example and use Algorithm \ref{alg:transition} in Appendix \ref{calculationDTMC} for the calculation\footnote{The Python code for the Algorithm can be obtained from \url{https://github.com/cair/TM-XOR-proof}.}. In this example, we assume the training samples (1,1,0), (1,0,1) (0,1,1) and (1,1,0) appear with the same probability, i.e., 25\% of the time each. From running the algorithm, we conclude that the $106^{th}$ and the $151^{st}$ are indeed the only absorbing states of the DTMC, which confirms that even the simplest configuration of the TM can converge almost surely to the XOR-relation. 
\end{proof}

\subsection{Structures with More Than Two TA States and/or More Than Two Clauses}\label{complicated}

Clearly, Theorem \ref {theorem1} confirms that TMs are capable of learning the XOR-relation. In the following, we study the cases where there are more than two clauses and/or more than two TA states. The purpose is to uncover how the XOR-relation is learnt by the TM in general. This also allows us to demonstrate the role that the $T$ hyper-parameter plays during learning. We look in particular at how the hyper-parameter governs reinforcement of the different clauses to learn the distinct sub-patterns associated with the XOR-relation. 

The flow of the analysis is given via the lemmas and theorem below:
\begin{mylemma} \label{full1} Any clause will converge almost surely to $\neg x_{1} \wedge x_{2}$ given the training samples indicated in Table \ref{xorlogichalf} in infinite time when $u_1>0$ and $u_2>0$. 
\end{mylemma}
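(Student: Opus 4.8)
The plan is to model the four TAs of a single clause as a finite discrete-time Markov chain on the joint state space $\stS'$ of their positions, and to reduce the claim to a standard absorption argument: identify the set of states whose \emph{action profile} is $(\text{Exclude}\ x_1,\ \text{Include}\ \neg x_1,\ \text{Include}\ x_2,\ \text{Exclude}\ \neg x_2)$ as the unique closed communicating class, show it is reachable from every state with positive probability, and conclude that the chain is absorbed into it almost surely. Treating $u_1,u_2$ as fixed positive constants (which is exactly the hypothesis of the lemma) is what makes this legitimate: it decouples the single-clause dynamics from the global voting, so that the only randomness is the sample draw (each of the four inputs occurring with positive probability per Table~\ref{xorlogichalf}) together with the reward/penalty/inaction draws from Tables~\ref{table:type_i_feedback} and~\ref{table:type_ii_feedback}. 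Note that on the half-XOR data of Table~\ref{xorlogichalf} the only positive sample is $(x_1{=}0,x_2{=}1)$, and $\neg x_1\wedge x_2$ is the unique monomial evaluating to $1$ on that row and to $0$ on the three negative rows, which is why we expect this target to be the sole absorbing configuration.

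First I would establish that the target action profile forms a closed set on which the clause is functionally constant, equal to $\neg x_1\wedge x_2$. On the positive sample $(0,1)$ the clause is True, so reading Type~I Feedback: the two included true literals ($\neg x_1,x_2$, both with literal value True) fall in the Include/Clause-True/Literal-True cell and receive only reward or inaction, while the two excluded false literals ($x_1,\neg x_2$, literal value False) fall in the Exclude/Clause-True/Literal-False cell and again receive only reward or inaction; no penalty entry occurs, so no TA can flip. On each of the three negative samples the target clause is False, and every cell of Type~II Feedback with Clause-False prescribes pure inaction, so again nothing moves. Hence the correct-action region is closed and the clause has already converged there, independently of how deep each TA sits on its side.

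Next I would prove reachability from an arbitrary state by exhibiting an explicit finite path of strictly positive probability, exploiting that (because $s>1$, $u_1>0$, $u_2>0$, and each sample has positive probability) every nonzero table entry gives a genuinely positive one-step probability and the four TAs update independently given the sample and clause value. The construction is sequential: while either false literal $x_1$ or $\neg x_2$ is still included, the positive sample forces the clause to False, so under Type~I the Include/Clause-False/Literal-False cell supplies a penalty ($1/s$) that pushes that literal one step toward Exclude, and I hold the other three TAs in place using the always-positive inaction entries; iterating drives $x_1$ and $\neg x_2$ fully to Exclude. Once both false literals are excluded, the clause on $(0,1)$ is True, so the Exclude/Clause-True/Literal-True cell penalizes any excluded true literal, letting me pull $\neg x_1$ and $x_2$ into Include (one step at a time, over the $2N$ states) while $x_1,\neg x_2$ only receive reward/inaction and remain excluded. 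This reaches the target action profile. Since the chain is finite, the target region is closed, and it is reachable from every state, every other state is transient and absorption into it — hence almost-sure convergence of the clause to $\neg x_1\wedge x_2$ — follows.

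The main obstacle is precisely the \emph{coupling}: because each TA's feedback cell is selected by the current clause value, which itself depends on the actions of the other three TAs, the drift directions are not fixed a priori, and a naive coordinate-wise argument fails. The delicate points are therefore the \emph{ordering} (the two false literals must be evicted before the clause can be made True to reel in the true literals) and verifying that at every configuration along the path the TAs I wish to freeze actually have a positive-probability inaction (or reward-in-place) entry, all compounded by the general $2N$-state setting that turns each coordinate move into a multi-step drift rather than a single flip.
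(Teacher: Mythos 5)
Your proposal is correct, but it takes a genuinely different route from the paper. The paper performs a quasi-stationary, case-by-case analysis: it freezes the two TAs of one input bit in each of their four joint action profiles, enumerates the transition direction of the other two TAs under every training sample from Table \ref{xorlogichalf}, and then repeats with the roles of the bit-pairs swapped; from the resulting tables of drift directions it identifies $(\mathrm{E},\mathrm{I},\mathrm{I},\mathrm{E})$, i.e.\ $\neg x_1\wedge x_2$, as the only action profile that is absorbing under a self-consistent freezing, discarding the all-Include candidates by arguing they cannot be sustained. You instead work directly on the joint chain of all four TAs: you verify from Tables \ref{table:type_i_feedback} and \ref{table:type_ii_feedback} that the target profile is a closed set (no penalty entry fires on any sample when the clause equals $\neg x_1\wedge x_2$), and you establish reachability from an arbitrary state by an explicit positive-probability path --- first evicting the false literals $x_1,\neg x_2$ via the Include/Clause-False penalty while holding the others with inaction, then, once the clause is True on $(0,1)$, pulling in $\neg x_1,x_2$ via the Exclude/Clause-True/Literal-True penalty --- and conclude by the standard absorption theorem for finite chains. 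Your argument is more self-contained on the almost-sure claim (the paper's step from ``only absorbing profile under freezing'' to global convergence is left informal, and your path construction also dispatches the all-Include and all-Exclude configurations without a separate discussion), and it handles the $2N$-state TAs explicitly; the paper's exhaustive enumeration, on the other hand, yields the local transition diagrams that it reuses for Lemma \ref{full3} and Appendix \ref{halfproof11}. One point to make explicit if you write this up: the path's per-step probabilities are bounded below uniformly over the finitely many states, which is what turns reachability into almost-sure absorption.
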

\begin{mylemma} \label{full2} Any clause will converge almost surely to $ x_{1} \wedge \neg x_{2}$ given the training samples indicated in Table \ref{xorlogichalf1} in infinite time when $u_1>0$ and $u_2>0$. 
\end{mylemma}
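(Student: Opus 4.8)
The plan is not to redo the Markov-chain analysis, but to obtain Lemma~\ref{full2} from Lemma~\ref{full1} by an input-relabeling symmetry. Let $\sigma$ denote the involution that swaps the two input variables, $x_1 \leftrightarrow x_2$, and correspondingly $\neg x_1 \leftrightarrow \neg x_2$. On the four automata of a single clause, $\sigma$ acts as the permutation that exchanges $\mathrm{TA}_1$ (for $x_1$) with $\mathrm{TA}_3$ (for $x_2$) and $\mathrm{TA}_2$ (for $\neg x_1$) with $\mathrm{TA}_4$ (for $\neg x_2$), i.e. it sends a clause state $(h_1,h_2,h_3,h_4)$ to $(h_3,h_4,h_1,h_2)$. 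Under $\sigma$ the target of Lemma~\ref{full1}, $\neg x_1 \wedge x_2$, becomes $\neg x_2 \wedge x_1 = x_1 \wedge \neg x_2$, the target of Lemma~\ref{full2}; and Table~\ref{xorlogichalf1} is exactly Table~\ref{xorlogichalf} with its $x_1$- and $x_2$-columns interchanged (the labels are unchanged, since $(0,1)\!\mapsto\!(1,0)$ carries the single positive sample to the single positive sample, while $(0,0)$ and $(1,1)$ are fixed and the remaining negative sample is permuted to another negative sample). Thus the training environment of Lemma~\ref{full2} is precisely the $\sigma$-image of that of Lemma~\ref{full1}.

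First I would check that $\sigma$ is a genuine symmetry of the full stochastic update, so that it induces an isomorphism of the two single-clause DTMCs. The decisive fact is that the Type~I and Type~II feedback tables (Tables~\ref{table:type_i_feedback} and~\ref{table:type_ii_feedback}) set each TA's reward/inaction/penalty probabilities as a function only of the current clause value and of the value of \emph{that} TA's own literal --- never of which literal it is. Since a conjunctive clause is a symmetric function of its included literals, the clause value is invariant when the inputs and the TA assignment are transformed by $\sigma$ together; likewise the feedback probabilities $u_1,u_2$ of Eqs.~(\ref{u1})--(\ref{u2}) depend only on the label and on $f_{\sum}$, both $\sigma$-invariant, and the hypothesis $u_1>0,\,u_2>0$ is preserved. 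Consequently, conjugating by the coordinate permutation $(h_1,h_2,h_3,h_4)\mapsto(h_3,h_4,h_1,h_2)$ carries the one-step transition kernel of the Lemma~\ref{full1} chain onto that of the Lemma~\ref{full2} chain.

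Granting this conjugacy, the conclusion is immediate: the limiting matrix of the second chain is the permutation-conjugate of the first, so absorbing configurations map to absorbing configurations and transient ones to transient ones. Hence the unique absorbing configuration of Lemma~\ref{full1} (include $\neg x_1$ and $x_2$, exclude $x_1$ and $\neg x_2$) is carried to the configuration that includes $x_1$ and $\neg x_2$ and excludes the other two literals, namely the clause $x_1 \wedge \neg x_2$, and the almost-sure convergence in infinite time transfers verbatim. The one genuinely laborious step is the verification in the previous paragraph: one must confirm entry by entry that simultaneously relabeling the inputs and the TA indices leaves every transition probability unchanged, i.e. that no asymmetry is hidden in how the clause value or $f_{\sum}$ feeds back into the individual updates. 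If one preferred to bypass the symmetry, the statement could instead be proved directly by rerunning the DTMC computation of Lemma~\ref{full1} on the distribution of Table~\ref{xorlogichalf1} and showing that ``include $x_1$, exclude $\neg x_1$, exclude $x_2$, include $\neg x_2$'' is the sole absorbing configuration; this is routine but substantially longer.
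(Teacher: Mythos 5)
Your proposal is correct, but it takes a genuinely different route from the paper. The paper does not actually write out a proof of Lemma~\ref{full2}: after the long quasi-stationary, case-by-case DTMC analysis for Lemma~\ref{full1}, it simply states that Lemma~\ref{full2} follows ``by the same strategy'' and omits the details for brevity --- i.e., the intended proof is a verbatim rerun of the four-case freezing analysis with Table~\ref{xorlogichalf1} in place of Table~\ref{xorlogichalf}. You instead formalize \emph{why} that rerun must succeed, by exhibiting the relabeling $x_1\leftrightarrow x_2$ (equivalently the state permutation $(h_1,h_2,h_3,h_4)\mapsto(h_3,h_4,h_1,h_2)$) as a conjugacy between the two single-clause chains. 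Your identification of the decisive facts is right: Tables~\ref{table:type_i_feedback} and~\ref{table:type_ii_feedback} condition only on the clause value and the TA's own literal value, never on the literal's index; the conjunction is symmetric in its included literals; and $u_1,u_2$ depend only on the label and $f_{\sum}$, so the hypothesis $u_1,u_2>0$ is preserved. Your reduction buys a genuine derivation of Lemma~\ref{full2} from Lemma~\ref{full1} rather than an appeal to an omitted computation, at the cost of the one verification you correctly flag (that no hidden asymmetry enters the update). Two minor points: strict conjugacy of the kernels also requires the sample probabilities to correspond under the swap --- though since absorbing-ness and almost-sure absorption in a finite chain depend only on which transitions have positive probability, only positivity of each sample's probability actually matters; and your remark that ``the remaining negative sample is permuted to another negative sample'' is slightly off, since both negative samples $(0,0)$ and $(1,1)$ are fixed points of the swap. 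Neither affects the validity of the argument.
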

\begin{mylemma} \label{full3}
The system for any clause is recurrent given the input and output pair indicated in Table \ref{xorlogicfull} for $u_1>0$ and $u_2>0$. 
\end{mylemma}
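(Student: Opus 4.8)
The plan is to model a single clause as a DTMC exactly as in the proof of Theorem~\ref{theorem1}, with the state recording the internal states of the four TAs governing $x_1$, $\neg x_1$, $x_2$, $\neg x_2$ (now allowing an arbitrary number $2N$ of states per TA rather than two). Because every input combination of Table~\ref{xorlogicfull} has positive probability, each round triggers Type~I feedback on the positive samples $(0,1)$ and $(1,0)$ and Type~II feedback on the negative samples $(0,0)$ and $(1,1)$; since $u_1>0$, $u_2>0$ and $s$ is finite, every penalty entry invoked below is hit with strictly positive probability. I would then reduce the claim to showing the chain has \emph{no absorbing state}: once every configuration admits a positive-probability move to a different configuration, the chain can never settle, so its recurrent class is non-trivial and the clause keeps cycling.

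The heart of the argument is that the two positive sub-patterns are bit-complements --- the literals true under $(0,1)$ are exactly $\{\neg x_1, x_2\}$ and those true under $(1,0)$ are exactly $\{x_1, \neg x_2\}$, two disjoint sets --- so no non-empty conjunctive clause can be true on both positive samples. Fixing an arbitrary configuration and reading off its Include/Exclude action profile (the internal TA state does not affect which feedback row is selected), I would split into exhaustive cases on the clause's truth value: (i) the empty clause outputs $1$ during training, so on $(0,0)$ the TA that excludes the false literal $x_1$ is penalized by the ``clause true, literal false, Exclude'' entry of Table~\ref{table:type_ii_feedback}; (ii) a non-empty clause true on $(0,1)$ is, by disjointness, false on $(1,0)$, and there some included literal is false, triggering the Type~I ``Include, clause false'' penalty of probability $1/s>0$ in Table~\ref{table:type_i_feedback}; (iii) the case true on $(1,0)$ is symmetric; (iv) a non-empty clause false on both positive samples again contains, on either positive sample, a false included literal and receives the same Type~I penalty. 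In every case some TA's internal state is perturbed with positive probability, so no configuration is absorbing; note that cases (ii)--(iii) destroy exactly the two configurations that Lemmas~\ref{full1} and \ref{full2} identify as absorbing under the respective half-datasets, the destruction being caused precisely by the complementary positive sample that the full XOR table adds.

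I expect the main obstacle to be making the case split airtight while separating a TA's action (which selects the feedback row, and hence whether a penalty is possible) from its internal state (which only determines how a penalty moves it): in the $2N$-state setting a single penalty need not flip the action, but it always changes the internal state, which is all that is required to rule out absorption. The last step is to promote ``no absorbing state'' to ``recurrent'': as the chain is finite it has at least one closed recurrent communicating class, and having excluded singleton (absorbing) classes, that class has size at least two, so the clause perpetually oscillates rather than converging. If the stronger reading that every state is recurrent is desired, I would additionally verify irreducibility by exhibiting, for any ordered pair of configurations, a finite sequence of samples whose feedback --- using the positive-probability Inaction entries to hold the remaining TAs fixed --- drives the chain from one to the other; this is routine but tedious, and shows the whole chain forms a single recurrent class.
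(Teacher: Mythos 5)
Your proposal is correct and reaches the same structural conclusion as the paper --- recurrence is obtained by showing that, under the full XOR table, no TA-action configuration of a single clause is absorbing --- but you execute the key step by a genuinely different and more self-contained route. The paper argues incrementally: it takes the unique absorbing state established by Lemma~\ref{full1}/\ref{full2} for the half-table, exhibits the single transition diagram showing that the complementary positive sample (e.g., $x_1=1,x_2=0,y=1$ acting on the clause $\neg x_1\wedge x_2$) gives $\mathrm{TA}^3_3$ a penalty of probability $u_1\frac{1}{s}>0$ toward ``Exclude'', and then asserts that the added sample removes no transitions elsewhere, so no new absorbing states can appear. You instead give a direct exhaustive case split on the clause's truth value over the two positive samples, anchored on the observation that the literal sets $\{\neg x_1,x_2\}$ and $\{x_1,\neg x_2\}$ are disjoint, so no non-empty clause is true on both positive inputs; every configuration then admits a positive-probability penalty (the Type~II ``clause true, literal false, Exclude'' entry for the empty clause, the Type~I ``Include, clause false'' entry of probability $1/s$ otherwise). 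Your version buys a cleaner, verification-free argument that covers all $2^8$ (or general $2N$-state) configurations at once, and your explicit separation of a TA's action from its internal state is a refinement the paper glosses over. Both proofs share the same soft spot: passing from ``no absorbing state'' to ``the states II, IE, EI, EE are recurrent'' really requires an irreducibility (single closed class) argument, which the paper simply asserts and you at least flag as the remaining routine step; neither writes it out.
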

\begin{mylemma} \label{full4}
Given a number of clauses $m$ and a threshold value $T$, $T<m$, the event that the sum of the clause outputs, i.e., $f_{\sum}(\mathcal{C}_i)$, reaches $T$ appears almost surely in infinite time. 
\end{mylemma}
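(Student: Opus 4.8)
The plan is to exploit the fact that the feedback mechanism stays active \emph{precisely as long as} the target $T$ has not been reached, and then to reduce the claim to a standard escape-from-a-transient-set argument for finite Markov chains. First I would record the elementary but essential observation that every clause value in Eq.~(\ref{eqn:clause1}) is binary, $C^i_j(\bold{X})\in\{0,1\}$, so by Eq.~(\ref{eqn:summation}) we have $f_{\sum}(\mathcal{C}_i)\in\{0,1,\dots,m\}$ and in particular $f_{\sum}\geq 0$ at all times. Consequently, whenever $f_{\sum}<T$ the clipping in Eqs.~(\ref{u1}) and (\ref{u2}) is inactive and
$$u_1=\frac{T-f_{\sum}}{2T}>0 \qquad\text{and}\qquad u_2=\frac{T+f_{\sum}}{2T}>0,$$
so that both Type I and Type II feedback are generated with strictly positive probability. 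This is exactly the regime in which Lemmas~\ref{full1}--\ref{full3} apply.

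Next I would set up the joint process. Fix one positive sub-pattern, say the input $(0,1)$ with label $y=1$, which is drawn with probability $P(x_1=0,x_2=1)>0$. Since each of the $4m$ TAs has finitely many states, the joint chain lives on a finite state space, and I define the ``not-yet-reached'' set $B$ to be the collection of joint states with $f_{\sum}(\mathcal{C}_i)<T$ evaluated at $(0,1)$. It then suffices to prove that from every state of $B$ there is a positive-probability path of bounded length leaving $B$: a uniform lower bound $p=\min_{\xv\in B}P_{\xv}(\text{exit } B \text{ within } K \text{ steps})>0$ exists because $B$ is finite, and the geometric tail $P_{\xv}(\text{still in }B\text{ after }nK)\le(1-p)^n\to 0$ forces the chain to exit $B$ almost surely in finite time, which is the assertion that $f_{\sum}$ reaches $T$. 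To build such a path I would feed the sample $(0,1)$ repeatedly: while the state is in $B$ we have $u_1>0$, so Type I feedback is delivered with positive probability, and by the convergence dynamics established in Lemma~\ref{full1} this feedback drives an individual clause, with positive probability, into a state that outputs $1$ on $(0,1)$, i.e.\ a state capturing $\neg x_1\wedge x_2$. Because every finite sequence of i.i.d.\ draws together with a prescribed realization of the reward/penalty/inaction outcomes has positive probability, and because $T<m$ supplies more clauses than are needed, I can concatenate such favorable steps to drive at least $\lceil T\rceil$ clauses simultaneously into $1$-voting states, at which point $f_{\sum}\geq T$ and the path has left $B$.

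The main obstacle is the coupling of the clauses through the shared quantity $f_{\sum}$: feedback switches off for all clauses at the same instant, and Type II feedback on the $y=0$ samples can erode a clause that has already captured the sub-pattern, so one cannot simply invoke per-clause convergence and add the results. I resolve this by not asking for convergence at all, but only for \emph{reachability} of the target configuration in finitely many steps. Since I am free to prescribe the entire finite sequence of samples and feedback realizations (all draws equal to $(0,1)$, all feedback favorable), the erosive Type II transitions never occur along the chosen path, and feedback stays active precisely because we remain in $B$ until the last step; and if $f_{\sum}$ happens to reach $T$ earlier during the construction we are already done. Verifying that this prescribed sequence genuinely carries positive probability and that it does accumulate $\lceil T\rceil$ one-voting clauses is the technical heart of the argument, relying on Lemma~\ref{full1}; everything else is the routine finite-chain conclusion that a set from which escape always has positive probability is left almost surely.
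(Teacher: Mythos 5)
Your proposal is correct and rests on the same pillars as the paper's own proof: the observation that $f_{\sum}\in\{0,\dots,m\}$ so that $u_1=\frac{T-f_{\sum}}{2T}>0$ and $u_2=\frac{T+f_{\sum}}{2T}>0$ whenever $f_{\sum}<T$, combined with finiteness of the joint state space, forces the threshold to be hit almost surely. The difference is one of rigor rather than of route. The paper argues indirectly: it invokes the recurrence of each individual clause among the four configurations (Lemma~\ref{full3}) and then simply asserts that over an infinite horizon $T$ clauses will simultaneously align on one sub-pattern. That jump from per-clause recurrence to a joint coincidence of $T$ clauses is precisely the step you make explicit, by exhibiting a positive-probability witness path of bounded length (a prescribed run of identical $(0,1)$ samples with favorable reward/penalty realizations, which sidesteps the Type~II erosion and the coupling through $f_{\sum}$), taking a uniform lower bound $p$ on the $K$-step exit probability over the finite set $B$, and concluding via the geometric tail $(1-p)^n\to 0$. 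Your appeal to Lemma~\ref{full1} for the reachability of a $1$-voting configuration is slightly loose --- that lemma concerns convergence under the full three-row training set of Table~\ref{xorlogichalf}, not pure $(0,1)$ feeding --- but the needed fact (that Type~I feedback on $(0,1)$ can, with positive probability, push $\mathrm{TA}^j_1,\mathrm{TA}^j_4$ to Exclude and then $\mathrm{TA}^j_2,\mathrm{TA}^j_3$ to Include) is directly verifiable from Table~\ref{table:type_i_feedback} and the transition diagrams already enumerated in that proof, so this is a presentational imprecision, not a gap. In short, your argument proves the same statement by the same underlying mechanism, but supplies the escape-from-a-transient-set machinery that the paper leaves implicit.
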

\begin{mylemma} \label{full5}
When the number of clauses that follow the same sub-pattern reaches $T$, other clauses will not see the input training samples from this particular sub-pattern.
\end{mylemma}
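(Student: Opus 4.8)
The plan is to exploit the saturation built into the Type I Feedback probability $u_1$ of Eq.~(\ref{u1}). I would begin by fixing attention on one of the two XOR sub-patterns, say the one whose defining input is $x_1=0,\,x_2=1$ (with label $y=1$); the argument for the other sub-pattern, $x_1=1,\,x_2=0$, is identical by the bit-wise symmetry of the two cases, so it suffices to treat one. By the hypothesis of the lemma, there are $T$ clauses that ``follow'' this sub-pattern, i.e.\ that have converged to $\neg x_1 \wedge x_2$ (by Lemma~\ref{full1}). The first concrete step is to evaluate these clauses on the sub-pattern's own input: substituting $x_1=0,\,x_2=1$ into the training branch of Eq.~(\ref{eqn:clause1}) gives $\neg x_1 = 1$ and $x_2 = 1$, so each of these $T$ clauses outputs $1$.

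The second step is to bound the vote sum on this input. Since every clause output lies in $\{0,1\}$, the $T$ matching clauses alone contribute $T$ to $f_{\sum}(\mathcal{C}_i)$, while all remaining clauses contribute a nonnegative amount; hence $f_{\sum}(\mathcal{C}_i)\geq T$ whenever this sub-pattern's input is presented. Feeding this into Eq.~(\ref{u1}) yields $\min(T, f_{\sum}(\mathcal{C}_i)) = T$ and then $\max(-T, T) = T$, so that $u_1 = (T - T)/(2T) = 0$.

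The third and final step is to read off the consequence for the feedback dynamics. Because the presented sample carries label $y=1$, only Type I Feedback is ever generated, and it is generated for each clause \emph{independently} but with the single, globally shared probability $u_1$. With $u_1 = 0$, no clause receives any reward, penalty, or inaction update at all; in particular none of the ``other'' clauses---those that have not locked onto this sub-pattern---are touched, so their TA states are left unchanged. This is exactly the sense in which the other clauses ``do not see'' training samples from the saturated sub-pattern, which establishes the lemma.

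There is no deep obstacle here; the argument is essentially a one-line evaluation of $u_1$. The only points that require care are (i) insisting that the vote-sum inequality $f_{\sum}(\mathcal{C}_i)\geq T$ be evaluated on the saturated sub-pattern's \emph{own} input---where precisely the $T$ matching clauses fire---rather than on an arbitrary input, and (ii) emphasizing that $u_1$ is a single quantity computed from the global sum $f_{\sum}(\mathcal{C}_i)$, so that driving it to zero simultaneously shuts off feedback to every clause, not merely to the converged ones. I would also note explicitly that the clamping $\min(T,\cdot)$ in Eq.~(\ref{u1}) is what makes any surplus votes beyond $T$ irrelevant, so the conclusion is insensitive to how many additional clauses happen to fire.
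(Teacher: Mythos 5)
Your proposal is correct and follows essentially the same route as the paper's own proof: both arguments reduce to observing that when $T$ clauses fire on the saturated sub-pattern's input, the clamped sum in Eq.~(\ref{u1}) forces $u_1=0$, so Type I Feedback is never triggered for that sample and no clause is updated. Your version is slightly more careful than the paper's (which simply sets $f_{\sum}(\mathcal{C}_i)=T$) in noting that additional firing clauses only push the sum above $T$ and are absorbed by the $\min(T,\cdot)$ clamp, but the substance is identical.
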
 
\begin{mytheorem} \label{full6} The clauses can almost surely learn the sub-patterns of XOR in infinite time, when $T\leq m/2$. 
\end{mytheorem}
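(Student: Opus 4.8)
The plan is to assemble Theorem \ref{full6} from the five preceding lemmas, viewing the $m$-clause system as a single coupled Markov chain in which the individual clauses interact only through the shared vote sum $f_{\sum}$ that enters the feedback probabilities $u_1$ and $u_2$. By the symmetry of the XOR data between the two sub-patterns $\neg x_1 \wedge x_2$ and $x_1 \wedge \neg x_2$, I would argue without loss of generality about whichever sub-pattern first accumulates $T$ matching clauses, and then transfer the conclusion to the other by symmetry.

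\emph{First phase (one sub-pattern locks in).} While fewer than $T$ clauses match a given sub-pattern, that sub-pattern's positive samples still trigger Type I Feedback, since $u_1 > 0$ precisely when $f_{\sum} < T$. By Lemma \ref{full3}, each still-free clause that sees the complete data of Table \ref{xorlogicfull} is recurrent, so no clause settles prematurely; by Lemma \ref{full4}, applicable because $T \leq m/2 < m$, the vote sum almost surely reaches $T$ in infinite time. I would use this to conclude that, almost surely, $T$ clauses come to match a single sub-pattern, say $\neg x_1 \wedge x_2$. The self-limiting form of $u_1$, which vanishes exactly at $f_{\sum} = T$, is what ensures the count stabilizes at $T$ rather than overshooting.

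\emph{Second phase (the other sub-pattern is forced).} Once $T$ clauses match $\neg x_1 \wedge x_2$, Lemma \ref{full5} guarantees that the samples of this sub-pattern no longer generate feedback for the remaining clauses, so their effective training distribution collapses to that of Table \ref{xorlogichalf1}. Because $T \leq m/2$, at least $m - T \geq T$ clauses remain, so this set is nonempty, and by Lemma \ref{full2} each such clause converges almost surely to $x_1 \wedge \neg x_2$. Hence at least $T$ clauses capture the second sub-pattern, both sub-patterns are represented by at least $T$ votes each, and since a finite composition of almost-sure events is again almost sure, the clauses learn XOR.

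The hard part will be rigorously justifying this two-phase decomposition under the coupling: I must show that the $T$ clauses which lock onto the first sub-pattern genuinely stay there, i.e. that they form a closed (absorbing) communicating class of the coupled chain and are not eroded by the Type I feedback still arriving from the as-yet-uncaptured second sub-pattern before it too reaches $T$ votes. The self-correcting nature of $u_1$, namely that feedback on a sub-pattern re-activates whenever its vote count dips below $T$, is the mechanism I would exploit to close this gap. A secondary point is to confirm that $T \leq m/2$ is tight: if $T > m/2$ then $2T > m$, so the two sub-patterns cannot simultaneously secure $T$ votes each and one is necessarily starved, which delineates exactly why the hypothesis is needed.
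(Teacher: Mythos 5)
Your proposal is correct and follows essentially the same route as the paper: use Lemma \ref{full4} to get $T$ clauses onto one sub-pattern, Lemma \ref{full5} to block that sub-pattern's samples so the remaining clauses effectively train on Table \ref{xorlogichalf1} and converge via Lemma \ref{full2}, and the condition $T\leq m/2$ to ensure enough clauses remain, with the system freezing once both counts reach $T$. The erosion issue you flag as the hard part is handled in the paper only by the same informal self-correcting-$u_1$ argument you propose (stated in the remark following the proof), so your treatment is at the same level of rigor as the original.
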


\begin{table}
\centering
\begin{tabular}{ |c|c|c| } 
\hline
$x_1$ & $x_2$ & Output \\ 
0 & 0 & 0 \\ 
1 & 1 & 0 \\ 
0 & 1 & 1 \\
\hline
\end{tabular}
\caption{A sub-pattern in ``XOR'' case.}
\label{xorlogichalf}
\end{table}

\begin{table}
\centering
\begin{tabular}{ |c|c|c| } 
\hline
$x_1$ & $x_2$ & Output \\ 
0 & 0 & 0 \\ 
1 & 1 & 0 \\ 
1 & 0 & 1 \\
\hline
\end{tabular}
\caption{A sub-pattern in ``XOR'' case.}
\label{xorlogichalf1}
\end{table}

The logical flow of the lemmas and the theorem is as follows. Lemma \ref{full1} and Lemma \ref{full2} confirm the fact that the TM can learn the intended sub-pattern if only one of the XOR sub-patterns appear in the training data. We assume non-negative $u_1$ and $u_2$ to guarantee that the training samples always trigger the feedback shown in Table \ref{table:type_i_feedback} or Table \ref{table:type_ii_feedback}. Note that these lemmas determine that the correct states are absorbing states as well as the uniqueness of these states.   Lemma~\ref{full3} establishes the fact that when both sub-patterns are present in the training samples, a TM will not converge to any one of these in probability 1, if both $u_1$ and $u_2$ are kept positive. Therefore, it is necessary to guide the convergence, which is done by the hyper-parameter $T$. We use $T$ to modify the probability of triggering the feedback events in Table \ref{table:type_i_feedback} or Table \ref{table:type_ii_feedback}. Lemma \ref{full4} then establishes that the number of clauses that learn a certain sub-pattern will reach $T$ at a certain time instant. Lemma \ref{full5} guarantees that when the event described by Lemma \ref{full4} happens, the corresponding training samples of the learnt sub-pattern will be blocked from the system. Accordingly, Lemmas \ref{full1}-\ref{full3} cover the system dynamics when $T$ (and thus $u_1$, $u_2$) is not involved in the learning, while Lemmas \ref{full4} and \ref{full5} shows how $T$ blocks the training samples of a learnt sub-pattern to make the system learn another sub-pattern. Based on Lemmas \ref{full1}-\ref{full5}, we prove that the TM can learn the XOR-relation and the conditions for learning, in terms of Theorem \ref{full6}. In the remaining subsections, we will prove the Lemmas one by one. 

\subsubsection{Proof of Lemma \ref{full1} and Lemma \ref{full2}}

Now let's study Lemma \ref{full1}. Here, we will confirm that the clauses in the TM will almost surely converge to the clause $\neg x_{1} \wedge x_{2}$ when the training samples shown in Table \ref{xorlogichalf} are given to the TM. Note that the functionality of $T$ is disabled in this lemma, and $u_1$ and $u_2$ are assumed to be positive constants.  


\begin{proof}
Without loss of generality, we study clause $C_3$, which has
$\mathrm{TA}^3_{1}$ with actions ``Include" $x_{1}$ or ``Exclude" it, $\mathrm{TA}^3_{2}$ with actions ``Include" $\neg x_{1}$ or ``Exclude" it, $\mathrm{TA}^3_{3} $ with actions ``Include" $x_{2}$ or ``Exclude" it, and $\mathrm{TA}^3_{4}$ with actions ``Include" $\neg x_{2}$ or ``Exclude" it. To analyze the convergence of those four TAs, we perform a quasi-stationary analysis, where we freeze the behavior of three of them, and then study the transitions of the remaining one. More specifically, the analysis is organized as follows:
\begin{enumerate}
\item We freeze $\mathrm{TA}^3_{1}$ and $\mathrm{TA}^3_{2}$ respectively at ``Exclude" and ``Include". In this case, the first bit becomes $\neg x_{1}$. There are four sub-cases for $\mathrm{TA}^3_{3}$ and $\mathrm{TA}^3_{4}$: 
\begin{enumerate}
\item We study the transition of $\mathrm{TA}^3_{3}$ when it has the action ``Include" as its current action, given different training samples shown in Table \ref{xorlogichalf} and different actions of $\mathrm{TA}^3_{4}$ (i.e., when the action of $\mathrm{TA}^3_{4}$ is frozen at ``Include" or ``Exclude".). \label {subcase1}
\item We study the transition of $\mathrm{TA}^3_{3}$ when it has ``Exclude" as its current action, given different training samples shown in Table \ref{xorlogichalf} and different actions of $\mathrm{TA}^3_{4}$ (i.e., when the action of $\mathrm{TA}^3_{4}$ is frozen at ``Include" or ``Exclude".). \label {subcase2}
\item We study the transition of $\mathrm{TA}^3_{4}$ when it has ``Include" as its current action, given different training samples shown in Table \ref{xorlogichalf} and different actions of $\mathrm{TA}^3_{3}$ (i.e., when the action of $\mathrm{TA}^3_{3}$ is frozen at ``Include" or ``Exclude".). \label {subcase3}
\item We study the transition of $\mathrm{TA}^3_{4}$ when it has ``Exclude" as its current action, given different training samples shown in Table \ref{xorlogichalf} and different actions of $\mathrm{TA}^3_{3}$ (i.e., when the action of $\mathrm{TA}^3_{3}$ is frozen as ``Include" or ``Exclude".). \label {subcase4}
\end{enumerate}
\item We freeze $\mathrm{TA}^3_{1}$ and $\mathrm{TA}^3_{2}$ respectively at ``Include" and ``Exclude". In this case, the first bit becomes  $x_{1}$. The sub-cases for $\mathrm{TA}^3_{3}$ and $\mathrm{TA}^3_{4}$ are identical to the sub-cases in the previous case. 

\item We freeze $\mathrm{TA}^3_{1}$ and $\mathrm{TA}^3_{2}$ at ``Exclude" and ``Exclude". In this case, the first bit is excluded and will not influence the final output. 
The sub-cases for $\mathrm{TA}^3_{3}$ and $\mathrm{TA}^3_{4}$ are identical to the sub-cases in the previous case. 
\item We freeze $\mathrm{TA}^3_{1}$ and $\mathrm{TA}^3_{2}$ at ``Include" and ``Include". In this case, we always have $C_3=0$ because the clause contains the contradiction $x_1 \land \lnot x_1$. The sub-cases for $\mathrm{TA}^3_{3}$ and $\mathrm{TA}^3_{4}$ are identical to the sub-cases in the previous case. 
\end{enumerate} 
In the analysis below, we will study each of the four cases, one by one.\\

\noindent {\bf Case 1} \\
We now analyze the first sub-case, i.e., Sub-case 1 (a). In this case, $\neg x_1$ is always included. We here study the transition of $\mathrm{TA}^3_{3}$ when its current action is ``Include''. Depending on different training samples and actions of $\mathrm{TA}^3_{4}$, we have the following possible transitions. Below, ``I'' and ``E'' mean ``Include'' and ``Exclude'', respectively.

\begin{minipage}{0.45\textwidth}
Condition: $x_{1}=1$, $x_{2}=1$, $y=0$, $\mathrm{TA}^3_{4}$=E.\\
Therefore, we have Type II feedback for \\
literal $x_{2}=1$, clause $C_{3}=0$.
\end{minipage}
\begin{minipage}{0.35\textwidth}
\begin{tikzpicture}[node distance = .35cm, font=\Huge]
\tikzstyle{every node}=[scale=0.35]
\node[state] (E) at (1,1) {};
\node[state] (F) at (2,1) {};
\node[state] (G) at (3,1) {};
\node[state] (H) at (4,1) {};
\node[state] (A) at (1,2) {};
\node[state] (B) at (2,2) {};
\node[state] (C) at (3,2) {};
\node[state] (D) at (4,2) {};
\node[thick] at (0,1) {$R$};
\node[thick] at (0,2) {$P$};
\node[thick] at (1.5,3) {$I$};
\node[thick] at (3.5,3) {$E$};
\draw[dotted, thick] (2.5,0.5) -- (2.5,2.5);

\end{tikzpicture}
\end{minipage}
\begin{minipage}{.35\textwidth}
No transition
\end{minipage}

\vspace{.25cm}

\begin{minipage}{0.45\textwidth}
Condition: $x_{1}=0$, $x_{2}=1$, $y=1$, $\mathrm{TA}^3_{4}$=E.\\
Therefore, we have Type I feedback for\\
literal $x_{2}=1$, $C_{3}= \neg x_{1} \wedge x_{2} = 1$.\\ 
\end{minipage}
\begin{minipage}{0.35\textwidth}
\begin{tikzpicture}[node distance = .35cm, font=\Huge]
\tikzstyle{every node}=[scale=0.35]
\node[state] (E) at (1,1) {};
\node[state] (F) at (2,1) {};
\node[state] (G) at (3,1) {};
\node[state] (H) at (4,1) {};
\node[state] (A) at (1,2) {};
\node[state] (B) at (2,2) {};
\node[state] (C) at (3,2) {};
\node[state] (D) at (4,2) {};
\node[thick] at (0,1) {$R$};
\node[thick] at (0,2) {$P$};
\node[thick] at (1.5,3) {$I$};
\node[thick] at (3.5,3) {$E$};
\draw[dotted, thick] (2.5,0.5) -- (2.5,2.5);
\draw[every loop]
(F) edge[bend right] node [scale=1.2, above=0.1 of C] {} (E)
(E) edge[loop left = 45] node [scale=1.2, below=0.1 of E] {$u_1\frac{s-1}{s}$} (E);

\end{tikzpicture}
\end{minipage}

\begin{minipage}{0.45\textwidth}
Condition: $x_{1}=0$, $x_{2}=0$, $y=0$, $\mathrm{TA}^3_{4}$=E.\\
Therefore, we have Type II feedback for \\ 
literal $x_{2}=0$, $C_{3}= \neg x_{1} \wedge x_{2}=0$.
\end{minipage}
\begin{minipage}{0.35\textwidth}
\begin{tikzpicture}[node distance = .35cm, font=\Huge]
\tikzstyle{every node}=[scale=0.35]
\node[state] (E) at (1,1) {};
\node[state] (F) at (2,1) {};
\node[state] (G) at (3,1) {};
\node[state] (H) at (4,1) {};
\node[state] (A) at (1,2) {};
\node[state] (B) at (2,2) {};
\node[state] (C) at (3,2) {};
\node[state] (D) at (4,2) {};
\node[thick] at (0,1) {$R$};
\node[thick] at (0,2) {$P$};
\node[thick] at (1.5,3) {$I$};
\node[thick] at (3.5,3) {$E$};
\draw[dotted, thick] (2.5,0.5) -- (2.5,2.5);

\end{tikzpicture}
\end{minipage}
\begin{minipage}{.35\textwidth}
No transition
\end{minipage}

\begin{minipage}{0.45\textwidth}
Condition: $x_{1}=1$, $x_{2}=1$, $y=0$, $\mathrm{TA}^3_{4}$=I.\\
Therefore, we have Type II feedback for literal $x_{2}=1$, $C_{3}=0$.
\end{minipage}
\begin{minipage}{0.35\textwidth}
\begin{tikzpicture}[node distance = .35cm, font=\Huge]
\tikzstyle{every node}=[scale=0.35]
\node[state] (E) at (1,1) {};
\node[state] (F) at (2,1) {};
\node[state] (G) at (3,1) {};
\node[state] (H) at (4,1) {};
\node[state] (A) at (1,2) {};
\node[state] (B) at (2,2) {};
\node[state] (C) at (3,2) {};
\node[state] (D) at (4,2) {};
\node[thick] at (0,1) {$R$};
\node[thick] at (0,2) {$P$};
\node[thick] at (1.5,3) {$I$};
\node[thick] at (3.5,3) {$E$};
\draw[dotted, thick] (2.5,0.5) -- (2.5,2.5);

\end{tikzpicture}
\end{minipage}
\begin{minipage}{.35\textwidth}
No transition
\end{minipage}

\begin{minipage}{0.45\textwidth}
Condition: $x_{1}=0$, $x_{2}=1$, $y=1$, $\mathrm{TA}^3_{4}$=I.\\
Therefore, we have Type I feedback for \\
literal $x_{2}=1$, $C_{3}=0$.
\end{minipage}
\begin{minipage}{0.35\textwidth}
\begin{tikzpicture}[node distance = .35cm, font=\Huge]
\tikzstyle{every node}=[scale=0.35]
\node[state] (E) at (1,1) {};
\node[state] (F) at (2,1) {};
\node[state] (G) at (3,1) {};
\node[state] (H) at (4,1) {};
\node[state] (A) at (1,2) {};
\node[state] (B) at (2,2) {};
\node[state] (C) at (3,2) {};
\node[state] (D) at (4,2) {};
\node[thick] at (0,1) {$R$};
\node[thick] at (0,2) {$P$};
\node[thick] at (1.5,3) {$I$};
\node[thick] at (3.5,3) {$E$};
\draw[dotted, thick] (2.5,0.5) -- (2.5,2.5);
\draw[every loop]
(A) edge[bend left] node [scale=1.2, above=0.1 of C] {} (B)
(B) edge[bend left] node [scale=1.2, above=0.1 of B] {$~~~~~~u_1\frac{1}{s}$} (C);

\end{tikzpicture}
\end{minipage}

\begin{minipage}{0.45\textwidth}
Condition: $x_{1}=0$, $x_{2}=0$, $y=0$, $\mathrm{TA}^3_{4}$=I.\\
Therefore, we have Type II feedback for \\
literal $x_2=0$, $C_{3}=0$.
\end{minipage}
\begin{minipage}{0.35\textwidth}
\begin{tikzpicture}[node distance = .35cm, font=\Huge]
\tikzstyle{every node}=[scale=0.35]
\node[state] (E) at (1,1) {};
\node[state] (F) at (2,1) {};
\node[state] (G) at (3,1) {};
\node[state] (H) at (4,1) {};
\node[state] (A) at (1,2) {};
\node[state] (B) at (2,2) {};
\node[state] (C) at (3,2) {};
\node[state] (D) at (4,2) {};
\node[thick] at (0,1) {$R$};
\node[thick] at (0,2) {$P$};
\node[thick] at (1.5,3) {$I$};
\node[thick] at (3.5,3) {$E$};
\draw[dotted, thick] (2.5,0.5) -- (2.5,2.5);

\end{tikzpicture}
\end{minipage}
\begin{minipage}{.35\textwidth}
No transition
\end{minipage}

Clearly, the above analyzed sub-case has 6 instances, depending on the variations of the training samples and the status of $\mathrm{TA}^3_4$, where the first three correspond to the instances where $\mathrm{TA}^3_4=E$ while the last three represent the instances where $\mathrm{TA}^3_4=I$. We now investigate the first instance, which covers the training samples: $x_{1}=1$, $x_{2}=1$, $y=0$, and $T^3_{4}$=E. Clearly, the training sample will trigger Type II feedback because of $y=0$. Then the clause becomes $C_{3}= \neg x_{1} \wedge x_{2} = 0$ because the studied instance has $\mathrm{TA}^3_1=E$, $\mathrm{TA}^3_2=I$, $\mathrm{TA}^3_3=I$, and $\mathrm{TA}^3_4=E$. Because we now study $\mathrm{TA}^3_3$, the corresponding literal is $x_2=1$. Based on the above information, we can check from Table \ref{table:type_ii_feedback} that the probability of ``Inaction'' is 1. Therefore, the transition diagram does not have any arrow, indicating that there is ``No transition'' for $\mathrm{TA}^3_3$ in this circumstance.

To study a circumstance where transitions may happen, let us look at the second instance in the analyzed sub-case, with $x_{1}=0$, $x_{2}=1$, $y=1$, and $T^3_{4}$=E. Clearly, this training sample will trigger Type I feedback as $y=1$. Together with the current status of other TAs, the clause is determined to be $C_{3}= \neg x_{1} \wedge x_{2} = 1$ and the literal is $x_2=1$. From Table \ref{table:type_i_feedback}, we know that the reward probability is $\frac{s-1}{s}$ and the inaction probability is $1/s$. To indicate the transitions, we have plotted 
the diagram showing the reward probability. Note that the overall probability is $u_1\frac{s-1}{s}$, where $u_1$ is defined by Eq. (\ref{u1}). Understandably, $u_1\in[0,0.5]$ and $u_2\in[0.5,1]$ for any $f_{\sum}(\mathcal{C}^i)\geq 0$. For now, we assume we find a certain $T$ such that $u_1>0$ holds. The role of $T$ and $u_1$ will be analyzed later.



We now consider Sub-case 1 (b). The literal $\neg x_1$ is still included, and we study the transition of $\mathrm{TA}^3_{3}$ when its current action is ``Exclude''. The possible transitions are listed below.


\begin{minipage}{0.45\textwidth}
Condition: $x_{1}=1$, $x_{2}=1$, $y=0$, $\mathrm{TA}^3_4$=E. \\
Therefore, Type II, $x_{2}=1$, $C_{3}=\neg x_{1}=0$.
\end{minipage}
\begin{minipage}{0.35\textwidth}
\begin{tikzpicture}[node distance = .35cm, font=\Huge]
\tikzstyle{every node}=[scale=0.35]
\node[state] (E) at (1,1) {};
\node[state] (F) at (2,1) {};
\node[state] (G) at (3,1) {};
\node[state] (H) at (4,1) {};
\node[state] (A) at (1,2) {};
\node[state] (B) at (2,2) {};
\node[state] (C) at (3,2) {};
\node[state] (D) at (4,2) {};
\node[thick] at (0,1) {$R$};
\node[thick] at (0,2) {$P$};
\node[thick] at (1.5,3) {$I$};
\node[thick] at (3.5,3) {$E$};
\draw[dotted, thick] (2.5,0.5) -- (2.5,2.5);

\end{tikzpicture}
\end{minipage}
\begin{minipage}{.35\textwidth}
No transition
\end{minipage}

\begin{minipage}{0.45\textwidth}
Condition: $x_{1}=0$, $x_{2}=1$, $y=1$, $\mathrm{TA}^3_{4}$=E.\\
Therefore, Type I, $x_{2}=1$, $C_{3}=\neg x_{1}=1$. 
\end{minipage}
\begin{minipage}{0.35\textwidth}
\begin{tikzpicture}[node distance = .35cm, font=\Huge]
\tikzstyle{every node}=[scale=0.35]
\node[state] (E) at (1,1) {};
\node[state] (F) at (2,1) {};
\node[state] (G) at (3,1) {};
\node[state] (H) at (4,1) {};
\node[state] (A) at (1,2) {};
\node[state] (B) at (2,2) {};
\node[state] (C) at (3,2) {};
\node[state] (D) at (4,2) {};
\node[thick] at (0,1) {$R$};
\node[thick] at (0,2) {$P$};
\node[thick] at (1.5,3) {$I$};
\node[thick] at (3.5,3) {$E$};
\draw[dotted, thick] (2.5,0.5) -- (2.5,2.5);
\draw[every loop]
(D) edge[bend right] node [scale=1.2, above=0.1 of C] {} (C)
(C) edge[bend right] node [scale=1.2, above=0.1 of B] {$~~~~~~u_1\frac{1}{s}$} (B);

\end{tikzpicture}
\end{minipage}

\begin{minipage}{0.45\textwidth}
Condition: $x_{1}=0$, $x_{2}=0$, $y=0$, $\mathrm{TA}^3_{4}$=E.\\
Therefore, Type II, $x_{2}=0$, $C_{3}=\neg x_{1}=1$. 
\end{minipage}
\begin{minipage}{0.35\textwidth}
\begin{tikzpicture}[node distance = .35cm, font=\Huge]
\tikzstyle{every node}=[scale=0.35]
\node[state] (E) at (1,1) {};
\node[state] (F) at (2,1) {};
\node[state] (G) at (3,1) {};
\node[state] (H) at (4,1) {};
\node[state] (A) at (1,2) {};
\node[state] (B) at (2,2) {};
\node[state] (C) at (3,2) {};
\node[state] (D) at (4,2) {};
\node[thick] at (0,1) {$R$};
\node[thick] at (0,2) {$P$};
\node[thick] at (1.5,3) {$I$};
\node[thick] at (3.5,3) {$E$};
\draw[dotted, thick] (2.5,0.5) -- (2.5,2.5);
\draw[every loop]
(D) edge[bend right] node [scale=1.2, above=0.1 of C] {$u_2\times 1$} (C)
(C) edge[bend right] node [scale=1.2, above=0.1 of B] {} (B);

\end{tikzpicture}
\end{minipage}

\begin{minipage}{0.45\textwidth}
Condition: $x_{1}=1$, $x_{2}=1$, $y=0$, $\mathrm{TA}^3_{4}$=I. \\
Therefore, Type II, $x_{2}=1$, \\
$C_{3}=\neg x_{1} \wedge \neg x_{2}=0$. 
\end{minipage}
\begin{minipage}{0.35\textwidth}
\begin{tikzpicture}[node distance = .35cm, font=\Huge]
\tikzstyle{every node}=[scale=0.35]
\node[state] (E) at (1,1) {};
\node[state] (F) at (2,1) {};
\node[state] (G) at (3,1) {};
\node[state] (H) at (4,1) {};
\node[state] (A) at (1,2) {};
\node[state] (B) at (2,2) {};
\node[state] (C) at (3,2) {};
\node[state] (D) at (4,2) {};
\node[thick] at (0,1) {$R$};
\node[thick] at (0,2) {$P$};
\node[thick] at (1.5,3) {$I$};
\node[thick] at (3.5,3) {$E$};
\draw[dotted, thick] (2.5,0.5) -- (2.5,2.5);

\end{tikzpicture}
\end{minipage}
\begin{minipage}{.35\textwidth}
No transition
\end{minipage}

\begin{minipage}{0.45\textwidth}
Condition: $x_{1}=0$, $x_{2}=1$, $y=1$ $\mathrm{TA}^3_{4}$=I.\\
Therefore, Type I, $x_{2}=1$, \\
$C_{3}=\neg x_{1} \wedge \neg x_{2}=0$.
\end{minipage}
\begin{minipage}{0.35\textwidth}
\begin{tikzpicture}[node distance = .35cm, font=\Huge]
\tikzstyle{every node}=[scale=0.35]
\node[state] (E) at (1,1) {};
\node[state] (F) at (2,1) {};
\node[state] (G) at (3,1) {};
\node[state] (H) at (4,1) {};
\node[state] (A) at (1,2) {};
\node[state] (B) at (2,2) {};
\node[state] (C) at (3,2) {};
\node[state] (D) at (4,2) {};
\node[thick] at (0,1) {$R$};
\node[thick] at (0,2) {$P$};
\node[thick] at (1.5,3) {$I$};
\node[thick] at (3.5,3) {$E$};
\draw[dotted, thick] (2.5,0.5) -- (2.5,2.5);
\draw[every loop]
(G) edge[bend right] node [scale=1.2, above=0.1 of C] {} (H)
(H) edge[loop right] node [scale=1.2, below=0.1 of H] {$u_1\frac{1}{s}$} (H);

\end{tikzpicture}
\end{minipage}

\begin{minipage}{0.45\textwidth}
Condition: $x_{1}=0$, $x_{2}=0$, $y=0$, $\mathrm{TA}^3_4$=I.\\
Therefore, Type II, $x_{2}=0$, \\
$C_{3}=\neg x_{1} \wedge \neg x_{2}=1$.
\end{minipage}
\begin{minipage}{0.35\textwidth}
\begin{tikzpicture}[node distance = .35cm, font=\Huge]
\tikzstyle{every node}=[scale=0.35]
\node[state] (E) at (1,1) {};
\node[state] (F) at (2,1) {};
\node[state] (G) at (3,1) {};
\node[state] (H) at (4,1) {};
\node[state] (A) at (1,2) {};
\node[state] (B) at (2,2) {};
\node[state] (C) at (3,2) {};
\node[state] (D) at (4,2) {};
\node[thick] at (0,1) {$R$};
\node[thick] at (0,2) {$P$};
\node[thick] at (1.5,3) {$I$};
\node[thick] at (3.5,3) {$E$};
\draw[dotted, thick] (2.5,0.5) -- (2.5,2.5);
\draw[every loop]
(D) edge[bend right] node [scale=1.2, above=0.1 of C] {$u_2\times 1$} (C)
(C) edge[bend right] node [scale=1.2, above=0.1 of B] {$$} (B);

\end{tikzpicture}
\end{minipage}

Now let us move onto the third sub-case in Case 1, i.e., Sub-case 1 (c). The literal $\neg x_1$ is still included, and we study the transition of $\mathrm{TA}^3_{4}$ when its current action is ``Include''. To save space, we remove the instances where no transition happens in the remainder of the paper. Note that we are now studying $\mathrm{TA}^3_4$ that corresponds to $\neg x_{2}$ rather than $x_{2}$. Therefore, the literal in Tables \ref{table:type_i_feedback} and \ref{table:type_ii_feedback} becomes $\neg x_{2}$. 


\begin{minipage}{0.45\textwidth}
Condition: $x_{1}=0$, $x_{2}=1$, $y=1$, $\mathrm{TA}^3_{3}$=E. \\
Therefore, Type I, $\neg x_{2}=0$,\\
 $C_{3}=\neg x_{1} \wedge \neg x_{2}=0$.
\end{minipage}
\begin{minipage}{0.35\textwidth}
\begin{tikzpicture}[node distance = .35cm, font=\Huge]
\tikzstyle{every node}=[scale=0.35]
\node[state] (E) at (1,1) {};
\node[state] (F) at (2,1) {};
\node[state] (G) at (3,1) {};
\node[state] (H) at (4,1) {};
\node[state] (A) at (1,2) {};
\node[state] (B) at (2,2) {};
\node[state] (C) at (3,2) {};
\node[state] (D) at (4,2) {};
\node[thick] at (0,1) {$R$};
\node[thick] at (0,2) {$P$};
\node[thick] at (1.5,3) {$I$};
\node[thick] at (3.5,3) {$E$};
\draw[dotted, thick] (2.5,0.5) -- (2.5,2.5);
\draw[every loop]
(A) edge[bend left] node [scale=1.2, above=0.1 of C] {$u_1\frac{1}{s}$} (B)
(B) edge[bend left] node [scale=1.2, above=0.1 of B] {} (C);

\end{tikzpicture}
\end{minipage}

\begin{minipage}{0.45\textwidth}
Condition: $x_{1}=0$, $x_{2}=1$, $y=1$, $\mathrm{TA}^3_{3}$=I\\
Therefore, Type I, $\neg x_{2}=0$,\\
$C_{3}=0$.
\end{minipage}
\begin{minipage}{0.35\textwidth}
\begin{tikzpicture}[node distance = .35cm, font=\Huge]
\tikzstyle{every node}=[scale=0.35]
\node[state] (E) at (1,1) {};
\node[state] (F) at (2,1) {};
\node[state] (G) at (3,1) {};
\node[state] (H) at (4,1) {};
\node[state] (A) at (1,2) {};
\node[state] (B) at (2,2) {};
\node[state] (C) at (3,2) {};
\node[state] (D) at (4,2) {};
\node[thick] at (0,1) {$R$};
\node[thick] at (0,2) {$P$};
\node[thick] at (1.5,3) {$I$};
\node[thick] at (3.5,3) {$E$};
\draw[dotted, thick] (2.5,0.5) -- (2.5,2.5);
\draw[every loop]
(A) edge[bend left] node [scale=1.2, above=0.1 of C] {} (B)
(B) edge[bend left] node [scale=1.2, above=0.1 of B] {$~~~~~~~u_1\frac{1}{s}$} (C);

\end{tikzpicture}
\end{minipage}

For the Sub-case 1 (d), we study the transition of $\mathrm{TA}^3_{4}$ when it has the current action ``Exclude''. 

\begin{minipage}{0.45\textwidth}
Condition: $x_{1}=0$, $x_{2}=1$, $y=1$, $\mathrm{TA}^3_{3}$=E.\\
Therefore, Type I, $\neg x_{2}=0$,\\
$C_{3}=\neg x_{1}=1$.
\end{minipage}
\begin{minipage}{0.35\textwidth}
\begin{tikzpicture}[node distance = .35cm, font=\Huge]
\tikzstyle{every node}=[scale=0.35]
\node[state] (E) at (1,1) {};
\node[state] (F) at (2,1) {};
\node[state] (G) at (3,1) {};
\node[state] (H) at (4,1) {};
\node[state] (A) at (1,2) {};
\node[state] (B) at (2,2) {};
\node[state] (C) at (3,2) {};
\node[state] (D) at (4,2) {};
\node[thick] at (0,1) {$R$};
\node[thick] at (0,2) {$P$};
\node[thick] at (1.5,3) {$I$};
\node[thick] at (3.5,3) {$E$};
\draw[dotted, thick] (2.5,0.5) -- (2.5,2.5);
\draw[every loop]
(G) edge[bend left] node [scale=1.2, above=0.1 of C] {} (H)
(H) edge[loop right] node [scale=1.2, below=0.1 of H] {$u_1\frac{1}{s}$} (H);

\end{tikzpicture}
\end{minipage}

\begin{minipage}{0.45\textwidth}
Condition: $x_{1}=0$, $x_{2}=1$, $y=1$, $\mathrm{TA}^3_{3}$=I. \\
Therefore, Type I, $\neg x_{2}=0$,\\
$C_{3}=\neg x_{1} \wedge x_{2}=1$.
\end{minipage}
\begin{minipage}{0.35\textwidth}
\begin{tikzpicture}[node distance = .35cm, font=\Huge]
\tikzstyle{every node}=[scale=0.35]
\node[state] (E) at (1,1) {};
\node[state] (F) at (2,1) {};
\node[state] (G) at (3,1) {};
\node[state] (H) at (4,1) {};
\node[state] (A) at (1,2) {};
\node[state] (B) at (2,2) {};
\node[state] (C) at (3,2) {};
\node[state] (D) at (4,2) {};
\node[thick] at (0,1) {$R$};
\node[thick] at (0,2) {$P$};
\node[thick] at (1.5,3) {$I$};
\node[thick] at (3.5,3) {$E$};
\draw[dotted, thick] (2.5,0.5) -- (2.5,2.5);
\draw[every loop]
(G) edge[bend left] node [scale=1.2, above=0.1 of C] {} (H)
(H) edge[loop right] node [scale=1.2, below=0.1 of H] {$u_1\frac{1}{s}$} (H);

\end{tikzpicture}
\end{minipage}

So far, we have gone through all sub-cases in Case 1. We are now ready to sum up Case 1 by looking at the transitions of $\mathrm{TA}^3_{3}$ and $\mathrm{TA}^3_4$ in different scenarios. Clearly, $\mathrm{TA}^3_{4}$ will become ``\textsl{Exclude}" in the long run because it has only one direction of transition, i.e., towards action ``Exclude''. 
Given $\mathrm{TA}^3_{4}$ is ``Exclude'', action ``Include'' of $\mathrm{TA}^3_{3}$ is an absorbing state.
Therefore, if $\mathrm{TA}^3_{1}$ and $\mathrm{TA}^3_{2}$ are ``Exclude" and ``Include", respectively, $\mathrm{TA}^3_{3}$ will become ``Include", and $\mathrm{TA}^3_{4}$ will eventually be ``Exclude". In other words, $C_{3}$ will converge to $\neg x_{1} \wedge x_{2}$ in Case 1. \\

\noindent {\bf Case 2} \\
Case 2 studies the behavior of $\mathrm{TA}^3_3$ and $\mathrm{TA}^3_4$ when $\mathrm{TA}^3_1$ and $\mathrm{TA}^3_2$ select ``Include" and ``Exclude", respectively. There are here also four sub-cases and we will detail them presently. 

We first study $\mathrm{TA}^3_3$ with action ``Include", providing the below transitions. 


\begin{minipage}{0.45\textwidth}
Conditions: $x_{1}=0$, $x_{2}=1$, ${y}=1$, $\mathrm{TA}^3_{4}$=E. \\
Therefore, Type I, $ x_{2} = 1$, \\$C_{3} = 0$.
\end{minipage}
\begin{minipage}{0.35\textwidth}
\begin{tikzpicture}[node distance = .35cm, font=\Huge]
\tikzstyle{every node}=[scale=0.35]
\node[state] (E) at (1,1) {};
\node[state] (F) at (2,1) {};
\node[state] (G) at (3,1) {};
\node[state] (H) at (4,1) {};
\node[state] (A) at (1,2) {};
\node[state] (B) at (2,2) {};
\node[state] (C) at (3,2) {};
\node[state] (D) at (4,2) {};
\node[thick] at (0,1) {$R$};
\node[thick] at (0,2) {$P$};
\node[thick] at (1.5,3) {$I$};
\node[thick] at (3.5,3) {$E$};
\draw[dotted, thick] (2.5,0.5) -- (2.5,2.5);
\draw[every loop]
(A) edge[bend left] node [scale=1.2, above=0.1 of C] {} (B)
(B) edge[bend left] node [scale=1.2, above=0.1 of C] {~~$u_1\frac{1}{s}$} (C);

\end{tikzpicture}
\end{minipage}

\begin{minipage}{0.45\textwidth}
Conditions: $x_{1}=0$, $x_{2}=1$, $y=1$, $\mathrm{TA}^3_{4}$=I. \\
Therefore, Type I, $x_{2}=1$, $C_{3} = 0$. 
\end{minipage}
\begin{minipage}{0.35\textwidth}
\begin{tikzpicture}[node distance = .35cm, font=\Huge]
\tikzstyle{every node}=[scale=0.35]
\node[state] (E) at (1,1) {};
\node[state] (F) at (2,1) {};
\node[state] (G) at (3,1) {};
\node[state] (H) at (4,1) {};
\node[state] (A) at (1,2) {};
\node[state] (B) at (2,2) {};
\node[state] (C) at (3,2) {};
\node[state] (D) at (4,2) {};
\node[thick] at (0,1) {$R$};
\node[thick] at (0,2) {$P$};
\node[thick] at (1.5,3) {$I$};
\node[thick] at (3.5,3) {$E$};
\draw[dotted, thick] (2.5,0.5) -- (2.5,2.5);
\draw[every loop]
(A) edge[bend left] node [scale=1.2, above=0.1 of C] {} (B)
(B) edge[bend left] node [scale=1.2, above=0.1 of C] {~~$u_1\frac{1}{s}$} (C);

\end{tikzpicture}
\end{minipage}

We then study $\mathrm{TA}^3_3$ with action ``Exclude", transitions shown below. 

\begin{minipage}{0.45\textwidth}
Conditions: $x_{1}=0$, $x_{2}=1$, $y=1$, $\mathrm{TA}^3_{4}$=E. \\
Therefore, Type I, $x_{2}=1$, $C_3 = 0$.
\end{minipage}
\begin{minipage}{0.35\textwidth}
\begin{tikzpicture}[node distance = .35cm, font=\Huge]
\tikzstyle{every node}=[scale=0.35]
\node[state] (E) at (1,1) {};
\node[state] (F) at (2,1) {};
\node[state] (G) at (3,1) {};
\node[state] (H) at (4,1) {};
\node[state] (A) at (1,2) {};
\node[state] (B) at (2,2) {};
\node[state] (C) at (3,2) {};
\node[state] (D) at (4,2) {};
\node[thick] at (0,1) {$R$};
\node[thick] at (0,2) {$P$};
\node[thick] at (1.5,3) {$I$};
\node[thick] at (3.5,3) {$E$};
\draw[dotted, thick] (2.5,0.5) -- (2.5,2.5);
\draw[every loop]
(G) edge[bend left] node [scale=1.2, above=0.1 of C] {} (H)
(H) edge[loop right] node [scale=1.2, below=0.1 of H] {$u_1\frac{1}{s}$} (H);

\end{tikzpicture}
\end{minipage}

\begin{minipage}{0.45\textwidth}
Conditions: $x_{1}=0$, $x_{2}=1$, $y=1$, $\mathrm{TA}^3_{4}$=I. \\
Therefore, Type I, $x_2=1$, $C_3 = 0$.
\end{minipage}
\begin{minipage}{0.35\textwidth}
\begin{tikzpicture}[node distance = .35cm, font=\Huge]
\tikzstyle{every node}=[scale=0.35]
\node[state] (E) at (1,1) {};
\node[state] (F) at (2,1) {};
\node[state] (G) at (3,1) {};
\node[state] (H) at (4,1) {};
\node[state] (A) at (1,2) {};
\node[state] (B) at (2,2) {};
\node[state] (C) at (3,2) {};
\node[state] (D) at (4,2) {};
\node[thick] at (0,1) {$R$};
\node[thick] at (0,2) {$P$};
\node[thick] at (1.5,3) {$I$};
\node[thick] at (3.5,3) {$E$};
\draw[dotted, thick] (2.5,0.5) -- (2.5,2.5);
\draw[every loop]
(G) edge[bend left] node [scale=1.2, above=0.1 of C] {} (H)
(H) edge[loop right] node [scale=1.2, below=0.1 of H] {$u_1\frac{1}{s}$} (H);

\end{tikzpicture}
\end{minipage}

We now study $\mathrm{TA}^3_4$ with action ``Include" and the transitions are presented below. 

\begin{minipage}{0.45\textwidth}
Condition: $x_{1}=0$, $x_{2}=1$, $y=1$, $\mathrm{TA}^3_{3}$=E. \\
Therefore, Type I, $\neg x_{2} = 0$,\\ $C_3 =x_{1} \wedge \neg x_{2}= 0$.
\end{minipage}
\begin{minipage}{0.35\textwidth}
\begin{tikzpicture}[node distance = .35cm, font=\Huge]
\tikzstyle{every node}=[scale=0.35]
\node[state] (E) at (1,1) {};
\node[state] (F) at (2,1) {};
\node[state] (G) at (3,1) {};
\node[state] (H) at (4,1) {};
\node[state] (A) at (1,2) {};
\node[state] (B) at (2,2) {};
\node[state] (C) at (3,2) {};
\node[state] (D) at (4,2) {};
\node[thick] at (0,1) {$R$};
\node[thick] at (0,2) {$P$};
\node[thick] at (1.5,3) {$I$};
\node[thick] at (3.5,3) {$E$};
\draw[dotted, thick] (2.5,0.5) -- (2.5,2.5);
\draw[every loop]
(A) edge[bend left] node [scale=1.2, above=0.1 of C] {} (B)
(B) edge[bend left] node [scale=1.2, above=0.1 of C] {~~$u_1\frac{1}{s}$} (C);

\end{tikzpicture}
\end{minipage}

\begin{minipage}{0.45\textwidth}
Conditions: $x_{1}=0$, $x_{2}=1$, $y=1$, $\mathrm{TA}^3_{3}$=I. \\
Therefore, Type I, $\neg x_{2} = 0$ , $ C_3= 0$.
\end{minipage}
\begin{minipage}{0.35\textwidth}
\begin{tikzpicture}[node distance = .35cm, font=\Huge]
\tikzstyle{every node}=[scale=0.35]
\node[state] (E) at (1,1) {};
\node[state] (F) at (2,1) {};
\node[state] (G) at (3,1) {};
\node[state] (H) at (4,1) {};
\node[state] (A) at (1,2) {};
\node[state] (B) at (2,2) {};
\node[state] (C) at (3,2) {};
\node[state] (D) at (4,2) {};
\node[thick] at (0,1) {$R$};
\node[thick] at (0,2) {$P$};
\node[thick] at (1.5,3) {$I$};
\node[thick] at (3.5,3) {$E$};
\draw[dotted, thick] (2.5,0.5) -- (2.5,2.5);
\draw[every loop]
(A) edge[bend left] node [scale=1.2, above=0.1 of C] {} (B)
(B) edge[bend left] node [scale=1.2, above=0.1 of C] {~~$u_1\frac{1}{s}$} (C);

\end{tikzpicture}
\end{minipage}

We study lastly $\mathrm{TA}^3_4$ with action ``Exclude", leading to the following transitions. 

\begin{minipage}{0.45\textwidth}
Conditions: $x_{1}=1$, $x_{2}=1$, $y=0$, $\mathrm{TA}^3_{3}$=E. \\
Therefore, Type II, $\neg x_{2} = 0$, $C_3 =x_{1}= 1$.
\end{minipage}
\begin{minipage}{0.35\textwidth}
\begin{tikzpicture}[node distance = .35cm, font=\Huge]
\tikzstyle{every node}=[scale=0.35]
\node[state] (E) at (1,1) {};
\node[state] (F) at (2,1) {};
\node[state] (G) at (3,1) {};
\node[state] (H) at (4,1) {};
\node[state] (A) at (1,2) {};
\node[state] (B) at (2,2) {};
\node[state] (C) at (3,2) {};
\node[state] (D) at (4,2) {};
\node[thick] at (0,1) {$R$};
\node[thick] at (0,2) {$P$};
\node[thick] at (1.5,3) {$I$};
\node[thick] at (3.5,3) {$E$};
\draw[dotted, thick] (2.5,0.5) -- (2.5,2.5);
\draw[every loop]
(D) edge[bend right] node [scale=1.2, above=0.1 of C] {$u_2\times1$} (C)
(C) edge[bend right] node [scale=1.2, above=0.1 of B] {} (B);

\end{tikzpicture}
\end{minipage}

\begin{minipage}{0.45\textwidth}
Conditions: $x_{1}=0$, $x_{2}=1$, $y=1$, $\mathrm{TA}^3_{3}$=E. \\
Therefore, Type I, $\neg x_{2}=0$, $C_3 =0$.
\end{minipage}
\begin{minipage}{0.35\textwidth}
\begin{tikzpicture}[node distance = .35cm, font=\Huge]
\tikzstyle{every node}=[scale=0.35]
\node[state] (E) at (1,1) {};
\node[state] (F) at (2,1) {};
\node[state] (G) at (3,1) {};
\node[state] (H) at (4,1) {};
\node[state] (A) at (1,2) {};
\node[state] (B) at (2,2) {};
\node[state] (C) at (3,2) {};
\node[state] (D) at (4,2) {};
\node[thick] at (0,1) {$R$};
\node[thick] at (0,2) {$P$};
\node[thick] at (1.5,3) {$I$};
\node[thick] at (3.5,3) {$E$};
\draw[dotted, thick] (2.5,0.5) -- (2.5,2.5);
\draw[every loop]
(G) edge[bend left] node [scale=1.2, above=0.1 of C] {} (H)
(H) edge[loop right] node [scale=1.2, below=0.1 of H] {$u_1\frac{1}{s}$} (H);

\end{tikzpicture}
\end{minipage}

\begin{minipage}{0.45\textwidth}
Conditions: $x_{1}=1$, $x_{2}=1$, $y=0$, $\mathrm{TA}^3_{3}$=I. \\
Therefore, Type II, $\neg x_{2} = 0$,
\\$C_3 =x_{1} \wedge x_{2}= 1$. 
\end{minipage}
\begin{minipage}{0.35\textwidth}
\begin{tikzpicture}[node distance = .35cm, font=\Huge]
\tikzstyle{every node}=[scale=0.35]
\node[state] (E) at (1,1) {};
\node[state] (F) at (2,1) {};
\node[state] (G) at (3,1) {};
\node[state] (H) at (4,1) {};
\node[state] (A) at (1,2) {};
\node[state] (B) at (2,2) {};
\node[state] (C) at (3,2) {};
\node[state] (D) at (4,2) {};
\node[thick] at (0,1) {$R$};
\node[thick] at (0,2) {$P$};
\node[thick] at (1.5,3) {$I$};
\node[thick] at (3.5,3) {$E$};
\draw[dotted, thick] (2.5,0.5) -- (2.5,2.5);
\draw[every loop]
(H) edge[bend right] node [scale=1.2, above=0.1 of C] {$u_2\times1$} (G)
(G) edge[bend right] node [scale=1.2, below=0.1 of B] { } (F);

\end{tikzpicture}
\end{minipage}

\begin{minipage}{0.45\textwidth}
Conditions: $x_{1}=0$, $x_{2}=1$, $y=1$, $\mathrm{TA}^3_{3}$=I. \\
Therefore, Type I, $\neg x_{2} = 0$, \\$C_3 =x_{1} \wedge x_{2}= 0$.
\end{minipage}
\begin{minipage}{0.35\textwidth}
\begin{tikzpicture}[node distance = .35cm, font=\Huge]
\tikzstyle{every node}=[scale=0.35]
\node[state] (E) at (1,1) {};
\node[state] (F) at (2,1) {};
\node[state] (G) at (3,1) {};
\node[state] (H) at (4,1) {};
\node[state] (A) at (1,2) {};
\node[state] (B) at (2,2) {};
\node[state] (C) at (3,2) {};
\node[state] (D) at (4,2) {};
\node[thick] at (0,1) {$R$};
\node[thick] at (0,2) {$P$};
\node[thick] at (1.5,3) {$I$};
\node[thick] at (3.5,3) {$E$};
\draw[dotted, thick] (2.5,0.5) -- (2.5,2.5);
\draw[every loop]
(G) edge[bend left]node [scale=1.2, above=0.1 of C] {} (H)
(H) edge[loop right] node [scale=1.2, below=0.1 of H] {$u_1\frac{1}{s}$} (H);

\end{tikzpicture}
\end{minipage}

To sum up Case 2, we understand that $\mathrm{TA}^3_{3}$ will select ``Exclude", and $T^3_{4}$ will switch between ``Include" or ``Exclude", depending on the training samples and system status.\\


\noindent {\bf Case 3} \\
Now we move onto Case 3, where $\mathrm{TA}^3_{1}$ and $\mathrm{TA}^3_{2}$ both select ``Exclude". We study the behavior of $\mathrm{TA}^3_{3}$ and $\mathrm{TA}^3_{4}$ for different sub-cases. Clearly, in this case, the first bit $x_1$ does not play any role for the output. 

We first examine $\mathrm{TA}^3_3$ with action ``Include'', providing the transitions below. 

\begin{minipage}{0.45\textwidth}
{Conditions}: $x_{1}=0$, $x_{2}=1$, $y=1$, $\mathrm{TA}^3_4$=E. \\
Therefore, Type I, $x_{2} = 1$, $C_{3}=x_{2} = 1$.
\end{minipage}
\begin{minipage}{0.35\textwidth}
\begin{tikzpicture}[node distance = .35cm, font=\Huge]
\tikzstyle{every node}=[scale=0.35]
\node[state] (E) at (1,1) {};
\node[state] (F) at (2,1) {};
\node[state] (G) at (3,1) {};
\node[state] (H) at (4,1) {};
\node[state] (A) at (1,2) {};
\node[state] (B) at (2,2) {};
\node[state] (C) at (3,2) {};
\node[state] (D) at (4,2) {};
\node[thick] at (0,1) {$R$};
\node[thick] at (0,2) {$P$};
\node[thick] at (1.5,3) {$I$};
\node[thick] at (3.5,3) {$E$};
\draw[dotted, thick] (2.5,0.5) -- (2.5,2.5);
\draw[every loop]
(F) edge[bend right] node [scale=1.2, above=0.1 of C] {} (E)
(E) edge[loop left = 45] node [scale=1.2, below=0.1 of E] {$u_1\frac{s-1}{s}$} (E);

\end{tikzpicture}
\end{minipage}

\begin{minipage}{0.45\textwidth}
Conditions: $x_{1}=0$, $x_{2}=1$, $y=1$, $\mathrm{TA}^3_{4}$=I. \\
Therefore, Type I, $x_{2} = 1$, $C_3= 0$. 
\end{minipage}
\begin{minipage}{0.35\textwidth}
\begin{tikzpicture}[node distance = .35cm, font=\Huge]
\tikzstyle{every node}=[scale=0.35]
\node[state] (E) at (1,1) {};
\node[state] (F) at (2,1) {};
\node[state] (G) at (3,1) {};
\node[state] (H) at (4,1) {};
\node[state] (A) at (1,2) {};
\node[state] (B) at (2,2) {};
\node[state] (C) at (3,2) {};
\node[state] (D) at (4,2) {};
\node[thick] at (0,1) {$R$};
\node[thick] at (0,2) {$P$};
\node[thick] at (1.5,3) {$I$};
\node[thick] at (3.5,3) {$E$};
\draw[dotted, thick] (2.5,0.5) -- (2.5,2.5);
\draw[every loop]
(A) edge[bend left] node [scale=1.2, above=0.1 of C] {} (B)
(B) edge[bend left] node [scale=1.2, above=0.1 of C] {~~$u_1\frac{1}{s}$} (C);

\end{tikzpicture}
\end{minipage}

We then study $\mathrm{TA}^3_3$ with action ``Exclude'', transitions shown below. 
In this situation, if $T_{3,4}$ is also excluded, $C_3$ is ``empty'' since all its associated TA select action ``Exclude''. To make the training proceed, according to the training rule of TM, we assign $C_3=1$ in this situation. 

\begin{minipage}{0.45\textwidth}
{Condition:} $x_{1}=0$, $x_{2}=1$, $y=1$, $\mathrm{TA}^3_{4}$=E. \\
Therefore, Type I, $x_{2} = 1$, $C_{3}=1$.
\end{minipage}
\begin{minipage}{0.35\textwidth}
\begin{tikzpicture}[node distance = .35cm, font=\Huge]
\tikzstyle{every node}=[scale=0.35]
\node[state] (E) at (1,1) {};
\node[state] (F) at (2,1) {};
\node[state] (G) at (3,1) {};
\node[state] (H) at (4,1) {};
\node[state] (A) at (1,2) {};
\node[state] (B) at (2,2) {};
\node[state] (C) at (3,2) {};
\node[state] (D) at (4,2) {};
\node[thick] at (0,1) {$R$};
\node[thick] at (0,2) {$P$};
\node[thick] at (1.5,3) {$I$};
\node[thick] at (3.5,3) {$E$};
\draw[dotted, thick] (2.5,0.5) -- (2.5,2.5);
\draw[every loop]
(D) edge[bend right] node [scale=1.2, above=0.1 of C] {$u_1\frac{s-1}{s}$} (C)
(C) edge[bend right] node [scale=1.2, above=0.1 of C] {} (B);

\end{tikzpicture}
\end{minipage}

\begin{minipage}{0.45\textwidth}
{Condition:} $x_{1}=0$, $x_{2}=0$, $y=0$, $\mathrm{TA}^3_{4}$=E. \\
Therefore, Type II, $x_{2} = 0$, $C_{3}=1$.
\end{minipage}
\begin{minipage}{0.35\textwidth}
\begin{tikzpicture}[node distance = .35cm, font=\Huge]
\tikzstyle{every node}=[scale=0.35]
\node[state] (E) at (1,1) {};
\node[state] (F) at (2,1) {};
\node[state] (G) at (3,1) {};
\node[state] (H) at (4,1) {};
\node[state] (A) at (1,2) {};
\node[state] (B) at (2,2) {};
\node[state] (C) at (3,2) {};
\node[state] (D) at (4,2) {};
\node[thick] at (0,1) {$R$};
\node[thick] at (0,2) {$P$};
\node[thick] at (1.5,3) {$I$};
\node[thick] at (3.5,3) {$E$};
\draw[dotted, thick] (2.5,0.5) -- (2.5,2.5);
\draw[every loop]
(D) edge[bend right] node [scale=1.2, above=0.1 of C] {$u_2\times 1$} (C)
(C) edge[bend right] node [scale=1.2, above=0.1 of C] {} (B);

\end{tikzpicture}
\end{minipage}

\begin{minipage}{0.45\textwidth}
Condition: $x_{1}=0$, $x_{2}=1$, $y=1$, $\mathrm{TA}^3_{4}$=I. \\
Therefore, Type I, $x_{2} = 1$,  $C_{3}=\neg x_{2} = 0$.
\end{minipage}
\begin{minipage}{0.35\textwidth}
\begin{tikzpicture}[node distance = .35cm, font=\Huge]
\tikzstyle{every node}=[scale=0.35]
\node[state] (E) at (1,1) {};
\node[state] (F) at (2,1) {};
\node[state] (G) at (3,1) {};
\node[state] (H) at (4,1) {};
\node[state] (A) at (1,2) {};
\node[state] (B) at (2,2) {};
\node[state] (C) at (3,2) {};
\node[state] (D) at (4,2) {};
\node[thick] at (0,1) {$R$};
\node[thick] at (0,2) {$P$};
\node[thick] at (1.5,3) {$I$};
\node[thick] at (3.5,3) {$E$};
\draw[dotted, thick] (2.5,0.5) -- (2.5,2.5);
\draw[every loop]
(G) edge[bend left] node [scale=1.2, above=0.1 of C] {} (H)
(H) edge[loop right] node [scale=1.2, below=0.1 of H] {$u_1\frac{1}{s}$} (H);

\end{tikzpicture}
\end{minipage}

\begin{minipage}{0.45\textwidth}
Condition: $x_{1}=0$, $x_{2}=0$, $y=0$, $\mathrm{TA}^3_{4}$=I. \\
Therefore, Type II, $x_{2} = 0$, $C_{3}=\neg x_{2} = 1$.
\end{minipage}
\begin{minipage}{0.35\textwidth}
\begin{tikzpicture}[node distance = .35cm, font=\Huge]
\tikzstyle{every node}=[scale=0.35]
\node[state] (E) at (1,1) {};
\node[state] (F) at (2,1) {};
\node[state] (G) at (3,1) {};
\node[state] (H) at (4,1) {};
\node[state] (A) at (1,2) {};
\node[state] (B) at (2,2) {};
\node[state] (C) at (3,2) {};
\node[state] (D) at (4,2) {};
\node[thick] at (0,1) {$R$};
\node[thick] at (0,2) {$P$};
\node[thick] at (1.5,3) {$I$};
\node[thick] at (3.5,3) {$E$};
\draw[dotted, thick] (2.5,0.5) -- (2.5,2.5);
\draw[every loop]
(D) edge[bend right] node [scale=1.2, above=0.1 of C] {$u_2\times 1$} (C)
(C) edge[bend right] node [scale=1.2, above=0.1 of C] {} (B);

\end{tikzpicture}
\end{minipage}

We thirdly study $\mathrm{TA}^3_4$ with action ``Include'', covering the transitions shown below. 

\begin{minipage}{0.45\textwidth}
Condition: $x_{1}=0$, $x_{2}=1$, $y=1$ $\mathrm{TA}^3_{3}$=E. \\
Therefore, Type I, $\neg x_{2} = 0$, $C_3=0$.
\end{minipage}
\begin{minipage}{0.35\textwidth}
\begin{tikzpicture}[node distance = .35cm, font=\Huge]
\tikzstyle{every node}=[scale=0.35]
\node[state] (E) at (1,1) {};
\node[state] (F) at (2,1) {};
\node[state] (G) at (3,1) {};
\node[state] (H) at (4,1) {};
\node[state] (A) at (1,2) {};
\node[state] (B) at (2,2) {};
\node[state] (C) at (3,2) {};
\node[state] (D) at (4,2) {};
\node[thick] at (0,1) {$R$};
\node[thick] at (0,2) {$P$};
\node[thick] at (1.5,3) {$I$};
\node[thick] at (3.5,3) {$E$};
\draw[dotted, thick] (2.5,0.5) -- (2.5,2.5);
\draw[every loop]
(A) edge[bend left] node [scale=1.2, above=0.1 of C] {} (B)
(B) edge[bend left] node [scale=1.2, above=0.1 of C] {$u_1\frac{1}{s}~~~~~$} (C);

\end{tikzpicture}
\end{minipage}

\begin{minipage}{0.45\textwidth}
Condition: $x_{1}=0$, $x_{2}=1$, $y=1$, $\mathrm{TA}^3_{3}$=I.\\
Therefore, Type I, $\neg x_{2} = 0$, $C_3=0$. 
\end{minipage}
\begin{minipage}{0.35\textwidth}
\begin{tikzpicture}[node distance = .35cm, font=\Huge]
\tikzstyle{every node}=[scale=0.35]
\node[state] (E) at (1,1) {};
\node[state] (F) at (2,1) {};
\node[state] (G) at (3,1) {};
\node[state] (H) at (4,1) {};
\node[state] (A) at (1,2) {};
\node[state] (B) at (2,2) {};
\node[state] (C) at (3,2) {};
\node[state] (D) at (4,2) {};
\node[thick] at (0,1) {$R$};
\node[thick] at (0,2) {$P$};
\node[thick] at (1.5,3) {$I$};
\node[thick] at (3.5,3) {$E$};
\draw[dotted, thick] (2.5,0.5) -- (2.5,2.5);
\draw[every loop]
(A) edge[bend left] node [scale=1.2, above=0.1 of C] {} (B)
(B) edge[bend left] node [scale=1.2, above=0.1 of C] {$u_1\frac{1}{s}~~~~~$} (C);

\end{tikzpicture}
\end{minipage}

Lastly, we study $\mathrm{TA}^3_4$ with action ``Exclude'', transitions shown below. 
Similarly, in this situation, when $\mathrm{TA}^3_{3}$ is also excluded, $C_3$ becomes ``empty'' again, as all its associated TAs select action ``Exclude''. Following the training rule of TM, we assign $C_3=1$.

\begin{minipage}{0.45\textwidth}
Conditions: $x_{1}=1$, $x_{2}=1$, $y=0$, $\mathrm{TA}^3_{3}$=E. \\
Therefore, Type II, $\neg x_{2} = 0$, $C_3=1$.
\end{minipage}
\begin{minipage}{0.35\textwidth}
\begin{tikzpicture}[node distance = .35cm, font=\Huge]
\tikzstyle{every node}=[scale=0.35]
\node[state] (E) at (1,1) {};
\node[state] (F) at (2,1) {};
\node[state] (G) at (3,1) {};
\node[state] (H) at (4,1) {};
\node[state] (A) at (1,2) {};
\node[state] (B) at (2,2) {};
\node[state] (C) at (3,2) {};
\node[state] (D) at (4,2) {};
\node[thick] at (0,1) {$R$};
\node[thick] at (0,2) {$P$};
\node[thick] at (1.5,3) {$I$};
\node[thick] at (3.5,3) {$E$};
\draw[dotted, thick] (2.5,0.5) -- (2.5,2.5);
\draw[every loop]
(D) edge[bend right] node [scale=1.2, above=0.1 of C] {} (C)
(C) edge[bend right] node [scale=1.2, above=0.1 of B] {$u_2\times1$} (B);

\end{tikzpicture}
\end{minipage}

\begin{minipage}{0.45\textwidth}
Conditions: $x_{1}=0$, $x_{2}=1$, $y=1$, $\mathrm{TA}^3_{3}$=E. \\
Therefore, Type I, $\neg x_{2} = 0$, $C_3=1$.
\end{minipage}
\begin{minipage}{0.35\textwidth}
\begin{tikzpicture}[node distance = .35cm, font=\Huge]
\tikzstyle{every node}=[scale=0.35]
\node[state] (E) at (1,1) {};
\node[state] (F) at (2,1) {};
\node[state] (G) at (3,1) {};
\node[state] (H) at (4,1) {};
\node[state] (A) at (1,2) {};
\node[state] (B) at (2,2) {};
\node[state] (C) at (3,2) {};
\node[state] (D) at (4,2) {};
\node[thick] at (0,1) {$R$};
\node[thick] at (0,2) {$P$};
\node[thick] at (1.5,3) {$I$};
\node[thick] at (3.5,3) {$E$};
\draw[dotted, thick] (2.5,0.5) -- (2.5,2.5);
\draw[every loop]
(G) edge[bend left] node [scale=1.2, above=0.1 of C] {} (H)
(H) edge[loop right] node [scale=1.2, below=0.1 of H] {$u_1\frac{1}{s}$} (H);

\end{tikzpicture}
\end{minipage}

\begin{minipage}{0.45\textwidth}
Conditions: $x_{1}=1$, $x_{2}=1$, $y=0$, $\mathrm{TA}^3_{3}$=I. \\
Therefore, Type II, $\neg x_{2} = 0$, $C_3=1$.
\end{minipage}
\begin{minipage}{0.35\textwidth}
\begin{tikzpicture}[node distance = .35cm, font=\Huge]
\tikzstyle{every node}=[scale=0.35]
\node[state] (E) at (1,1) {};
\node[state] (F) at (2,1) {};
\node[state] (G) at (3,1) {};
\node[state] (H) at (4,1) {};
\node[state] (A) at (1,2) {};
\node[state] (B) at (2,2) {};
\node[state] (C) at (3,2) {};
\node[state] (D) at (4,2) {};
\node[thick] at (0,1) {$R$};
\node[thick] at (0,2) {$P$};
\node[thick] at (1.5,3) {$I$};
\node[thick] at (3.5,3) {$E$};
\draw[dotted, thick] (2.5,0.5) -- (2.5,2.5);
\draw[every loop]
(D) edge[bend right] node [scale=1.2, above=0.1 of C] {} (C)
(C) edge[bend right] node [scale=1.2, above=0.1 of B] {$u_2\times1$} (B);

\end{tikzpicture}
\end{minipage}

\begin{minipage}{0.45\textwidth}
Conditions: $x_{1}=0$, $x_{2}=1$, $y=1$, $\mathrm{TA}^3_{3}$=I. \\
Therefore, Type I, $\neg x_{2} = 0$, $C_3=1$.
\end{minipage}
\begin{minipage}{0.35\textwidth}
\begin{tikzpicture}[node distance = .35cm, font=\Huge]
\tikzstyle{every node}=[scale=0.35]
\node[state] (E) at (1,1) {};
\node[state] (F) at (2,1) {};
\node[state] (G) at (3,1) {};
\node[state] (H) at (4,1) {};
\node[state] (A) at (1,2) {};
\node[state] (B) at (2,2) {};
\node[state] (C) at (3,2) {};
\node[state] (D) at (4,2) {};
\node[thick] at (0,1) {$R$};
\node[thick] at (0,2) {$P$};
\node[thick] at (1.5,3) {$I$};
\node[thick] at (3.5,3) {$E$};
\draw[dotted, thick] (2.5,0.5) -- (2.5,2.5);
\draw[every loop]
(G) edge[bend left] node [scale=1.2, above=0.1 of C] {} (H)
(H) edge[loop right] node [scale=1.2, below=0.1 of H] {$u_1\frac{1}{s}$} (H);

\end{tikzpicture}
\end{minipage}

Clearly, in Case 3, there is no absorbing state. \\


\noindent {\bf Case 4} \\
Now, we study Case 4, where $\neg x_{1}$ and $x_{1}$ both select ``Include''. For this reason, in this case, we always have $C_{3}=0$. We study firstly $\mathrm{TA}^3_3$ with action ``Include'' and the transitions are shown below. 

\begin{minipage}{0.45\textwidth}
Condition: $x_{1}=0$, $x_{2}=1$, $y=1$, $\mathrm{TA}^3_{4}$=E. \\
Therefore, Type I,  $x_{2} = 1$, $C_{3}= 0$.
\end{minipage}
\begin{minipage}{0.35\textwidth}
\begin{tikzpicture}[node distance = .35cm, font=\Huge]
\tikzstyle{every node}=[scale=0.35]
\node[state] (E) at (1,1) {};
\node[state] (F) at (2,1) {};
\node[state] (G) at (3,1) {};
\node[state] (H) at (4,1) {};
\node[state] (A) at (1,2) {};
\node[state] (B) at (2,2) {};
\node[state] (C) at (3,2) {};
\node[state] (D) at (4,2) {};
\node[thick] at (0,1) {$R$};
\node[thick] at (0,2) {$P$};
\node[thick] at (1.5,3) {$I$};
\node[thick] at (3.5,3) {$E$};
\draw[dotted, thick] (2.5,0.5) -- (2.5,2.5);
\draw[every loop]
(A) edge[bend left] node [scale=1.2, above=0.1 of C] {} (B)
(B) edge[bend left] node [scale=1.2, above=0.1 of C] {$~~~~~~u_1\frac{1}{s}$} (C);

\end{tikzpicture}
\end{minipage}

\begin{minipage}{0.45\textwidth}
Condition: $x_{1}=0$, $x_{2}=1$, $y=1$, $\mathrm{TA}^3_{4}$=I. \\
Therefore, Type I, $x_{2} = 1$, $C_{3}= 0$.
\end{minipage}
\begin{minipage}{0.35\textwidth}
\begin{tikzpicture}[node distance = .35cm, font=\Huge]
\tikzstyle{every node}=[scale=0.35]
\node[state] (E) at (1,1) {};
\node[state] (F) at (2,1) {};
\node[state] (G) at (3,1) {};
\node[state] (H) at (4,1) {};
\node[state] (A) at (1,2) {};
\node[state] (B) at (2,2) {};
\node[state] (C) at (3,2) {};
\node[state] (D) at (4,2) {};
\node[thick] at (0,1) {$R$};
\node[thick] at (0,2) {$P$};
\node[thick] at (1.5,3) {$I$};
\node[thick] at (3.5,3) {$E$};
\draw[dotted, thick] (2.5,0.5) -- (2.5,2.5);
\draw[every loop]
(A) edge[bend left] node [scale=1.2, above=0.1 of C] {} (B)
(B) edge[bend left] node [scale=1.2, above=0.1 of C] {$~~~~~u_1\frac{1}{s}$} (C);

\end{tikzpicture}
\end{minipage}

We secondly study $\mathrm{TA}^3_{3}$ with action ``Exclude''.

\begin{minipage}{0.45\textwidth}
Condition: $x_{1}=0$, $x_{2}=1$, $y=1$, $\mathrm{TA}^3_{4}$=E. \\
Therefore, Type I, $x_{2} = 1$, $C_{3}= 0$.
\end{minipage}
\begin{minipage}{0.35\textwidth}
\begin{tikzpicture}[node distance = .35cm, font=\Huge]
\tikzstyle{every node}=[scale=0.35]
\node[state] (E) at (1,1) {};
\node[state] (F) at (2,1) {};
\node[state] (G) at (3,1) {};
\node[state] (H) at (4,1) {};
\node[state] (A) at (1,2) {};
\node[state] (B) at (2,2) {};
\node[state] (C) at (3,2) {};
\node[state] (D) at (4,2) {};
\node[thick] at (0,1) {$R$};
\node[thick] at (0,2) {$P$};
\node[thick] at (1.5,3) {$I$};
\node[thick] at (3.5,3) {$E$};
\draw[dotted, thick] (2.5,0.5) -- (2.5,2.5);
\draw[every loop]
(G) edge[bend left] node [scale=1.2, above=0.1 of C] {} (H)
(H) edge[loop right] node [scale=1.2, below=0.1 of H] {$~~~~~~u_1\frac{1}{s}$} (H);

\end{tikzpicture}
\end{minipage}

\begin{minipage}{0.45\textwidth}
Condition: $x_{1}=0$, $x_{2}=1$, $y=1$, $\mathrm{TA}^3_{4}$=I. \\
Therefore, Type I, $x_{2} = 1$, $C_{3}= 0$.
\end{minipage}
\begin{minipage}{0.35\textwidth}
\begin{tikzpicture}[node distance = .35cm, font=\Huge]
\tikzstyle{every node}=[scale=0.35]
\node[state] (E) at (1,1) {};
\node[state] (F) at (2,1) {};
\node[state] (G) at (3,1) {};
\node[state] (H) at (4,1) {};
\node[state] (A) at (1,2) {};
\node[state] (B) at (2,2) {};
\node[state] (C) at (3,2) {};
\node[state] (D) at (4,2) {};
\node[thick] at (0,1) {$R$};
\node[thick] at (0,2) {$P$};
\node[thick] at (1.5,3) {$I$};
\node[thick] at (3.5,3) {$E$};
\draw[dotted, thick] (2.5,0.5) -- (2.5,2.5);
\draw[every loop]
(G) edge[bend left] node [scale=1.2, above=0.1 of C] {} (H)
(H) edge[loop right] node [scale=1.2, below=0.1 of H] {$~~~~~u_1\frac{1}{s}$} (H);

\end{tikzpicture}
\end{minipage}

Now, we study $\mathrm{TA}^3_{4}$ with action ``Include''.

\begin{minipage}{0.45\textwidth}
Condition: $x_{1}=0$, $x_{2}=1$, $y=1$, $\mathrm{TA}^3_{3}$=E. \\
Therefore, Type I, $\neg x_{2} = 0$, $C_{3}= 0$.
\end{minipage}
\begin{minipage}{0.35\textwidth}
\begin{tikzpicture}[node distance = .35cm, font=\Huge]
\tikzstyle{every node}=[scale=0.35]
\node[state] (E) at (1,1) {};
\node[state] (F) at (2,1) {};
\node[state] (G) at (3,1) {};
\node[state] (H) at (4,1) {};
\node[state] (A) at (1,2) {};
\node[state] (B) at (2,2) {};
\node[state] (C) at (3,2) {};
\node[state] (D) at (4,2) {};
\node[thick] at (0,1) {$R$};
\node[thick] at (0,2) {$P$};
\node[thick] at (1.5,3) {$I$};
\node[thick] at (3.5,3) {$E$};
\draw[dotted, thick] (2.5,0.5) -- (2.5,2.5);
\draw[every loop]
(A) edge[bend left] node [scale=1.2, above=0.1 of C] {} (B)
(B) edge[bend left] node [scale=1.2, above=0.1 of C] {$~~~~u_1\frac{1}{s}$} (C);

\end{tikzpicture}
\end{minipage}

\begin{minipage}{0.45\textwidth}
Condition: $x_{1}=0$, $x_{2}=1$, $y=1$, $\mathrm{TA}^3_{3}$=I. \\
Therefore, Type I,  $\neg x_{2} = 0$, $C_{3}= 0$.
\end{minipage}
\begin{minipage}{0.35\textwidth}
\begin{tikzpicture}[node distance = .35cm, font=\Huge]
\tikzstyle{every node}=[scale=0.35]
\node[state] (E) at (1,1) {};
\node[state] (F) at (2,1) {};
\node[state] (G) at (3,1) {};
\node[state] (H) at (4,1) {};
\node[state] (A) at (1,2) {};
\node[state] (B) at (2,2) {};
\node[state] (C) at (3,2) {};
\node[state] (D) at (4,2) {};
\node[thick] at (0,1) {$R$};
\node[thick] at (0,2) {$P$};
\node[thick] at (1.5,3) {$I$};
\node[thick] at (3.5,3) {$E$};
\draw[dotted, thick] (2.5,0.5) -- (2.5,2.5);
\draw[every loop]
(A) edge[bend left] node [scale=1.2, above=0.1 of C] {} (B)
(B) edge[bend left] node [scale=1.2, above=0.1 of C] {~~~~$u_1\frac{1}{s}$} (C);

\end{tikzpicture}
\end{minipage}

We lastly study $\mathrm{TA}^3_{4}$ with action ``Exclude''.

\begin{minipage}{0.45\textwidth}
Condition: $x_1=0$, $x_2=1$, $y=1$, $\mathrm{TA}^3_{3}$=E.\\
Therefore, Type I, $\neg x_{2} = 0$, $C_{3}= 1$.
\end{minipage}
\begin{minipage}{0.35\textwidth}
\begin{tikzpicture}[node distance = .35cm, font=\Huge]
\tikzstyle{every node}=[scale=0.35]
\node[state] (E) at (1,1) {};
\node[state] (F) at (2,1) {};
\node[state] (G) at (3,1) {};
\node[state] (H) at (4,1) {};
\node[state] (A) at (1,2) {};
\node[state] (B) at (2,2) {};
\node[state] (C) at (3,2) {};
\node[state] (D) at (4,2) {};
\node[thick] at (0,1) {$R$};
\node[thick] at (0,2) {$P$};
\node[thick] at (1.5,3) {$I$};
\node[thick] at (3.5,3) {$E$};
\draw[dotted, thick] (2.5,0.5) -- (2.5,2.5);
\draw[every loop]
(G) edge[bend left] node [scale=1.2, above=0.1 of C] {} (H)
(H) edge[loop right] node [scale=1.2, below=0.1 of H] {$u_1\frac{1}{s}$} (H);

\end{tikzpicture}
\end{minipage}

\begin{minipage}{0.45\textwidth}
Condition: $x_1=0$, $x_2=1$, $y=1$, $\mathrm{TA}^3_{3}$=I.\\
Therefore, Type I, $\neg x_{2} = 0$, $C_{3}= 0$.
\end{minipage}
\begin{minipage}{0.35\textwidth}
\begin{tikzpicture}[node distance = .35cm, font=\Huge]
\tikzstyle{every node}=[scale=0.35]
\node[state] (E) at (1,1) {};
\node[state] (F) at (2,1) {};
\node[state] (G) at (3,1) {};
\node[state] (H) at (4,1) {};
\node[state] (A) at (1,2) {};
\node[state] (B) at (2,2) {};
\node[state] (C) at (3,2) {};
\node[state] (D) at (4,2) {};
\node[thick] at (0,1) {$R$};
\node[thick] at (0,2) {$P$};
\node[thick] at (1.5,3) {$I$};
\node[thick] at (3.5,3) {$E$};
\draw[dotted, thick] (2.5,0.5) -- (2.5,2.5);
\draw[every loop]
(G) edge[bend left] node [scale=1.2, above=0.1 of C] {} (H)
(H) edge[loop right] node [scale=1.2, below=0.1 of H] {$\frac{1}{s}$} (H);

\end{tikzpicture}
\end{minipage}

To summarize Case 4, we realize that both $\mathrm{TA}^3_{3}$ and $\mathrm{TA}^3_{4}$ will converge to ``Exclude''.

Based on the above analyses, we can summarize the transitions of $\mathrm{TA}^3_{3}$ and $\mathrm{TA}^3_{4}$, given different configurations of $\mathrm{TA}^3_{1}$ and $\mathrm{TA}^3_{2}$ in Case 1-Case 4 (i.e., given four different combinations of $x_{1}$ and $\neg x_{1}$). The arrow shown below means the direction of transitions. 

\textbf{Scenario 1:} Study $\mathrm{TA}^3_{3}$ = I and $\mathrm{TA}^3_{4}$ = I.

\begin{minipage}{0.5\textwidth}
\textbf{Case 1:} we can see that \\
$\mathrm{TA}^3_{3}$ $\rightarrow$ E \\
$\mathrm{TA}^3_{4}$ $\rightarrow$ E 
\end{minipage}
\begin{minipage}{0.5\textwidth}
\textbf{Case 2:} we can see that \\
$\mathrm{TA}^3_{3}$ $\rightarrow$ E \\
$\mathrm{TA}^3_{4}$ $\rightarrow$ E 
\end{minipage}

\vspace{.5cm}

\begin{minipage}{0.5\textwidth}
\textbf{Case 3:} we can see that \\
$\mathrm{TA}^3_{3}$ $\rightarrow$ E \\
$\mathrm{TA}^3_{4}$ $\rightarrow$ E 
\end{minipage}
\begin{minipage}{0.5\textwidth}
\textbf{Case 4:} we can see that \\
$\mathrm{TA}^3_{3}$ $\rightarrow$ E \\
$\mathrm{TA}^3_{4}$ $\rightarrow$ E 
\end{minipage}

From the facts presented above, it is confirmed that regardless of the state of $\mathrm{TA}^3_{1}$ and $\mathrm{TA}^3_{2}$,
if $\mathrm{TA}^3_{3}$=I and $\mathrm{TA}^3_{4}$=I, they ($\mathrm{TA}^3_{3}$ and $\mathrm{TA}^3_{4}$) will move towards the opposite half of the state space (i.e., towards ``Exclude" ), away from the current state. So, the state with $\mathrm{TA}^3_{3}$=I and $\mathrm{TA}^3_{4}$=I is not absorbing. 

\textbf{Scenario 2:} Study $\mathrm{TA}^3_{3}$ = I and $\mathrm{TA}^3_{4}$= E.

\begin{minipage}{0.5\textwidth}
\textbf{Case 1:} we can see that \\
$\mathrm{TA}^3_{3}$ $\rightarrow$ I \\
$\mathrm{TA}^3_{4}$ $\rightarrow$ E 
\end{minipage}
\begin{minipage}{0.5\textwidth}
\textbf{Case 2:} we can see that \\
$\mathrm{TA}^3_{3}$ $\rightarrow$ E \\
$\mathrm{TA}^3_{4}$ $\rightarrow$ I, E 
\end{minipage}

\vspace{.5cm}

\begin{minipage}{0.5\textwidth}
\textbf{Case 3:} we can see that \\
$\mathrm{TA}^3_{3}$ $\rightarrow$ I \\
$\mathrm{TA}^3_{4}$ $\rightarrow$ I, E 
\end{minipage}
\begin{minipage}{0.5\textwidth}
\textbf{Case 4:} we can see that \\
$\mathrm{TA}^3_{3}$ $\rightarrow$ E \\
$\mathrm{TA}^3_{4}$ $\rightarrow$ E 
\end{minipage}

Clearly, in this scenario, the starting point of $\mathrm{TA}^3_{3}$ is ``Include" and that of $\mathrm{TA}^3_{4}$ is ``Exclude". In Case 1, where $\mathrm{TA}^3_{1}$ = E and $\mathrm{TA}^3_{2}$ = I, $\mathrm{TA}^3_{3}$ will move towards ``Include" and $\mathrm{TA}^3_{4}$ will move towards ``Exclude". Therefore, given $\mathrm{TA}^3_{1}$ = E and $\mathrm{TA}^3_{2}$ = I hold, $\mathrm{TA}^3_{3}$ in ``Include" and $\mathrm{TA}^3_{4}$ in ``Exclude" are absorbing actions, while in other cases (i.e., in other configurations of $\mathrm{TA}^3_{1}$ and $\mathrm{TA}^3_{2}$), actions ``Include" and ``Exclude" for $\mathrm{TA}^3_{3}$ and $\mathrm{TA}^3_{4}$ are not absorbing. 

\textbf{Scenario 3:} Study $\mathrm{TA}^3_{3}$ = E and $\mathrm{TA}^3_{4}$ = I.

\begin{minipage}{0.5\textwidth}
\textbf{Case 1:} we can see that \\
$\mathrm{TA}^3_{3}$ $\rightarrow$ I, E \\
$\mathrm{TA}^3_{4}$ $\rightarrow$ E 
\end{minipage}
\begin{minipage}{0.5\textwidth}
\textbf{Case 2:} we can see that \\
$\mathrm{TA}^3_{3}$ $\rightarrow$ E \\
$\mathrm{TA}^3_{4}$ $\rightarrow$ E 
\end{minipage}

\vspace{.5cm}

\begin{minipage}{0.5\textwidth}
\textbf{Case 3:} we can see that \\
$\mathrm{TA}^3_{3}$ $\rightarrow$ I, E \\
$\mathrm{TA}^3_{4}$ $\rightarrow$ E 
\end{minipage}
\begin{minipage}{0.5\textwidth}
\textbf{Case 4:} we can see that \\
$\mathrm{TA}^3_{3}$ $\rightarrow$ E \\
$\mathrm{TA}^3_{4}$ $\rightarrow$ E 
\end{minipage}
From the transitions of $\mathrm{TA}^3_{3}$ and $\mathrm{TA}^3_{4}$ in Scenario 3, we can conclude that the state with $\mathrm{TA}^3_{3}$ = E and $\mathrm{TA}^3_{4}$ = I is not absorbing.

\textbf{Scenario 4:} Study $\mathrm{TA}^3_{3}$ = E and $\mathrm{TA}^3_{4}$ = E.

\begin{minipage}{0.5\textwidth}
\textbf{Case 1:} we can see that \\
$\mathrm{TA}^3_{3}$ $\rightarrow$ I \\
$\mathrm{TA}^3_{4}$ $\rightarrow$ E 
\end{minipage}
\begin{minipage}{0.5\textwidth}
\textbf{Case 2:} we can see that \\
$\mathrm{TA}^3_{3}$ $\rightarrow$ E \\
$\mathrm{TA}^3_{4}$ $\rightarrow$ I, E 
\end{minipage}

\vspace{.5cm}

\begin{minipage}{0.5\textwidth}
\textbf{Case 3:} we can see that \\
$\mathrm{TA}^3_{3}$ $\rightarrow$I \\
$\mathrm{TA}^3_{4}$ $\rightarrow$I, E
\end{minipage}
\begin{minipage}{0.5\textwidth}
\textbf{Case 4:} we can see that \\
$\mathrm{TA}^3_{3}$ $\rightarrow$ E \\
$\mathrm{TA}^3_{4}$ $\rightarrow$ E 
\end{minipage}

From the transitions of $\mathrm{TA}^3_{3}$ and $\mathrm{TA}^3_{4}$ in Scenario 4, we can conclude that the state with $\mathrm{TA}^3_{3}$ = E and $\mathrm{TA}^3_{4}$ = E is also absorbing in Case 4, when $\mathrm{TA}^3_{1}$ and $\mathrm{TA}^3_{2}$ have both actions as Include. 

From the above analysis, we can conclude that when we freeze $\mathrm{TA}^3_{1}$ and $\mathrm{TA}^3_{2}$ with certain actions, there are altogether two absorbing cases. (1) Given that $\mathrm{TA}^3_{1}$ selects ``Exclude" and $\mathrm{TA}^3_{2}$ selects ``Include", $\mathrm{TA}^3_{3}$ selects ``Include" and $\mathrm{TA}^3_{4}$ selects ``Exclude". (2) Given that $\mathrm{TA}^3_{1}$ selects ``Include" and $\mathrm{TA}^3_{2}$ selects ``Include", $\mathrm{TA}^3_{3}$ selects ``Exclude" and $\mathrm{TA}^3_{4}$ selects ``Exclude".

So far, we have finished half of the proof. More specifically,  we have studied the case when we freeze the transitions of TAs for the first input bit ($\mathrm{TA}^3_{1}$ and $\mathrm{TA}^3_{2}$) and examine the transitions of TAs for the second input bit (study $\mathrm{TA}^3_{3}$ and $\mathrm{TA}^3_{4}$ by frozen one of them and illustrate the transitions of the other one). In the following paragraphs, we will move on to the second half, i.e., we freeze $\mathrm{TA}^3_{4}$ and $\mathrm{TA}^3_{4}$ and study the transition of $\mathrm{TA}^3_{1}$ and $\mathrm{TA}^3_{2}$. 
The analysis procedure is similar to the one that has been done in the above paragraphs, as seen in the following.
\begin{enumerate}
\item We freeze $\mathrm{TA}^3_{3}$ and $\mathrm{TA}^3_{4}$ as ``Exclude" and ``Include". In this case, the second bit becomes $\neg x_{2}$. There are four sub-cases for $\mathrm{TA}^3_{1}$ and $\mathrm{TA}^3_{2}$. 
\begin{itemize}
\item We study the transition of $\mathrm{TA}^3_{1}$ when it has the action ``Include" as its current action, given different input training samples shown in Table \ref{xorlogichalf} and different actions of $\mathrm{TA}^3_{2}$ (i.e., when the action of $\mathrm{TA}^3_{2}$ is frozen as ``Include" or ``Exclude"). 
\item We study the transition of $\mathrm{TA}^3_{1}$ when it has the action ``Exclude" as its current action, given different input training samples shown in Table \ref{xorlogichalf} and different actions of $\mathrm{TA}^3_{2}$ (i.e., when the action of $\mathrm{TA}^3_{2}$ is frozen as ``Include" or ``Exclude"). 
\item We study the transition of $\mathrm{TA}^3_{2}$ when it has the action ``Include" as its current action, given different input training samples shown in Table \ref{xorlogichalf} and different actions of $\mathrm{TA}^3_{1}$ (i.e., when the action of $\mathrm{TA}^3_{1}$ is frozen as ``Include" or ``Exclude"). 
\item We study the transition of $\mathrm{TA}^3_{2}$ when it has the action ``Exclude" as its current action, given different input training samples shown in Table \ref{xorlogichalf} and different actions of $\mathrm{TA}^3_{1}$ (i.e., when the action of $\mathrm{TA}^3_{1}$ is frozen as ``Include" or ``Exclude"). 
\end{itemize}
\item We freeze $\mathrm{TA}^3_{3}$ and $\mathrm{TA}^3_{4}$ as ``Include" and ``Exclude". In this case, the second literal becomes $x_{2}$. The sub-cases for $\mathrm{TA}^3_{1}$ and $\mathrm{TA}^3_{2}$ are identical to the sub-cases in the previous case. 

\item We freeze $\mathrm{TA}^3_{3}$ and $\mathrm{TA}^3_{3}$ as ``Exclude" and ``Exclude". In this case, the second bit is excluded and will not influence the output. 
The sub-cases for $\mathrm{TA}^3_{1}$ and $\mathrm{TA}^3_{2}$ are identical to the sub-cases in the previous case. 
\item We freeze $\mathrm{TA}^3_{3}$ and $\mathrm{TA}^3_{4}$ as ``Include" and ``Include". In this case, the second bit will always be 0 and therefore $C_3=0$. The sub-cases for $\mathrm{TA}^3_{1}$ and $\mathrm{TA}^3_{2}$ are identical to the sub-cases in the previous case. 
\end{enumerate} 

When we go through all the possible transitions, we can conclude that (1) when $\mathrm{TA}^3_{3}$ and $\mathrm{TA}^3_{4}$ are frozen as ``Include'' and ``Exclude'', $\mathrm{TA}^3_{1}$=E and $\mathrm{TA}^3_{2}$=I are absorbing. (2) When $\mathrm{TA}^3_{3}$ and $\mathrm{TA}^3_{4}$ are frozen as ``Include'' and ``Include'', $\mathrm{TA}^3_{1}$=E and $\mathrm{TA}^3_{2}$=E are also absorbing. The detailed proof of this statement can be found in Appendix \ref{halfproof11}. 

When we look at the absorbing cases that are conditioned upon the frozen actions of both ``Include'' (i.e., when $\mathrm{TA}^3_{3}$ and $\mathrm{TA}^3_{4}$ are frozen as Include and Include, and when $\mathrm{TA}^3_{1}$ and $\mathrm{TA}^3_{2}$ are frozen as Include and Include), we can easily conclude that those conditions cannot be fulfilled and thus the corresponding system states are not absorbing. The main reason is that the condition of both ``Include'' is surely not absorbing and such state cannot be frozen as a stable state. Differently, for the state with $\mathrm{TA}^3_{1}$=E, $\mathrm{TA}^3_{2}$=I, $\mathrm{TA}^3_{3}$=I and $\mathrm{TA}^3_{4}$=E, the learning mechanism together with the training samples will reinforce the individual TA to move to a deeper state for the selected actions. Therefore, the state is indeed an absorbing state and it is the only absorbing state in the system. 
Therefore, given infinite time horizon, the TM will converge to the expected logic, which is half of the XOR-relation. We thus prove Lemma \ref{full1}. 
\end{proof}

Following the same strategy used for the proof in Lemma \ref{full1}, we can prove Lemma \ref{full2}. We do not detail the proof for the sake of brevity. 

\subsubsection{Proof of Lemma \ref{full3}}
\begin{proof}
To prove that the system is recurrent, we just need to show that the only absorbing state in the TM based on the training samples from Table \ref{xorlogichalf1} disappears when the training samples in Table \ref{xorlogicfull} is given. More specifically, once the TM is trained based on Table \ref{xorlogichalf1}, we will show that the absorbing state disappears when training sample $x_1=1$, $x_2=0$, and $y=1$ is given in addition. To validate this point, we can simply show that one of the TA, i.e., $\mathrm{TA}^3_3$ with an ``Including'' action will not move only towards ``Include'' when the output of the first
literal is $\neg x_1$, and when $\mathrm{TA}^3_4$ is ``Exclude'', given the newly added training sample. The diagram below indicates the transition of $\mathrm{TA}^3_3$ in the above mentioned condition when the new training sample is given. 


\begin{minipage}{0.45\textwidth}
Condition: $x_{1}=1$, $x_{2}=0$, $y=1$, $T^3_{4}$=E.\\
Therefore, we have Type I feedback for\\
literal $x_{2}=0$, clause $C_{3}= \neg x_{1} \wedge x_{2} = 0$.\\ 
\end{minipage}
\begin{minipage}{0.35\textwidth}
\begin{tikzpicture}[node distance = .35cm, font=\Huge]
\tikzstyle{every node}=[scale=0.35]
\node[state] (E) at (1,1) {};
\node[state] (F) at (2,1) {};
\node[state] (G) at (3,1) {};
\node[state] (H) at (4,1) {};
\node[state] (A) at (1,2) {};
\node[state] (B) at (2,2) {};
\node[state] (C) at (3,2) {};
\node[state] (D) at (4,2) {};
\node[thick] at (0,1) {$R$};
\node[thick] at (0,2) {$P$};
\node[thick] at (1.5,3) {$I$};
\node[thick] at (3.5,3) {$E$};
\draw[dotted, thick] (2.5,0.5) -- (2.5,2.5);
\draw[every loop]
(A) edge[bend left] node [scale=1.2, above=0.1 of C] {} (B)
(B) edge[bend left] node [scale=1.2, above=0.1 of C] {$~~~~~u_1\frac{1}{s}$} (C);

\end{tikzpicture}
\end{minipage}

Clearly, when $x_1=1$, $x_2=0$, and $y=1$ is given in addition, $\mathrm{TA}^3_3$ has a non-zero probability to move towards ``Exclude''. Therefore, ``Include'' is not the only direction that $\mathrm{TA}^3_3$ moves to upon the input, and this will make the state not absorbing any longer. For other states, the newly added training sample will not remove any transition from the previous case. Therefore, the system will not have any new absorbing state. Given the non-zero probability of returning II, IE, EI, EE, these system states are recurrent. 
\end{proof}
To summarize so far, from Lemma \ref{full1} and Lemma \ref{full2}, we understand that each individual clause is able to learn any sub-pattern from the XOR-relation. However, when the full logic of XOR is given, as shown in Table \ref{xorlogicfull}, the system state II, IE, EI and EE becomes recurrent. In other words, each clause may stay in any of the above states in probability that is less than 1. For this reason, it is necessary to have a parameter to guide the learning process of different clauses so that they can converge or cover different sub-patterns. The parameter is $T$, and the analysis is given presently. 

\subsubsection{Proof of Lemma \ref{full4} and Lemma \ref{full5}}
\begin{proof}
Consider $m$ clauses in the TM, and $m>T$.
Let's study parameter $u_1$ first. When $f_{\sum}(\mathcal{C}_i)$ is zero, $u_1=1/2$. When $0<f_{\sum}(\mathcal{C}_i)< T$, $u_1=\frac{T - \mathrm{max}(-T, \mathrm{min}(T, f_{\sum}(\mathcal{C}_i)))}{2T}=\frac{T - f_{\sum}(\mathcal{C}_i)}{2T}$, and $u_1$ monotonically decreases as $f_{\sum}(\mathcal{C}_i)$ increases. When $T\geq m$, $u_1=0$ holds. For $u_2$, when $f_{\sum}(\mathcal{C}_i)=0$, $u_2=1/2$. When $0<f_{\sum}(\mathcal{C}_i)< T$, 
$u_2=\frac{T + f_{\sum}(\mathcal{C}_i)}{2T}$, and it monotonically increases when $f_{\sum}(\mathcal{C}_i)$ grows. When $T\geq m$, $u_2=1$ holds. 

Clearly, $u_2>0$ always holds regardless the value of $f_{\sum}(\mathcal{C}_i)$. Therefore, the variations of $f_{\sum}(\mathcal{C}_i)$ does not change the directions of system transitions upon Type II feedback. 

To guarantee $u_1>0$, it is required $0\leq f_{\sum}(\mathcal{C}_i)<T$. When this condition fulfills, the variations of $f_{\sum}(\mathcal{C}_i)$ does not change the directions of system transitions upon Type I feedback.

According to the system updating rule of TM, once the Type I or Type II feedback is triggered, the clauses are updated independently. Due to the recurrent property, each clause will transit among II, IE, EI and EE, as long as $0\leq f_{\sum}(\mathcal{C}_i)<T$. In other words, each clause will move among those four status, until $T$ clauses follow the same sub-pattern. Therefore, consider infinite time horizon, the event that $T$, $T<m$, clauses appear in the same sub-pattern will almost surely happen.  

\end{proof}

When $f_{\sum}(\mathcal{C}_i)=T$, it means there are $T$ clauses that have followed the same sub-pattern. Once this happens, it means that there are certain number of clauses that have learnt a certain sub-pattern already and we would like to encourage the other clauses to learn the other sub-pattern. This is to be shown in Lemma \ref{full5}. 

\begin{proof}
Lemma \ref{full5} is self-evident. Clearly, when $f_{\sum}(\mathcal{C}_i)=T$, $u_1=0$ holds and thus Type I feedback will not be generated to the TM for any updates when the same training sample is given. 
For example, without loss of generalization, we assume there are $T$ clauses that have converged to $c_{3}=\neg x_{1} \wedge x_{2}=1$. When another training sample $x_1=0$, $x_2=1$, $y=1$ is given, Type I feedback will not be given any longer because $u_1=0$. Therefore, such input training sample is filtered out and the system will only update for Type I feedback when training sample $x_1=1$, $x_2=0$, $y=1$ is given (i.e., the TM will update based on the samples shown in Table~\ref{xorlogichalf1}, guiding the TM to learn the other sub-pattern. 
\end{proof}

\subsubsection{Proof of Theorem \ref{full6}}
\begin{proof} 
Based on Lemmas \ref{full1}-\ref{full5}, we can prove Theorem \ref{full6}. 


Clearly, $u_1$ monotonically decreases as $f_{\sum}(\mathcal{C}_i)$ increases. When the number of clauses that follow a certain sub-pattern increases, due to the monotonicity of $u_1$, the impact of such training samples becomes less and less to the system. 
Ultimately, when $f_{\sum}(\mathcal{C}_i)=T$ holds, the system will not be updated for the learnt sub-pattern. Therefore, at this particular time, only the other sub-pattern will be used for system training. This behavior can avoid the situation that many clauses learn one sub-pattern but the other sub-pattern is not learnt. 

Now let's consider the case where the selected $T$ is less than or equal to half of the number of the clauses, i.e., $T\leq m/2$. 
From Lemma \ref{full4}, we know that the system will eventually have $T$ clauses that follows one sub-pattern. Once this happens, due to Lemmas \ref{full1} and \ref{full2}, we understand that all clauses will move towards the only absorbing state of the system corresponds to the other sub-pattern. As soon as the number of the clauses that follow each sub-pattern reaches $T$, the system will not be updated any longer for any training input. In this situation, the system have been absorbed to the point where both sub-patterns of XOR have been learnt.


We thus complete the proof. 
\end{proof}
Note that the clauses that already follow a sub-pattern may get out of the pattern when training samples from the other sub-pattern are given. So even if there are $T$ clauses that are have learnt a sub-pattern, the number of learnt clauses may decrease in front of training samples from the other sub-pattern. However, as soon as the sum of the clauses for the same sub-pattern is less than $T$, the corresponding Type I feedback can be triggered again for this sub-pattern, leading the clauses to possibly move back to the sub-pattern again. Nevertheless, when the number of the clauses that follow each sub-pattern reaches $T$, the system is converged to the intended pattern. 

Note also that when $T$ is less than half of the number of the clauses, there are $m-2T$ clauses that do not follow to any of the sub-patterns when the system stops updating. Therefore, those clauses that do not follow the correct XOR sub-patterns may involve incorrect output if they all happen to follow a certain incorrect logic and the sum of them happens to be greater than or equal to $T$. For this reason, even if the absorbing states exist, the number of clauses, the threshold value $T$ need to be carefully chosen. 

\begin{remark}
\label{remark1}
The system configuration described in Theorem~\ref{theorem1} is a special case of Theorem~\ref{full6}. 
\end{remark}

\begin{remark}
\label{remark2}
When $T$ is greater than half of the number of the clauses, i.e., $T>m/2$, the system will not have any absorbing state. We conjuncture that the system can still learn the two sub-patterns in a balanced manner, as long as $T$ is not configured too close to the total number of clauses $m$ and when $s$ is sufficiently large. 
\end{remark}

To address the conjecture in Remark \ref{remark2}, we now study the system behavior when $T>m/2$. According to Lemma \ref{full3}, at a certain time slot, there will be $T$ clauses following a certain sub-pattern, named sub-pattern 1. In this situation, the corresponding training samples for sub-pattern~1 will not trigger any Type I feedback to the system and therefore the other training samples will guide the remaining clauses to learn towards the other sub-pattern, named sub-pattern 2. As the training process continues, all the clauses (including those $T$ clauses who have learned sub-pattern 1, although the action probability ($u_1/s$) is low) will lean towards sub-pattern 2. 

Because $m-T<T$, the clauses following sub-pattern 2 will not block the training samples for sub-pattern 2 even if there are $m-T$ clauses that follow this sub-pattern. Therefore, as more training samples are given, the remaining clauses will eventually move out of sub-pattern 1 or their current states and then move towards sub-pattern 2, until $T$ clauses follow sub-pattern~2 before the training samples for the sub-pattern 2 are completely blocked. Then the clauses will again move towards sub-pattern 1. 


The system will thus oscillate and will not be absorbed to a certain state. Nevertheless, with high probability, the system will have at least $m-T$ clauses that follow each sub-pattern, especially when $s$ is large. According to the updating rule of Type I feedback, the probability for an included literal in a clause that has learnt a certain sub-pattern to change towards the other sub-pattern is $u_1/s$, which only happens when a training sample of the other sub-pattern is given. On the other hand, the reward is $u_1\frac{s-1}{s}$ if a training sample of the same sub-pattern is received. Therefore, when $s$ is large, the clause is less likely to get out of the learnt sub-pattern due to a training sample from the conflicting sub-pattern. In other words, the system will most probably have at least $m-T$ clauses for each sub-pattern after training, and in the worst case, $m-2(m-T)$ clauses will appear in states other than any intended sub-pattern, depending on the stop time of training. To summarize, if we select the threshold as $m-T$, the two sub-patterns of the XOR-relation can still be followed with high probability. 

The statement of Remark \ref{remark2} has been validated via simulations\footnote{The code for XOR-relation can be found at \url{https://github.com/cair/TM-XOR-proof}.}. In the simulations, we configure 5 clauses, $s=10$, and $T=3$, and we train the TM for sufficient large number of samples. From the simulation results, we have confirmed that from all observations, each sub-pattern of XOR has been covered by at least two clauses. Not surprisingly, there are indeed a few observed cases that two clauses have followed the two distinct sub-patterns respectively and one clause is in a non-intended pattern, mostly excluding the two input bits. Nevertheless, because all-exclude clauses are ignored after training (Eq.  (\ref{eqn:clause1})),  the trained TM can still give the correct output of the XOR-relation in the latter case. 
\section{Conclusions}\label{conclusions}
In this paper, we complete the proof on the convergence of the XOR-relation. Firstly, we demonstrate that TM can almost surely learn the XOR-relation with the simplest structure. Thereafter, we analyze the dynamics of the system and reveal the relationship between the number of clauses and the threshold parameter $T$ when multiple sub-patterns exist. The analytical results not only confirm the convergence property of TM in XOR-relation, they also illustrate the role of the threshold parameter $T$ when multiple sub-patterns exist. 

\bibliographystyle{IEEEtran}
\bibliography{main}
\setcounter{section}{0}
\section{Appendix 1}\label{calculationDTMC}

\begin{algorithm}

\begin{algorithmic}[1]
\State \textbf{Input:} Training data (XOR: $\bold{X}$, $y$), $m = 2$, $o = 2$, Target $T$, Precision $s$
\State \textbf{Compute:} Transition Probability Matrix $M$
\State \textbf{Output:} Limiting Matrix 
\State \textbf{Initialize:}\\
\tab - $M$ \Comment{$M$ requires $2^8$ by $2^8$ space} \\
\tab - $\mathrm{TA}^t$ \Comment{TA action combinations of both clauses at time $t$} \\
\tab - $\mathrm{TA}^{t+1}$ \Comment{TA action combinations of both clauses at time $t+1$}
\State \textbf{Function:}
\For{$j = 1, ...,2^8$}
\For{$i = 1, ...,2^8$}
\For{$input = 1, ...,4$} \Comment{XOR contains 4 training samples}
\State $C_1 \leftarrow $ \textbf{Compute clause output} \Comment{Based on $input$ and $\mathrm{TA}^t$ actions in $C_1$}
\State $C_2 \leftarrow $ \textbf{Compute clause output} \Comment{Based on $input$ and $\mathrm{TA}^t$ actions in $C_2$}
\If{y = 1}
\State $FeedbackType$ = I
\State $P_{act}$ $\leftarrow$ $\frac{T-max(-T,min(T,C_1+C_2))}{2T}$
\Else
\State $FeedbackType$ = II
\State $P_{act}$ $\leftarrow$ $\frac{T+max(-T,min(T,C_1+C_2))}{2T}$
\EndIf\\
\If{$\mathrm{TA}^t_{C_1} = TA^{t+1}_{C_1}$} \Comment{No change in clause 1 from $t$ to $t+1$}
\State $Change$ = False
\For{$literal = 1, ..., 4$} \Comment{Each clause has four TAs}
\State $P_{feed} \leftarrow$ \textbf{Update} \Comment{Table \ref{feedbackprobabilities}}
\EndFor
\State $P_{transC_1} \leftarrow $ \textbf{Compute} \Comment{Eq.~(\ref{ptransnochange})}
\Else \Comment{There is a change in clause 1 from $t$ to $t+1$}
\For{$literal = 1, ..., 4$} \Comment{Each clause has four TAs}
\If{$\mathrm{TA}^t_{C_1}(literal) = TA^{t+1}_{C_1}(literal)$}
\State $Change$ = False
\Else
\State $Change$ = True
\EndIf
\State $P_{feed} \leftarrow$ \textbf{Update} \Comment{Table~\ref{feedbackprobabilities}}
\EndFor
\State $P_{transC_1} \leftarrow $ \textbf{Compute} \Comment{Eq.~(\ref{ptranschange})}
\EndIf\\
\If{$\mathrm{TA}^t_{C_2} = TA^{t+1}_{C_2}$} \Comment{No change in clause 2 from $t$ to $t+1$}
\State $Change$ = False
\For{$literal = 1, ..., 4$} \Comment{Each clause has four TAs}
\State $P_{feed} \leftarrow$ \textbf{Update} \Comment{Table~\ref{feedbackprobabilities}}
\EndFor
\State $P_{transC_2} \leftarrow $ \textbf{Compute} \Comment{Eq.~(\ref{ptransnochange})}
\Else \Comment{There is a change in clause 2 from $t$ to $t+1$}
\For{$literal = 1, ..., 4$} \Comment{Each clause has four TAs}
\If{$\mathrm{TA}^t_{C_2}(literal) = TA^{t+1}_{C_2}(literal)$}
\State $Change$ = False
\Else
\State $Change$ = True
\EndIf
\caption{Algorithm for calculating the transitions of the DTMC.} 
\algstore{myalg}
\end{algorithmic}
\end{algorithm}
\setcounter{algorithm}{0} 
\begin{algorithm} 
\begin{algorithmic} [1] 
\algrestore{myalg}
\State $P_{feed} \leftarrow$ \textbf{Update} \Comment{Table~\ref{feedbackprobabilities}}
\EndFor
\State $P_{transC_2} \leftarrow $ \textbf{Compute} \Comment{Eq.~(\ref{ptranschange})}
\EndIf\\
\State $P_{TotalTrans} \leftarrow$ \textbf{Compute} \Comment{Eq.~(\ref{ptota})}
\EndFor 
\EndFor 
\State $M[i,j] \leftarrow$ \textbf{Update} \Comment{$M[i,j] = P_{TotalTrans}$}
\EndFor\\ 
\State \textbf{End Function}
\State \textbf{Return: [Transpose of $M]^\infty$ }
\Comment{Compute the transpose of $M$ and return the result of power infinity}
\end{algorithmic}
\caption{Algorithm for calculating the transitions of the DTMC.}
\label{alg:transition}
\end{algorithm}

The step-by-step procedure for calculating the limiting matrix of the DTMC for the XOR-relation can be found in Algorithm \ref{alg:transition}. For ease of observation, we summarize Type I and Type II feedback in one table, as shown in Table~\ref{feedbackprobabilities}. 

\textbf{Line 1:} The algorithm takes the set of training examples ($\bold{X}$, $y$). The hyper parameters, i.e., the number of clauses, $m$,  the number of features, $o$, the target parameter, $T$, and the precision parameter $s$ have to be set at the start of the algorithm.

\textbf{Lines 2-3:} The goal of the algorithm is to compute transition probabilities for all possible transitions of system states from time $t$ to $t+1$, i.e., for 1-step, and store them in matrix $M$. Then the limiting matrix, which is the infinite power of the transpose of $M$, will be returned. 

\textbf{Lines 4-7:} Matrices $M$, $TA^t$, and $TA^{t+1}$ are initialized. $M$ is the transition probability matrix, with size $2^8\times2^8$.  $TA^t$ is the TA action combination at time $t$, i.e., the system state at time $t$. $TA^{t+1}$ represents the system state at time $t+1$.

\textbf{Lines 8-62:} Each clause contains four TAs. Hence, there are eight TAs in total in two clauses. An action of a TA can be either $include$ or $exclude$. Therefore, there are $2^8$ possible action combinations at time $t$. There is a possibility of changing from the TA action combination (the system state) at time $t$ to a TA action combination from the set of $2^8$ possible action combinations at time $t+1$. Here, we compute the transition probability of moving from any TA action combination at time $t$ to another possible TA action combination at time $t+1$, and the probability is called $P_{TotalTrans}$. Accordingly, the matrix $M$, which is composed by $P_{TotalTrans}$,  is updated.

\begin{itemize}
    \item \textbf{Line 9:} $j$ represents the index of a certain TA action combination at time $t$.
    \item \textbf{Line 10:} $i$ represents the index of a certain TA action combination at time $t+1$.
    \item \textbf{Line 11:} In order to calculate the transition probabilities at each possible transition, the TM receives all possible input samples of XOR, which is four.
    \item \textbf{Lines 12-13:} Based on inputs and the system state at time $t$, clause outputs of clauses $C_1$ and $C_2$ are calculated.
    \item \textbf{Lines 14-20:} The type of the feedback, i.e., Type I or Type II and the activation probability, $P_{act}$ for receiving a feedback by the clause are determined. 
    \item \textbf{Lines 22-27:} Those lines calculate the probability of transitions when there is no change of TA actions from time $t$ to time $t+1$ for clause 1.  Here, $\mathrm{TA}^t_{C_1}$ is the TA combination of clause 1 at time $t$. The feedback probability that each TA in clause 1 receives, i.e., $P_{feed(TA_k)}$, $k\in\{1, \ldots 4\}$, is selected from Table \ref{feedbackprobabilities}. When there is no transition, possible feedback options are \textit{reward} or \textit{inaction}. For example, for $\mathrm{TA}^1_1$, if the $FeedbackType$ is Type I, the output of $C_1$ is 1, the literal of the considered TA is 1, and the current TA decision is to \textit{include} the corresponding literal in the clause. From the first column in the probability section of the Table \ref{feedbackprobabilities}, we can find the feedback probability for this particular TA, $P_{feed(TA_1)}$ as $\frac{(s-1)}{s} + \frac{1}{s}$. Following the same concept, if the TA decision is to \textit{exclude} the corresponding literal from the clause, the feedback probability is therefore $\frac{1}{s}$.  The $P_{feed}$ in \textbf{Line 25} is the multiplication of probabilities received by all TAs in clause 1, as the feedback is given independently to each TA following a certain probability. 
    
    \begin{equation}\label{pfeedback}
        P_{feed} = P_{feed(TA_1)} \times P_{feed(TA_2)} \times P_{feed(TA_3)} \times P_{feed(TA_4)}.
    \end{equation}
    
    $P_{transC_1}$ is then calculated as,
    \begin{equation}\label{ptransnochange}
        P_{transC_1} = (P_{act} \times P_{feed}) + (1-P_{act}),
    \end{equation}
    \noindent where $P_{act}$ represents the probability that the feedback is given. It is $u_1$ for Type I feedback and $u_2$ for Type II feedback. 
  
   \item \textbf{Lines 28-38:} Those lines calculate the probability of transitions when there is a change of TA actions from time $t$ to time $t+1$ for clause 1.   
   Here, in order to calculate the correct transition probability, we need to check for each literal if there is any change, and calculate feedback probability for each of them accordingly. Then $P_{feed}$ can be calculated in the same way to $P_{feed}$ in Eq.(\ref{pfeedback}). However, the calculation of $P_{transC_1}$ is slightly different from Eq.~(\ref{ptransnochange}),  as shown in Eq.~(\ref{ptranschange}). Here $P_{act}$ is multiplied by $P_{feed}$ as any feedback must be initiated for any change in clause 1. 
    \begin{equation}\label{ptranschange}
        P_{transC_1} = P_{act} \times P_{feed}.
    \end{equation}
    \item \textbf{Lines 40-56:} Similar calculations on clause 2 are performed to find $P_{transC_2}$
    \item \textbf{Line 58:} The total transition probability of moving from the current TA state combination, i.e., at time $t$, to the next time instant, i.e., at time $t+1$, is then calculated, as,
    \begin{equation}\label{ptota}
        P_{TotalTrans} = 0.25 \times P_{transC_1} \times P_{transC_2}.
    \end{equation}
    \noindent where $0.25$ means that the probability of any type of training sample is equal.
    \item \textbf{Line 61:} At the end of each $i$ in the loop of $j$, the transition probability matrix, $M$ is updated, i.e., $M[i,j] = P_{TotalTrans}$.
\end{itemize}
 \textbf{Line 65:} The algorithm returns the transition probability matrix after it is transposed and multiplied by itself with infinity number of times, 
 which is indeed the limiting matrix.

\begin{table}[t]
\centering
\newcolumntype{P}[1]{>{\centering\arraybackslash}p{#1}}
\begin{tabular}{|c|c|c|c|c|c|c|c|c|c|c|}
\hline
\multicolumn{3}{|c|}{Feedback Type} & \multicolumn{4}{c|}{I} & \multicolumn{4}{c|}{II}  \\ \hline
\multicolumn{3}{|c|}{Clause Output} & \multicolumn{2}{c|}{1} & \multicolumn{2}{c|}{0} & \multicolumn{2}{c|}{1} & \multicolumn{2}{c|}{0} \\ \hline
\multicolumn{3}{|c|}{Literal Value} & 1 & 0 & 1 & 0 & 1 & 0 & 1 & 0 \\ \hline
\hline
\multirow{6}{*}{\rotatebox[origin=c]{90}{Current State}} & \multirow{3}{*}{Include} & Reward Probability & (s-1)/s & NA & 0 & 0 & 0 & NA & 0 & 0 \\
                        &                     & Inaction Probability & 1/s & NA & (s-1)/s & (s-1)/s & 1 & NA & 1 & 1\\
                        &                     & Penalty Probability & 0 & NA & 1/s & 1/s & 0 & NA & 0 & 0 \\\cline{2-11}
                        & \multirow{3}{*}{Exclude} & Reward Probability & 0 & 1/s & 1/s & 1/s & 0 & 0 & 0 & 0\\
                        &                     & Inaction Probability & 1/s & (s-1)/s & (s-1)/s & (s-1)/s & 1 & 0 & 1 & 1\\
                        &                     & Penalty Probability & (s-1)/s & 0 & 0 & 0 & 0 & 1 & 0 & 0       \\ 
\hline
\end{tabular}
\caption{Type I and Type II feedback.}\label{feedbackprobabilities}
\end{table}

\section{Appendix 2}\label{halfproof11}

In this appendix, we freeze the actions of $\mathrm{TA}^3_3$ and $\mathrm{TA}^3_4$ and study the transitions of $\mathrm{TA}^3_1$ and $\mathrm{TA}^3_2$. \\
\\
\noindent {\bf Case 1} \\

Here $\mathrm{TA}^3_3$ is frozen as ``Exclude" and $\mathrm{TA}^3_4$ is ``Include". In this situation, the output of $\mathrm{TA}^3_3$ and $\mathrm{TA}^3_4$ is $\neg x_{2}$.

{\bf We firstly study $\mathrm{TA}^3_{1}$ with action ``Include''.}

\begin{minipage}{0.45\textwidth}
Condition: $x_{1}=0$, $x_{2}=1$, $y=1$, $\mathrm{TA}^3_{2}$=E. \\
Therefore, Type I, $x_{1} = 0$, \\$C_3=x_{1} \wedge \neg x_{2}=0$.
\end{minipage}
\begin{minipage}{0.35\textwidth}
\begin{tikzpicture}[node distance = .35cm, font=\Huge]
\tikzstyle{every node}=[scale=0.35]
\node[state] (E) at (1,1) {};
\node[state] (F) at (2,1) {};
\node[state] (G) at (3,1) {};
\node[state] (H) at (4,1) {};
\node[state] (A) at (1,2) {};
\node[state] (B) at (2,2) {};
\node[state] (C) at (3,2) {};
\node[state] (D) at (4,2) {};
\node[thick] at (0,1) {$R$};
\node[thick] at (0,2) {$P$};
\node[thick] at (1.5,3) {$I$};
\node[thick] at (3.5,3) {$E$};
\draw[dotted, thick] (2.5,0.5) -- (2.5,2.5);
\draw[every loop]
(A) edge[bend left] node [scale=1.2, above=0.1 of C] {} (B)
(B) edge[bend left] node [scale=1.2, above=0.1 of C] {$~~~~~u_1\frac{1}{s}$} (C);

\end{tikzpicture}
\end{minipage}

\begin{minipage}{0.45\textwidth}
Condition: $x_{1}=0$, $x_{2}=1$, $y=1$, $\mathrm{TA}^3_{2}$=I. \\
Therefore, Type I, $x_{1} = 0$, $C_3=0$. 
\end{minipage}
\begin{minipage}{0.35\textwidth}
\begin{tikzpicture}[node distance = .35cm, font=\Huge]
\tikzstyle{every node}=[scale=0.35]
\node[state] (E) at (1,1) {};
\node[state] (F) at (2,1) {};
\node[state] (G) at (3,1) {};
\node[state] (H) at (4,1) {};
\node[state] (A) at (1,2) {};
\node[state] (B) at (2,2) {};
\node[state] (C) at (3,2) {};
\node[state] (D) at (4,2) {};
\node[thick] at (0,1) {$R$};
\node[thick] at (0,2) {$P$};
\node[thick] at (1.5,3) {$I$};
\node[thick] at (3.5,3) {$E$};
\draw[dotted, thick] (2.5,0.5) -- (2.5,2.5);
\draw[every loop]
(A) edge[bend left] node [scale=1.2, above=0.1 of C] {} (B)
(B) edge[bend left] node [scale=1.2, above=0.1 of C] {$~~~~~u_1\frac{1}{s}$} (C);

\end{tikzpicture}
\end{minipage}

{\bf We now study $\mathrm{TA}^3_{1}$ with action ``Exclude''.}

\begin{minipage}{0.45\textwidth}
Condition: $x_{1}=0$, $x_{2}=1$, $y=1$, $\mathrm{TA}^3_{2}$=E. \\
Therefore, Type I, $x_{1} = 0$, $C_3=\neg x_{2}=0$. 
\end{minipage}
\begin{minipage}{0.35\textwidth}
\begin{tikzpicture}[node distance = .35cm, font=\Huge]
\tikzstyle{every node}=[scale=0.35]
\node[state] (E) at (1,1) {};
\node[state] (F) at (2,1) {};
\node[state] (G) at (3,1) {};
\node[state] (H) at (4,1) {};
\node[state] (A) at (1,2) {};
\node[state] (B) at (2,2) {};
\node[state] (C) at (3,2) {};
\node[state] (D) at (4,2) {};
\node[thick] at (0,1) {$R$};
\node[thick] at (0,2) {$P$};
\node[thick] at (1.5,3) {$I$};
\node[thick] at (3.5,3) {$E$};
\draw[dotted, thick] (2.5,0.5) -- (2.5,2.5);
\draw[every loop]
(G) edge[bend left] node [scale=1.2, above=0.1 of C] {} (H)
(H) edge[loop right] node [scale=1.2, below=0.1 of H] {$u_1\frac{1}{s}$} (H);

\end{tikzpicture}
\end{minipage}

\begin{minipage}{0.45\textwidth}
Condition: $x_{1}=0$, $x_{2}=0$, $y=0$, $\mathrm{TA}^3_{2}$=E. \\
Therefore, Type II, $x_{1} =0$, $C_3=\neg x_{2}=1$. 
\end{minipage}
\begin{minipage}{0.35\textwidth}
\begin{tikzpicture}[node distance = .35cm, font=\Huge]
\tikzstyle{every node}=[scale=0.35]
\node[state] (E) at (1,1) {};
\node[state] (F) at (2,1) {};
\node[state] (G) at (3,1) {};
\node[state] (H) at (4,1) {};
\node[state] (A) at (1,2) {};
\node[state] (B) at (2,2) {};
\node[state] (C) at (3,2) {};
\node[state] (D) at (4,2) {};
\node[thick] at (0,1) {$R$};
\node[thick] at (0,2) {$P$};
\node[thick] at (1.5,3) {$I$};
\node[thick] at (3.5,3) {$E$};
\draw[dotted, thick] (2.5,0.5) -- (2.5,2.5);
\draw[every loop]
(D) edge[bend right] node [scale=1.2, above=0.1 of C] {$u_2\times 1$} (C)
(C) edge[bend right] node [scale=1.2, above=0.1 of B] {} (B);

\end{tikzpicture}
\end{minipage}

\begin{minipage}{0.45\textwidth}
Condition: $x_{1}=0$, $x_{2}=1$, $y=1$, $\mathrm{TA}^3_{2}$=I. \\
Therefore, Type I, $x_{1}=0$, \\$C_3=\neg x_{1} \wedge \neg x_{2}=0$. 
\end{minipage}
\begin{minipage}{0.35\textwidth}
\begin{tikzpicture}[node distance = .35cm, font=\Huge]
\tikzstyle{every node}=[scale=0.35]
\node[state] (E) at (1,1) {};
\node[state] (F) at (2,1) {};
\node[state] (G) at (3,1) {};
\node[state] (H) at (4,1) {};
\node[state] (A) at (1,2) {};
\node[state] (B) at (2,2) {};
\node[state] (C) at (3,2) {};
\node[state] (D) at (4,2) {};
\node[thick] at (0,1) {$R$};
\node[thick] at (0,2) {$P$};
\node[thick] at (1.5,3) {$I$};
\node[thick] at (3.5,3) {$E$};
\draw[dotted, thick] (2.5,0.5) -- (2.5,2.5);
\draw[every loop]
(G) edge[bend left] node [scale=1.2, above=0.1 of C] {} (H)
(H) edge[loop right] node [scale=1.2, below=0.1 of H] {$u_1\frac{1}{s}$} (H);

\end{tikzpicture}
\end{minipage}

\begin{minipage}{0.45\textwidth}
Condition: $x_{1}=0$, $x_{2}=0$, $y=0$, $\mathrm{TA}^3_{2}$=I. \\
Therefore, Type II, $x_{1} = 0$, \\$C_3=\neg x_{1} \wedge \neg x_{2}=1$.
\end{minipage}
\begin{minipage}{0.35\textwidth}
\begin{tikzpicture}[node distance = .35cm, font=\Huge]
\tikzstyle{every node}=[scale=0.35]
\node[state] (E) at (1,1) {};
\node[state] (F) at (2,1) {};
\node[state] (G) at (3,1) {};
\node[state] (H) at (4,1) {};
\node[state] (A) at (1,2) {};
\node[state] (B) at (2,2) {};
\node[state] (C) at (3,2) {};
\node[state] (D) at (4,2) {};
\node[thick] at (0,1) {$R$};
\node[thick] at (0,2) {$P$};
\node[thick] at (1.5,3) {$I$};
\node[thick] at (3.5,3) {$E$};
\draw[dotted, thick] (2.5,0.5) -- (2.5,2.5);
\draw[every loop]
(D) edge[bend right] node [scale=1.2, above=0.1 of C] {} (C)
(C) edge[bend right] node [scale=1.2, above=0.1 of B] {$u_2\times1$} (B);
\end{tikzpicture}
\end{minipage}


{\bf We thirdly study $\mathrm{TA}^3_{2}$ with action ``Include''.} 

\begin{minipage}{0.45\textwidth}
Condition: $x_{1}=0$, $x_{2}=1$, $y=1$, $\mathrm{TA}^3_{1}$=E. \\
Therefore, Type I, $\neg x_{1}=1$, \\$C_3=\neg x_{1} \wedge \neg x_{2}=0$
\end{minipage}
\begin{minipage}{0.35\textwidth}
\begin{tikzpicture}[node distance = .35cm, font=\Huge]
\tikzstyle{every node}=[scale=0.35]
\node[state] (E) at (1,1) {};
\node[state] (F) at (2,1) {};
\node[state] (G) at (3,1) {};
\node[state] (H) at (4,1) {};
\node[state] (A) at (1,2) {};
\node[state] (B) at (2,2) {};
\node[state] (C) at (3,2) {};
\node[state] (D) at (4,2) {};
\node[thick] at (0,1) {$R$};
\node[thick] at (0,2) {$P$};
\node[thick] at (1.5,3) {$I$};
\node[thick] at (3.5,3) {$E$};
\draw[dotted, thick] (2.5,0.5) -- (2.5,2.5);
\draw[every loop]
(A) edge[bend left] node [scale=1.2, above=0.1 of C] {} (B)
(B) edge[bend left] node [scale=1.2, above=0.1 of C] {$~~~~~u_1\frac{1}{s}$} (C);

\end{tikzpicture}
\end{minipage}

\begin{minipage}{0.45\textwidth}
Condition: $x_{1}=0$, $x_{2}=1$, $y=1$, $\mathrm{TA}^3_1$=I. \\
Therefore, Type I, $\neg x_{1}=1$, $C_3=0$
\end{minipage}
\begin{minipage}{0.35\textwidth}
\begin{tikzpicture}[node distance = .35cm, font=\Huge]
\tikzstyle{every node}=[scale=0.35]
\node[state] (E) at (1,1) {};
\node[state] (F) at (2,1) {};
\node[state] (G) at (3,1) {};
\node[state] (H) at (4,1) {};
\node[state] (A) at (1,2) {};
\node[state] (B) at (2,2) {};
\node[state] (C) at (3,2) {};
\node[state] (D) at (4,2) {};
\node[thick] at (0,1) {$R$};
\node[thick] at (0,2) {$P$};
\node[thick] at (1.5,3) {$I$};
\node[thick] at (3.5,3) {$E$};
\draw[dotted, thick] (2.5,0.5) -- (2.5,2.5);
\draw[every loop]
(A) edge[bend left] node [scale=1.2, above=0.1 of C] {} (B)
(B) edge[bend left] node [scale=1.2, above=0.1 of C] {$~~~~~u_1\frac{1}{s}$} (C);
\end{tikzpicture}
\end{minipage}


{\bf We finally study $\mathrm{TA}^3_{2}$ with action ``Exclude''.}

\begin{minipage}{0.45\textwidth}
Condition: $x_{1}=0$, $x_{2}=1$, $y=1$, $\mathrm{TA}^3_{1}$=E. \\
Therefore, Type I, $\neg x_{1}=1$, $C_3=\neg x_{2}=0$
\end{minipage}
\begin{minipage}{0.35\textwidth}
\begin{tikzpicture}[node distance = .35cm, font=\Huge]
\tikzstyle{every node}=[scale=0.35]
\node[state] (E) at (1,1) {};
\node[state] (F) at (2,1) {};
\node[state] (G) at (3,1) {};
\node[state] (H) at (4,1) {};
\node[state] (A) at (1,2) {};
\node[state] (B) at (2,2) {};
\node[state] (C) at (3,2) {};
\node[state] (D) at (4,2) {};
\node[thick] at (0,1) {$R$};
\node[thick] at (0,2) {$P$};
\node[thick] at (1.5,3) {$I$};
\node[thick] at (3.5,3) {$E$};
\draw[dotted, thick] (2.5,0.5) -- (2.5,2.5);
\draw[every loop]
(G) edge[bend left] node [scale=1.2, above=0.1 of C] {} (H)
(H) edge[loop right] node [scale=1.2, below=0.1 of H] {$~~u_1\frac{1}{s}$} (H);

\end{tikzpicture}
\end{minipage}

\begin{minipage}{0.45\textwidth}
Condition: $x_{1}=0$, $x_{2}=1$, $y=1$, $\mathrm{TA}^3_{1}$=I. \\
Therefore, Type I, $\neg x_{1}=1$, \\$C_3= x_{1} \wedge \neg x_{2}=0$. 
\end{minipage}
\begin{minipage}{0.35\textwidth}
\begin{tikzpicture}[node distance = .35cm, font=\Huge]
\tikzstyle{every node}=[scale=0.35]
\node[state] (E) at (1,1) {};
\node[state] (F) at (2,1) {};
\node[state] (G) at (3,1) {};
\node[state] (H) at (4,1) {};
\node[state] (A) at (1,2) {};
\node[state] (B) at (2,2) {};
\node[state] (C) at (3,2) {};
\node[state] (D) at (4,2) {};
\node[thick] at (0,1) {$R$};
\node[thick] at (0,2) {$P$};
\node[thick] at (1.5,3) {$I$};
\node[thick] at (3.5,3) {$E$};
\draw[dotted, thick] (2.5,0.5) -- (2.5,2.5);
\draw[every loop]
(G) edge[bend left] node [scale=1.2, above=0.1 of C] {} (H)
(H) edge[loop right] node [scale=1.2, below=0.1 of H] {$u_1\frac{1}{s}$} (H);

\end{tikzpicture}
\end{minipage}



\noindent {\bf Case 2} \\
Here $\mathrm{TA}^3_3$ is frozen as ``Include" and $\mathrm{TA}^3_4$ is as ``Exclude". In this situation, the output of $\mathrm{TA}^3_3$ and $\mathrm{TA}^3_4$ is $x_{2}$.

{\bf We now study $\mathrm{TA}^3_{1}$ with action ``Include''.}

\begin{minipage}{0.45\textwidth}
Condition: $x_{1}=0$, $x_{2}=1$, $y=1$, $\mathrm{TA}^3_{2}$=E. \\
Therefore, Type I, $x_{1} = 0$, $x_{1} = 0$, $C_{3}=x_{1} \wedge x_{2}=0$. 
\end{minipage}
\begin{minipage}{0.35\textwidth}
\begin{tikzpicture}[node distance = .35cm, font=\Huge]
\tikzstyle{every node}=[scale=0.35]
\node[state] (E) at (1,1) {};
\node[state] (F) at (2,1) {};
\node[state] (G) at (3,1) {};
\node[state] (H) at (4,1) {};
\node[state] (A) at (1,2) {};
\node[state] (B) at (2,2) {};
\node[state] (C) at (3,2) {};
\node[state] (D) at (4,2) {};
\node[thick] at (0,1) {$R$};
\node[thick] at (0,2) {$P$};
\node[thick] at (1.5,3) {$I$};
\node[thick] at (3.5,3) {$E$};
\draw[dotted, thick] (2.5,0.5) -- (2.5,2.5);
\draw[every loop]
(A) edge[bend left] node [scale=1.2, above=0.1 of C] {} (B)
(B) edge[bend left] node [scale=1.2, above=0.1 of C] {$~~~~~u_1\frac{1}{s}$} (C);

\end{tikzpicture}
\end{minipage}

\begin{minipage}{0.45\textwidth}
Condition: $x_{1}=0$, $x_{2}=1$, $y=1$, $\mathrm{TA}^3_{2}$=I. \\
Therefore, Type I, $x_{1} = 0$, \\$C_{3}=\neg x_{1} \wedge x_{1}\wedge x_{2}=1$. 
\end{minipage}
\begin{minipage}{0.35\textwidth}
\begin{tikzpicture}[node distance = .35cm, font=\Huge]
\tikzstyle{every node}=[scale=0.35]
\node[state] (E) at (1,1) {};
\node[state] (F) at (2,1) {};
\node[state] (G) at (3,1) {};
\node[state] (H) at (4,1) {};
\node[state] (A) at (1,2) {};
\node[state] (B) at (2,2) {};
\node[state] (C) at (3,2) {};
\node[state] (D) at (4,2) {};
\node[thick] at (0,1) {$R$};
\node[thick] at (0,2) {$P$};
\node[thick] at (1.5,3) {$I$};
\node[thick] at (3.5,3) {$E$};
\draw[dotted, thick] (2.5,0.5) -- (2.5,2.5);
\draw[every loop]
(A) edge[bend left] node [scale=1.2, above=0.1 of C] {} (B)
(B) edge[bend left] node [scale=1.2, above=0.1 of C] {$~~~~~u_1\frac{1}{s}$} (C);

\end{tikzpicture}
\end{minipage}


\newpage
{\bf We now study $\mathrm{TA}^3_{1}$ with action ``Exclude''.}

\begin{minipage}{0.45\textwidth}
Condition: 
$x_{1}=0$, $x_{2}=1$, $y=1$, $\mathrm{TA}^3_{2}$=E. \\
Therefore, Type I, $x_{1} = 0$, $C_{3}=x_{1}=1$. 
\end{minipage}
\begin{minipage}{0.35\textwidth}
\begin{tikzpicture}[node distance = .35cm, font=\Huge]
\tikzstyle{every node}=[scale=0.35]
\node[state] (E) at (1,1) {};
\node[state] (F) at (2,1) {};
\node[state] (G) at (3,1) {};
\node[state] (H) at (4,1) {};
\node[state] (A) at (1,2) {};
\node[state] (B) at (2,2) {};
\node[state] (C) at (3,2) {};
\node[state] (D) at (4,2) {};
\node[thick] at (0,1) {$R$};
\node[thick] at (0,2) {$P$};
\node[thick] at (1.5,3) {$I$};
\node[thick] at (3.5,3) {$E$};
\draw[dotted, thick] (2.5,0.5) -- (2.5,2.5);
\draw[every loop]
(G) edge[bend left] node [scale=1.2, above=0.1 of C] {} (H)
(H) edge[loop right] node [scale=1.2, below=0.1 of H] {$u_1\frac{1}{s}$} (H);

\end{tikzpicture}
\end{minipage}

\begin{minipage}{0.45\textwidth}
Condition: $x_{1}=0$, $x_{2}=1$, $y=1$, $\mathrm{TA}^3_{2}$=I. \\
Therefore, Type I, $x_{1} = 0$, \\$C_{3}=\neg x_{1}\wedge x_{2}=1$. 
\end{minipage}
\begin{minipage}{0.35\textwidth}
\begin{tikzpicture}[node distance = .35cm, font=\Huge]
\tikzstyle{every node}=[scale=0.35]
\node[state] (E) at (1,1) {};
\node[state] (F) at (2,1) {};
\node[state] (G) at (3,1) {};
\node[state] (H) at (4,1) {};
\node[state] (A) at (1,2) {};
\node[state] (B) at (2,2) {};
\node[state] (C) at (3,2) {};
\node[state] (D) at (4,2) {};
\node[thick] at (0,1) {$R$};
\node[thick] at (0,2) {$P$};
\node[thick] at (1.5,3) {$I$};
\node[thick] at (3.5,3) {$E$};
\draw[dotted, thick] (2.5,0.5) -- (2.5,2.5);
\draw[every loop]
(G) edge[bend left] node [scale=1.2, above=0.1 of C] {} (H)
(H) edge[loop right] node [scale=1.2, below=0.1 of H] {$u_1\frac{1}{s}$} (H);

\end{tikzpicture}
\end{minipage}



{\bf We now study $\mathrm{TA}^3_{2}$ with action ``Include''.}

\begin{minipage}{0.45\textwidth}
Condition: 
$x_{1}=0$, $x_{2}=1$, $y=1$, $\mathrm{TA}^3_{1}$=E. \\
Therefore, Type I, $\neg x_{1} = 1$, \\$C_{3}=\neg x_{1}\wedge x_{2}=1$.
\end{minipage}
\begin{minipage}{0.35\textwidth}
\begin{tikzpicture}[node distance = .35cm, font=\Huge]
\tikzstyle{every node}=[scale=0.35]
\node[state] (E) at (1,1) {};
\node[state] (F) at (2,1) {};
\node[state] (G) at (3,1) {};
\node[state] (H) at (4,1) {};
\node[state] (A) at (1,2) {};
\node[state] (B) at (2,2) {};
\node[state] (C) at (3,2) {};
\node[state] (D) at (4,2) {};
\node[thick] at (0,1) {$R$};
\node[thick] at (0,2) {$P$};
\node[thick] at (1.5,3) {$I$};
\node[thick] at (3.5,3) {$E$};
\draw[dotted, thick] (2.5,0.5) -- (2.5,2.5);
\draw[every loop]
(F) edge[bend right] node [scale=1.2, above=0.1 of C] {} (E)
(E) edge[loop left] node [scale=1.2, below=0.1 of C] {$u_1\frac{s-1}{s}$} (E);

\end{tikzpicture}
\end{minipage}

\begin{minipage}{0.45\textwidth}
Condition: $x_{1}=0$, $x_{2}=1$, $y=1$, $\mathrm{TA}^3_{1}$=I. \\
Therefore, Type I, $\neg x_{1} = 1$, $C_{3}=0$. 
\end{minipage}
\begin{minipage}{0.35\textwidth}
\begin{tikzpicture}[node distance = .35cm, font=\Huge]
\tikzstyle{every node}=[scale=0.35]
\node[state] (E) at (1,1) {};
\node[state] (F) at (2,1) {};
\node[state] (G) at (3,1) {};
\node[state] (H) at (4,1) {};
\node[state] (A) at (1,2) {};
\node[state] (B) at (2,2) {};
\node[state] (C) at (3,2) {};
\node[state] (D) at (4,2) {};
\node[thick] at (0,1) {$R$};
\node[thick] at (0,2) {$P$};
\node[thick] at (1.5,3) {$I$};
\node[thick] at (3.5,3) {$E$};
\draw[dotted, thick] (2.5,0.5) -- (2.5,2.5);
\draw[every loop]
(A) edge[bend left] node [scale=1.2, above=0.1 of C] {} (B)
(B) edge[bend left] node [scale=1.2, above=0.1 of C] {~~$u_1\frac{1}{s}$} (C);

\end{tikzpicture}
\end{minipage}


{\bf We now study $\mathrm{TA}^3_{2}$ with action ``Exclude''.} 

\begin{minipage}{0.45\textwidth}
Condition: $x_{1}=1$, $x_{2}=1$, $y=0$, $\mathrm{TA}^3_{1}$=E. \\
Therefore, Type II, $\neg x_{1} = 0$, $C_3= x_{2}=1$. 
\end{minipage}
\begin{minipage}{0.35\textwidth}
\begin{tikzpicture}[node distance = .35cm, font=\Huge]
\tikzstyle{every node}=[scale=0.35]
\node[state] (E) at (1,1) {};
\node[state] (F) at (2,1) {};
\node[state] (G) at (3,1) {};
\node[state] (H) at (4,1) {};
\node[state] (A) at (1,2) {};
\node[state] (B) at (2,2) {};
\node[state] (C) at (3,2) {};
\node[state] (D) at (4,2) {};
\node[thick] at (0,1) {$R$};
\node[thick] at (0,2) {$P$};
\node[thick] at (1.5,3) {$I$};
\node[thick] at (3.5,3) {$E$};
\draw[dotted, thick] (2.5,0.5) -- (2.5,2.5);
\draw[every loop]
(D) edge[bend right] node [scale=1.2, above=0.1 of C] {} (C)
(C) edge[bend right] node [scale=1.2, above=0.1 of B] {$u_2\times1$} (B);

\end{tikzpicture}
\end{minipage}

\begin{minipage}{0.45\textwidth}
Condition: 
$x_{1}=0$, $x_{2}=1$, $y=1$, $\mathrm{TA}^3_{1}$=E \\
Therefore, Type I, $\neg x_{1} = 1$, $C_3= x_{2}=1$.
\end{minipage}
\begin{minipage}{0.35\textwidth}
\begin{tikzpicture}[node distance = .35cm, font=\Huge]
\tikzstyle{every node}=[scale=0.35]
\node[state] (E) at (1,1) {};
\node[state] (F) at (2,1) {};
\node[state] (G) at (3,1) {};
\node[state] (H) at (4,1) {};
\node[state] (A) at (1,2) {};
\node[state] (B) at (2,2) {};
\node[state] (C) at (3,2) {};
\node[state] (D) at (4,2) {};
\node[thick] at (0,1) {$R$};
\node[thick] at (0,2) {$P$};
\node[thick] at (1.5,3) {$I$};
\node[thick] at (3.5,3) {$E$};
\draw[dotted, thick] (2.5,0.5) -- (2.5,2.5);
\draw[every loop]
(D) edge[bend right] node [scale=1.2, above=0.1 of C] {} (C)
(C) edge[bend right] node [scale=1.2, above=0.1 of C] {$~~~~~~u_1\frac{s-1}{s}$} (B);

\end{tikzpicture}
\end{minipage}

\begin{minipage}{0.45\textwidth}
Condition: 
$x_{1}=1$, $x_{2}=1$, $y=0$, $\mathrm{TA}^3_{1}$=I.\\
Therefore, Type II, $\neg x_{1} = 0$, \\$C_3= x_{1} \wedge x_{2}=1$. 
\end{minipage}
\begin{minipage}{0.35\textwidth}
\begin{tikzpicture}[node distance = .35cm, font=\Huge]
\tikzstyle{every node}=[scale=0.35]
\node[state] (E) at (1,1) {};
\node[state] (F) at (2,1) {};
\node[state] (G) at (3,1) {};
\node[state] (H) at (4,1) {};
\node[state] (A) at (1,2) {};
\node[state] (B) at (2,2) {};
\node[state] (C) at (3,2) {};
\node[state] (D) at (4,2) {};
\node[thick] at (0,1) {$R$};
\node[thick] at (0,2) {$P$};
\node[thick] at (1.5,3) {$I$};
\node[thick] at (3.5,3) {$E$};
\draw[dotted, thick] (2.5,0.5) -- (2.5,2.5);
\draw[every loop]
(D) edge[bend right] node [scale=1.2, above=0.1 of C] {} (C)
(C) edge[bend right] node [scale=1.2, above=0.1 of C] {$u_2\times1$} (B);

\end{tikzpicture}
\end{minipage}

\begin{minipage}{0.45\textwidth}
Condition: 
$x_{1}=0$, $x_{2}=1$, $y=1$, $\mathrm{TA}^3_{1}$=I.\\
Therefore, Type I, $\neg x_{1} = 1$, \\$C_3= x_{1} \wedge x_{2}=0$. 
\end{minipage}
\begin{minipage}{0.35\textwidth}
\begin{tikzpicture}[node distance = .35cm, font=\Huge]
\tikzstyle{every node}=[scale=0.35]
\node[state] (E) at (1,1) {};
\node[state] (F) at (2,1) {};
\node[state] (G) at (3,1) {};
\node[state] (H) at (4,1) {};
\node[state] (A) at (1,2) {};
\node[state] (B) at (2,2) {};
\node[state] (C) at (3,2) {};
\node[state] (D) at (4,2) {};
\node[thick] at (0,1) {$R$};
\node[thick] at (0,2) {$P$};
\node[thick] at (1.5,3) {$I$};
\node[thick] at (3.5,3) {$E$};
\draw[dotted, thick] (2.5,0.5) -- (2.5,2.5);
\draw[every loop]
(G) edge[bend left] node [scale=1.2, above=0.1 of C] {} (H)
(H) edge[loop right] node [scale=1.2, below=0.1 of H] {$u_1\frac{1}{s}$} (H);

\end{tikzpicture}
\end{minipage}

Clearly $\mathrm{TA}^3_{1}$ will only move to ``Exclude". 
In this situation $\mathrm{TA}^3_{2}$ will become ``Include".


\newpage
\noindent {\bf Case 3} \\
Here $\mathrm{TA}^3_3$ is frozen as ``Exclude" and $\mathrm{TA}^3_4$ is as ``Exclude". 

{\bf We now study $\mathrm{TA}^3_{1}$ with action ``Include''.}

\begin{minipage}{0.45\textwidth}
Condition: $x_{1}=0$, $x_{2}=1$, $y=1$, $\mathrm{TA}^3_{2}$=E. \\
Therefore, Type I, $x_1=0$, $C_{3}= x_{1}=0$. 
\end{minipage}
\begin{minipage}{0.35\textwidth}
\begin{tikzpicture}[node distance = .35cm, font=\Huge]
\tikzstyle{every node}=[scale=0.35]
\node[state] (E) at (1,1) {};
\node[state] (F) at (2,1) {};
\node[state] (G) at (3,1) {};
\node[state] (H) at (4,1) {};
\node[state] (A) at (1,2) {};
\node[state] (B) at (2,2) {};
\node[state] (C) at (3,2) {};
\node[state] (D) at (4,2) {};
\node[thick] at (0,1) {$R$};
\node[thick] at (0,2) {$P$};
\node[thick] at (1.5,3) {$I$};
\node[thick] at (3.5,3) {$E$};
\draw[dotted, thick] (2.5,0.5) -- (2.5,2.5);
\draw[every loop]
(A) edge[bend left] node [scale=1.2, above=0.1 of C] {} (B)
(B) edge[bend left] node [scale=1.2, above=0.1 of C] {$~~~~~u_1\frac{1}{s}$} (C);
\end{tikzpicture}
\end{minipage}

\begin{minipage}{0.45\textwidth}
Condition: $x_{1}=0$, $x_{2}=1$, $y=1$, $\mathrm{TA}^3_{2}$=I. \\
Therefore, Type I, $x_{1} = 0$, \\$C_{3}= x_{1} \wedge \neg x_{1}=0$. 
\end{minipage}
\begin{minipage}{0.35\textwidth}
\begin{tikzpicture}[node distance = .35cm, font=\Huge]
\tikzstyle{every node}=[scale=0.35]
\node[state] (E) at (1,1) {};
\node[state] (F) at (2,1) {};
\node[state] (G) at (3,1) {};
\node[state] (H) at (4,1) {};
\node[state] (A) at (1,2) {};
\node[state] (B) at (2,2) {};
\node[state] (C) at (3,2) {};
\node[state] (D) at (4,2) {};
\node[thick] at (0,1) {$R$};
\node[thick] at (0,2) {$P$};
\node[thick] at (1.5,3) {$I$};
\node[thick] at (3.5,3) {$E$};
\draw[dotted, thick] (2.5,0.5) -- (2.5,2.5);
\draw[every loop]
(A) edge[bend left] node [scale=1.2, above=0.1 of C] {} (B)
(B) edge[bend left] node [scale=1.2, above=0.1 of C] {$~~~~~u_1\frac{1}{s}$} (C);

\end{tikzpicture}
\end{minipage}

{\bf We now study $\mathrm{TA}^3_{1}$ with action ``Exclude''.}



\begin{minipage}{0.45\textwidth}
Condition: $x_{1}=0$, $x_{2}=1$, $y=1$, $\mathrm{TA}^3_{2}$=E. \\
Therefore, Type I, $x_{1} = 0$, $C_{3}=1$. 
\end{minipage}
\begin{minipage}{0.35\textwidth}
\begin{tikzpicture}[node distance = .35cm, font=\Huge]
\tikzstyle{every node}=[scale=0.35]
\node[state] (E) at (1,1) {};
\node[state] (F) at (2,1) {};
\node[state] (G) at (3,1) {};
\node[state] (H) at (4,1) {};
\node[state] (A) at (1,2) {};
\node[state] (B) at (2,2) {};
\node[state] (C) at (3,2) {};
\node[state] (D) at (4,2) {};
\node[thick] at (0,1) {$R$};
\node[thick] at (0,2) {$P$};
\node[thick] at (1.5,3) {$I$};
\node[thick] at (3.5,3) {$E$};
\draw[dotted, thick] (2.5,0.5) -- (2.5,2.5);
\draw[every loop]
(G) edge[bend left] node [scale=1.2, above=0.1 of C] {} (H)
(H) edge[loop right] node [scale=1.2, below=0.1 of H] {$~~~~u_1\frac{1}{s}$} (H);

\end{tikzpicture}
\end{minipage}

\begin{minipage}{0.45\textwidth}
Condition: $x_{1}=0$, $y=0$, $x_{2}=0$, $\mathrm{TA}^3_{2}$=I. \\
Therefore, Type II, $x_{1} = 0$, $C_{3}=1$ 
\end{minipage}
\begin{minipage}{0.35\textwidth}
\begin{tikzpicture}[node distance = .35cm, font=\Huge]
\tikzstyle{every node}=[scale=0.35]
\node[state] (E) at (1,1) {};
\node[state] (F) at (2,1) {};
\node[state] (G) at (3,1) {};
\node[state] (H) at (4,1) {};
\node[state] (A) at (1,2) {};
\node[state] (B) at (2,2) {};
\node[state] (C) at (3,2) {};
\node[state] (D) at (4,2) {};
\node[thick] at (0,1) {$R$};
\node[thick] at (0,2) {$P$};
\node[thick] at (1.5,3) {$I$};
\node[thick] at (3.5,3) {$E$};
\draw[dotted, thick] (2.5,0.5) -- (2.5,2.5);
\draw[every loop]
(D) edge[bend right] node [scale=1.2, above=0.1 of C] {} (C)
(C) edge[bend right] node [scale=1.2, above=0.1 of B] {$u_2\times1$} (B);

\end{tikzpicture}
\end{minipage}

{\bf We study $\mathrm{TA}_{3,1}$ with action ``Exclude".}


\begin{minipage}{0.45\textwidth}
Condition; $x_{1}=0$, $x_{2}=1$, $y=1$, $\mathrm{TA}^3_{2}$=I. \\
Therefore, Type I, $x_{1} = 0$, $C_{3}=\neg x_{1}=1$. 
\end{minipage}
\begin{minipage}{0.35\textwidth}
\begin{tikzpicture}[node distance = .35cm, font=\Huge]
\tikzstyle{every node}=[scale=0.35]
\node[state] (E) at (1,1) {};
\node[state] (F) at (2,1) {};
\node[state] (G) at (3,1) {};
\node[state] (H) at (4,1) {};
\node[state] (A) at (1,2) {};
\node[state] (B) at (2,2) {};
\node[state] (C) at (3,2) {};
\node[state] (D) at (4,2) {};
\node[thick] at (0,1) {$R$};
\node[thick] at (0,2) {$P$};
\node[thick] at (1.5,3) {$I$};
\node[thick] at (3.5,3) {$E$};
\draw[dotted, thick] (2.5,0.5) -- (2.5,2.5);
\draw[every loop]
(G) edge[bend left] node [scale=1.2, above=0.1 of C] {} (H)
(H) edge[loop right] node [scale=1.2, below=0.1 of H] {$u_1\frac{1}{s}$} (H);

\end{tikzpicture}
\end{minipage}

\begin{minipage}{0.45\textwidth}
Condition: $x_{1}=0$, $x_{2}=0$, $y=1$, $\mathrm{TA}^3_{2}$=I. \\
Therefore, Type II, $x_{1} = 0$, $C_{3}=\neg x_{1}=1$. 
\end{minipage}
\begin{minipage}{0.35\textwidth}
\begin{tikzpicture}[node distance = .35cm, font=\Huge]
\tikzstyle{every node}=[scale=0.35]
\node[state] (E) at (1,1) {};
\node[state] (F) at (2,1) {};
\node[state] (G) at (3,1) {};
\node[state] (H) at (4,1) {};
\node[state] (A) at (1,2) {};
\node[state] (B) at (2,2) {};
\node[state] (C) at (3,2) {};
\node[state] (D) at (4,2) {};
\node[thick] at (0,1) {$R$};
\node[thick] at (0,2) {$P$};
\node[thick] at (1.5,3) {$I$};
\node[thick] at (3.5,3) {$E$};
\draw[dotted, thick] (2.5,0.5) -- (2.5,2.5);
\draw[every loop]
(D) edge[bend right] node [scale=1.2, above=0.1 of C] {} (C)
(C) edge[bend right] node [scale=1.2, above=0.1 of B] {$u_2\times1$} (B);

\end{tikzpicture}
\end{minipage}


{\bf We now study $\mathrm{TA}^3_{2}$ with action ``Include''.}


\begin{minipage}{0.45\textwidth}
Condition: $x_{1}=0$, $x_{2}=1$, $y=1$, $\mathrm{TA}^3_{1}$=E. \\
Therefore, Type I, $\neg x_{1} = 1$, $C_3=\neg x_{1}=1$.
\end{minipage}
\begin{minipage}{0.35\textwidth}
\begin{tikzpicture}[node distance = .35cm, font=\Huge]
\tikzstyle{every node}=[scale=0.35]
\node[state] (E) at (1,1) {};
\node[state] (F) at (2,1) {};
\node[state] (G) at (3,1) {};
\node[state] (H) at (4,1) {};
\node[state] (A) at (1,2) {};
\node[state] (B) at (2,2) {};
\node[state] (C) at (3,2) {};
\node[state] (D) at (4,2) {};
\node[thick] at (0,1) {$R$};
\node[thick] at (0,2) {$P$};
\node[thick] at (1.5,3) {$I$};
\node[thick] at (3.5,3) {$E$};
\draw[dotted, thick] (2.5,0.5) -- (2.5,2.5);
\draw[every loop]
(F) edge[bend right] node [scale=1.2, above=0.1 of E] {$u_1\frac{s-1}{s}$} (E)
(E) edge[loop left] node [scale=1.2, above=0.1 of E] {} (E);

\end{tikzpicture}
\end{minipage}

\begin{minipage}{0.45\textwidth}
Condition: $x_{1}=0$, $x_{2}=1$, $y=1$, $\mathrm{TA}^3_{1}$=I. \\
Therefore, Type I, $\neg x_{1} = 1$, $C_3=0$. \\
\end{minipage}
\begin{minipage}{0.35\textwidth}
\begin{tikzpicture}[node distance = .35cm, font=\Huge]
\tikzstyle{every node}=[scale=0.35]
\node[state] (E) at (1,1) {};
\node[state] (F) at (2,1) {};
\node[state] (G) at (3,1) {};
\node[state] (H) at (4,1) {};
\node[state] (A) at (1,2) {};
\node[state] (B) at (2,2) {};
\node[state] (C) at (3,2) {};
\node[state] (D) at (4,2) {};
\node[thick] at (0,1) {$R$};
\node[thick] at (0,2) {$P$};
\node[thick] at (1.5,3) {$I$};
\node[thick] at (3.5,3) {$E$};
\draw[dotted, thick] (2.5,0.5) -- (2.5,2.5);
\draw[every loop]
(A) edge[bend left] node [scale=1.2, above=0.1 of C] {} (B)
(B) edge[bend left] node [scale=1.2, above=0.1 of C] {~~$u_1\frac{1}{s}$} (C);

\end{tikzpicture}
\end{minipage}

\pagebreak

{\bf We now study $\mathrm{TA}^3_{2}$ with action ``Exclude''.}



\begin{minipage}{0.45\textwidth}
Condition: $x_{1}=1$, $x_{2}=1$, $y=0$, $\mathrm{TA}^3_{1}$=E.\\
Therefore, Type II, $\neg x_{1} = 0$, $C_3=x_{1}=1$. 
\end{minipage}
\begin{minipage}{0.35\textwidth}
\begin{tikzpicture}[node distance = .35cm, font=\Huge]
\tikzstyle{every node}=[scale=0.35]
\node[state] (E) at (1,1) {};
\node[state] (F) at (2,1) {};
\node[state] (G) at (3,1) {};
\node[state] (H) at (4,1) {};
\node[state] (A) at (1,2) {};
\node[state] (B) at (2,2) {};
\node[state] (C) at (3,2) {};
\node[state] (D) at (4,2) {};
\node[thick] at (0,1) {$R$};
\node[thick] at (0,2) {$P$};
\node[thick] at (1.5,3) {$I$};
\node[thick] at (3.5,3) {$E$};
\draw[dotted, thick] (2.5,0.5) -- (2.5,2.5);
\draw[every loop]
(D) edge[bend right] node [scale=1.2, above=0.1 of C] {$u_2\times1$} (C)
(C) edge[bend right] node [scale=1.2, above=0.1 of C] {} (B);

\end{tikzpicture}
\end{minipage}

\begin{minipage}{0.45\textwidth}
Condition: $x_{1}=0$, $x_{2}=1$, $y=1$, $\mathrm{TA}^3_{1}$=E. \\
Therefore, Type I, $\neg x_{1} = 0$, $C_3=x_{1}=1$.
\end{minipage}
\begin{minipage}{0.35\textwidth}
\begin{tikzpicture}[node distance = .35cm, font=\Huge]
\tikzstyle{every node}=[scale=0.35]
\node[state] (E) at (1,1) {};
\node[state] (F) at (2,1) {};
\node[state] (G) at (3,1) {};
\node[state] (H) at (4,1) {};
\node[state] (A) at (1,2) {};
\node[state] (B) at (2,2) {};
\node[state] (C) at (3,2) {};
\node[state] (D) at (4,2) {};
\node[thick] at (0,1) {$R$};
\node[thick] at (0,2) {$P$};
\node[thick] at (1.5,3) {$I$};
\node[thick] at (3.5,3) {$E$};
\draw[dotted, thick] (2.5,0.5) -- (2.5,2.5);
\draw[every loop]
(G) edge[bend left] node [scale=1.2, above=0.1 of C] {} (H)
(H) edge[loop right] node [scale=1.2, below=0.1 of H] {$u_1\frac{1}{s}$} (H);

\end{tikzpicture}
\end{minipage}

\begin{minipage}{0.45\textwidth}
Condition: $x_{1}=1$, $x_{2}=1$, $y=0$, $\mathrm{TA}^3_{1}$=I. \\
Therefore, Type II, $\neg x_{1} = 0$, $C_3=x_{1}=1$. 
\end{minipage}
\begin{minipage}{0.35\textwidth}
\begin{tikzpicture}[node distance = .35cm, font=\Huge]
\tikzstyle{every node}=[scale=0.35]
\node[state] (E) at (1,1) {};
\node[state] (F) at (2,1) {};
\node[state] (G) at (3,1) {};
\node[state] (H) at (4,1) {};
\node[state] (A) at (1,2) {};
\node[state] (B) at (2,2) {};
\node[state] (C) at (3,2) {};
\node[state] (D) at (4,2) {};
\node[thick] at (0,1) {$R$};
\node[thick] at (0,2) {$P$};
\node[thick] at (1.5,3) {$I$};
\node[thick] at (3.5,3) {$E$};
\draw[dotted, thick] (2.5,0.5) -- (2.5,2.5);
\draw[every loop]
(D) edge[bend right] node [scale=1.2, above=0.1 of C] {$u_2\times1$} (C)
(C) edge[bend right] node [scale=1.2, above=0.1 of C] {} (B);

\end{tikzpicture}
\end{minipage}

\begin{minipage}{0.45\textwidth}
Condition: $x_{1}=0$, $x_{2}=1$, $y=1$, $\mathrm{TA}^3_{1}$=I. \\
Therefore, Type I, $\neg x_{1} = 0$, $C_3=x_{1}=1$.
\end{minipage}
\begin{minipage}{0.35\textwidth}
\begin{tikzpicture}[node distance = .35cm, font=\Huge]
\tikzstyle{every node}=[scale=0.35]
\node[state] (E) at (1,1) {};
\node[state] (F) at (2,1) {};
\node[state] (G) at (3,1) {};
\node[state] (H) at (4,1) {};
\node[state] (A) at (1,2) {};
\node[state] (B) at (2,2) {};
\node[state] (C) at (3,2) {};
\node[state] (D) at (4,2) {};
\node[thick] at (0,1) {$R$};
\node[thick] at (0,2) {$P$};
\node[thick] at (1.5,3) {$I$};
\node[thick] at (3.5,3) {$E$};
\draw[dotted, thick] (2.5,0.5) -- (2.5,2.5);
\draw[every loop]
(G) edge[bend left] node [scale=1.2, above=0.1 of C] {} (H)
(H) edge[loop right] node [scale=1.2, below=0.1 of H] {$u_1\frac{1}{s}$} (H);

\end{tikzpicture}
\end{minipage}


\noindent {\bf Case 4} \\
Here $\mathrm{TA}^3_3$ is frozen as ``Include" and $\mathrm{TA}^3_4$ is as ``Include". In this situation, the output of $\mathrm{TA}^3_3$ and $\mathrm{TA}^3_4$ is 0.

{\bf We now study $\mathrm{TA}^3_{1}$ with action ``Include''.}

\begin{minipage}{0.45\textwidth}
Condition: $x_{1}=0$, $x_{2}=1$, $y=1$, $\mathrm{TA}^3_{2}$=E. \\
Therefore, Type I, $x_{1} = 0$, $C_{3}=0$. 
\end{minipage}
\begin{minipage}{0.35\textwidth}
\begin{tikzpicture}[node distance = .35cm, font=\Huge]
\tikzstyle{every node}=[scale=0.35]
\node[state] (E) at (1,1) {};
\node[state] (F) at (2,1) {};
\node[state] (G) at (3,1) {};
\node[state] (H) at (4,1) {};
\node[state] (A) at (1,2) {};
\node[state] (B) at (2,2) {};
\node[state] (C) at (3,2) {};
\node[state] (D) at (4,2) {};
\node[thick] at (0,1) {$R$};
\node[thick] at (0,2) {$P$};
\node[thick] at (1.5,3) {$I$};
\node[thick] at (3.5,3) {$E$};
\draw[dotted, thick] (2.5,0.5) -- (2.5,2.5);
\draw[every loop]
(A) edge[bend left] node [scale=1.2, above=0.1 of C] {} (B)
(B) edge[bend left] node [scale=1.2, above=0.1 of C] {$~~~~~u_1\frac{1}{s}$} (C);

\end{tikzpicture}
\end{minipage}

\begin{minipage}{0.45\textwidth}
Condition: $x_{1}=0$, $x_{2}=1$, $y=1$, $\mathrm{TA}^3_{2}$=I. \\
Therefore, Type I, $x_{1} = 0$, $C_{3}=0$
\end{minipage}
\begin{minipage}{0.35\textwidth}
\begin{tikzpicture}[node distance = .35cm, font=\Huge]
\tikzstyle{every node}=[scale=0.35]
\node[state] (E) at (1,1) {};
\node[state] (F) at (2,1) {};
\node[state] (G) at (3,1) {};
\node[state] (H) at (4,1) {};
\node[state] (A) at (1,2) {};
\node[state] (B) at (2,2) {};
\node[state] (C) at (3,2) {};
\node[state] (D) at (4,2) {};
\node[thick] at (0,1) {$R$};
\node[thick] at (0,2) {$P$};
\node[thick] at (1.5,3) {$I$};
\node[thick] at (3.5,3) {$E$};
\draw[dotted, thick] (2.5,0.5) -- (2.5,2.5);
\draw[every loop]
(A) edge[bend left] node [scale=1.2, above=0.1 of C] {} (B)
(B) edge[bend left] node [scale=1.2, above=0.1 of C] {$~~~~~u_1\frac{1}{s}$} (C);

\end{tikzpicture}
\end{minipage}


{\bf We now study $\mathrm{TA}^3_{1}$ with action ``Exclude''.}

\begin{minipage}{0.45\textwidth}
Condition: $x_{1}=0$, $x_{2}=1$, $y=1$, $\mathrm{TA}^3_{2}$=E. \\
Therefore, Type I, $x_{1}=0$, $C_{3}=0$. 
\end{minipage}
\begin{minipage}{0.35\textwidth}
\begin{tikzpicture}[node distance = .35cm, font=\Huge]
\tikzstyle{every node}=[scale=0.35]
\node[state] (E) at (1,1) {};
\node[state] (F) at (2,1) {};
\node[state] (G) at (3,1) {};
\node[state] (H) at (4,1) {};
\node[state] (A) at (1,2) {};
\node[state] (B) at (2,2) {};
\node[state] (C) at (3,2) {};
\node[state] (D) at (4,2) {};
\node[thick] at (0,1) {$R$};
\node[thick] at (0,2) {$P$};
\node[thick] at (1.5,3) {$I$};
\node[thick] at (3.5,3) {$E$};
\draw[dotted, thick] (2.5,0.5) -- (2.5,2.5);
\draw[every loop]
(G) edge[bend left] node [scale=1.2, above=0.1 of C] {} (H)
(H) edge[loop right] node [scale=1.2, below=0.1 of H] {$u_1\frac{1}{s}$} (H);

\end{tikzpicture}
\end{minipage}

\begin{minipage}{0.45\textwidth}
Condition: $x_{1}=0$, $x_{2}=1$, $y=0$, $\mathrm{TA}^3_{2}$=I. \\
Therefore, Type II, $x_{1}=0$, $C_{3}=0$. 
\end{minipage}
\begin{minipage}{0.35\textwidth}
\begin{tikzpicture}[node distance = .35cm, font=\Huge]
\tikzstyle{every node}=[scale=0.35]
\node[state] (E) at (1,1) {};
\node[state] (F) at (2,1) {};
\node[state] (G) at (3,1) {};
\node[state] (H) at (4,1) {};
\node[state] (A) at (1,2) {};
\node[state] (B) at (2,2) {};
\node[state] (C) at (3,2) {};
\node[state] (D) at (4,2) {};
\node[thick] at (0,1) {$R$};
\node[thick] at (0,2) {$P$};
\node[thick] at (1.5,3) {$I$};
\node[thick] at (3.5,3) {$E$};
\draw[dotted, thick] (2.5,0.5) -- (2.5,2.5);
\draw[every loop]
(G) edge[bend left] node [scale=1.2, above=0.1 of C] {} (H)
(H) edge[loop right] node [scale=1.2, above=0.1 of H] {$u_1\frac{1}{s}$} (H);

\end{tikzpicture}
\end{minipage}

\pagebreak

{\bf We now study $\mathrm{TA}^3_{2}$ with action ``Include''.}

\begin{minipage}{0.45\textwidth}
Condition: $x_{1}=0$, $x_{2}=1$, $y=1$, $\mathrm{TA}^3_{1}$=E. \\
Therefore, Type I, $\neg x_{1}=1$, $C_3=0$. 
\end{minipage}
\begin{minipage}{0.35\textwidth}
\begin{tikzpicture}[node distance = .35cm, font=\Huge]
\tikzstyle{every node}=[scale=0.35]
\node[state] (E) at (1,1) {};
\node[state] (F) at (2,1) {};
\node[state] (G) at (3,1) {};
\node[state] (H) at (4,1) {};
\node[state] (A) at (1,2) {};
\node[state] (B) at (2,2) {};
\node[state] (C) at (3,2) {};
\node[state] (D) at (4,2) {};
\node[thick] at (0,1) {$R$};
\node[thick] at (0,2) {$P$};
\node[thick] at (1.5,3) {$I$};
\node[thick] at (3.5,3) {$E$};
\draw[dotted, thick] (2.5,0.5) -- (2.5,2.5);
\draw[every loop]
(A) edge[bend left] node [scale=1.2, above=0.1 of C] {} (B)
(B) edge[bend left] node [scale=1.2, above=0.1 of C] {~~$u_1\frac{1}{s}$} (C);

\end{tikzpicture}
\end{minipage}

\begin{minipage}{0.45\textwidth}
Condition: $x_{1}=0$, $x_{2}=1$, $y=1$, $\mathrm{TA}^3_{1}$=I. \\
Therefore, Type I, $\neg x_{1}=1$, $C_3=0$.
\end{minipage}
\begin{minipage}{0.35\textwidth}
\begin{tikzpicture}[node distance = .35cm, font=\Huge]
\tikzstyle{every node}=[scale=0.35]
\node[state] (E) at (1,1) {};
\node[state] (F) at (2,1) {};
\node[state] (G) at (3,1) {};
\node[state] (H) at (4,1) {};
\node[state] (A) at (1,2) {};
\node[state] (B) at (2,2) {};
\node[state] (C) at (3,2) {};
\node[state] (D) at (4,2) {};
\node[thick] at (0,1) {$R$};
\node[thick] at (0,2) {$P$};
\node[thick] at (1.5,3) {$I$};
\node[thick] at (3.5,3) {$E$};
\draw[dotted, thick] (2.5,0.5) -- (2.5,2.5);
\draw[every loop]
(A) edge[bend left] node [scale=1.2, above=0.1 of C] {} (B)
(B) edge[bend left] node [scale=1.2, above=0.1 of C] {~~$u_1\frac{1}{s}$} (C);

\end{tikzpicture}
\end{minipage}


{\bf We now study $\mathrm{TA}^3_{2}$ with action ``Exclude''.}

\begin{minipage}{0.45\textwidth}
Condition: $x_{1}=0$, $x_{2}=1$, $y=1$, $\mathrm{TA}^3_{1}$=E. \\
Therefore, Type I, $\neg x_{1}=1$, $C_{3}=0$.
\end{minipage}
\begin{minipage}{0.35\textwidth}
\begin{tikzpicture}[node distance = .35cm, font=\Huge]
\tikzstyle{every node}=[scale=0.35]
\node[state] (E) at (1,1) {};
\node[state] (F) at (2,1) {};
\node[state] (G) at (3,1) {};
\node[state] (H) at (4,1) {};
\node[state] (A) at (1,2) {};
\node[state] (B) at (2,2) {};
\node[state] (C) at (3,2) {};
\node[state] (D) at (4,2) {};
\node[thick] at (0,1) {$R$};
\node[thick] at (0,2) {$P$};
\node[thick] at (1.5,3) {$I$};
\node[thick] at (3.5,3) {$E$};
\draw[dotted, thick] (2.5,0.5) -- (2.5,2.5);
\draw[every loop]
(G) edge[bend left] node [scale=1.2, above=0.1 of C] {} (H)
(H) edge[loop right] node [scale=1.2, below=0.1 of H] {$u_1\frac{1}{s}$} (H);

\end{tikzpicture}
\end{minipage}

\begin{minipage}{0.45\textwidth}
Condition: $x_{1}=0$, $x_{2}=1$, $y=1$, $\mathrm{TA}^3_{1}$=I. \\
Therefore, Type I, $\neg x_{1}=1$, $C_{3}=0$. 
\end{minipage}
\begin{minipage}{0.35\textwidth}
\begin{tikzpicture}[node distance = .35cm, font=\Huge]
\tikzstyle{every node}=[scale=0.35]
\node[state] (E) at (1,1) {};
\node[state] (F) at (2,1) {};
\node[state] (G) at (3,1) {};
\node[state] (H) at (4,1) {};
\node[state] (A) at (1,2) {};
\node[state] (B) at (2,2) {};
\node[state] (C) at (3,2) {};
\node[state] (D) at (4,2) {};
\node[thick] at (0,1) {$R$};
\node[thick] at (0,2) {$P$};
\node[thick] at (1.5,3) {$I$};
\node[thick] at (3.5,3) {$E$};
\draw[dotted, thick] (2.5,0.5) -- (2.5,2.5);
\draw[every loop]
(G) edge[bend left] node [scale=1.2, above=0.1 of C] {} (H)
(H) edge[loop right] node [scale=1.2, above=0.1 of H] {$u_1\frac{1}{s}$} (H);

\end{tikzpicture}
\end{minipage}


Based on the analysis performed above, we can show the directions of transitions for $\mathrm{TA}^3_1$ and $\mathrm{TA}^3_2$ given different configurations of $\mathrm{TA}^3_3$ and $\mathrm{TA}^3_4$.

\textbf{Scenario 1:} Study $\mathrm{TA}^3_1$ = I and $\mathrm{TA}^3_2$ = E.

\begin{minipage}{0.5\textwidth}
\textbf{Case 1:} we can see that \\
$\mathrm{TA}^3_1$ $\rightarrow$ E \\
$\mathrm{TA}^3_2$ $\rightarrow$ E 
\end{minipage}
\begin{minipage}{0.5\textwidth}
\textbf{Case 2:} we can see that \\
$\mathrm{TA}^3_1$ $\rightarrow$ E \\
$\mathrm{TA}^3_2$ $\rightarrow$ I, E 
\end{minipage}

\begin{minipage}{0.5\textwidth}
\textbf{Case 3:} we can see that \\
$\mathrm{TA}^3_1$ $\rightarrow$ E \\
$\mathrm{TA}^3_2$ $\rightarrow$ I
\end{minipage}
\begin{minipage}{0.5\textwidth}
\textbf{Case 4:} we can see that \\
$\mathrm{TA}^3_1$ $\rightarrow$ E \\
$\mathrm{TA}^3_2$ $\rightarrow$ E 
\end{minipage}

\vspace{.5cm}

\textbf{Scenario 2:} Study $\mathrm{TA}^3_1$ = I and $\mathrm{TA}^3_2$ = I.

\begin{minipage}{0.5\textwidth}
\textbf{Case 1:} we can see that \\
$\mathrm{TA}^3_1$ $\rightarrow$ E \\
$\mathrm{TA}^3_2$ $\rightarrow$ E 
\end{minipage}
\begin{minipage}{0.5\textwidth}
\textbf{Case 2:} we can see that \\
$\mathrm{TA}^3_1$ $\rightarrow$ E \\
$\mathrm{TA}^3_2$ $\rightarrow$ E 
\end{minipage}

\begin{minipage}{0.5\textwidth}
\textbf{Case 3:} we can see that \\
$\mathrm{TA}^3_1$ $\rightarrow$ E \\
$\mathrm{TA}^3_2$ $\rightarrow$ E 
\end{minipage}
\begin{minipage}{0.5\textwidth}
\textbf{Case 4:} we can see that \\
$\mathrm{TA}^3_1$ $\rightarrow$ E \\
$\mathrm{TA}^3_2$ $\rightarrow$ E 
\end{minipage}

\vspace{.5cm}

\textbf{Scenario 3:} Study $\mathrm{TA}^3_1$ = E and $\mathrm{TA}^3_2$ = I.

\begin{minipage}{0.5\textwidth}
\textbf{Case 1:} we can see that \\
$\mathrm{TA}^3_1$ $\rightarrow$ I, E \\
$\mathrm{TA}^3_2$ $\rightarrow$ E
\end{minipage}
\begin{minipage}{0.5\textwidth}
\textbf{Case 2:} we can see that \\
$\mathrm{TA}^3_1$ $\rightarrow$ E \\
$\mathrm{TA}^3_2$ $\rightarrow$ I
\end{minipage}

\begin{minipage}{0.5\textwidth}
\textbf{Case 3:} we can see that \\
$\mathrm{TA}^3_1$ $\rightarrow$ I \\
$\mathrm{TA}^3_2$ $\rightarrow$ I
\end{minipage}
\begin{minipage}{0.5\textwidth}
\textbf{Case 4:} we can see that \\
$\mathrm{TA}^3_1$ $\rightarrow$ E \\
$\mathrm{TA}^3_2$ $\rightarrow$ E 
\end{minipage}

\vspace{.5cm}

\textbf{Scenario 4:} Study $\mathrm{TA}^3_3$ = E and $\mathrm{TA}^3_4$ = E.

\begin{minipage}{0.5\textwidth}
\textbf{Case 1:} we can see that \\
$\mathrm{TA}^3_1$ $\rightarrow$ I, E \\
$\mathrm{TA}^3_2$ $\rightarrow$ E
\end{minipage}
\begin{minipage}{0.5\textwidth}
\textbf{Case 2:} we can see that \\
$\mathrm{TA}^3_1$ $\rightarrow$ E\\
$\mathrm{TA}^3_2$ $\rightarrow$ I 
\end{minipage}

\begin{minipage}{0.5\textwidth}
\textbf{Case 3:} we can see that \\
$\mathrm{TA}^3_1$ $\rightarrow$ E\\
$\mathrm{TA}^3_2$ $\rightarrow$ I, E 
\end{minipage}
\begin{minipage}{0.5\textwidth}
\textbf{Case 4:} we can see that \\
$\mathrm{TA}^3_1$ $\rightarrow$ E \\
$\mathrm{TA}^3_2$ $\rightarrow$ E 
\end{minipage}

Clearly, from the above transitions, we can conclude that state $\mathrm{TA}^3_1$=E and $\mathrm{TA}^3_2$=I is absorbing when the state $\mathrm{TA}^3_3$=I and $\mathrm{TA}^3_4$=E are frozen. Similarly, state $\mathrm{TA}^3_1$=E and $\mathrm{TA}^3_2$=E is also absorbing when $\mathrm{TA}^3_3$ and $\mathrm{TA}^3_4$ are both frozen as Include. The other states are not absorbing. 


\end{document}